\def \be {\begin{eqnarray}}
\def \en {\end{eqnarray}}
\def \bega {\begin{align}}
\def \enda {\end{align}}
\def \a {\alpha}
\def \d {\delta}
\def \e {\epsilon}
\def \hy {\hat{y}}
\def \Tt {\mathtt{T}}
\def \rt {\mathtt{r}}
\def \tr {\nonumber \\}
\def \hTc {\widehat{\Tc}}
\def \hSc {\widehat{\Sc}}
\def \Hf {\Gamma}
\def \Rf {\mathfrak{R}}
\def \noin {\noindent}
\def \one {\mathds{1}}
\newcommand{\argmin}{\mathop{\mathrm{argmin}}}
\newcommand{\argmax}{\mathop{\mathrm{argmax}}}
\newcommand{\conv}{\mathop{\mathrm{conv}}}
\newcommand{\id}{\mathop{\mathrm{id}}}
\newcommand{\TwoTransferReal}{\Tt_\Hf^\rt(\Sc, \Tc)}
\newcommand{\TV}{d_{\rm TV}(\Sc, \Tc)}
\newtheorem{thm}{Theorem}
\newtheorem{definition}[thm]{Definition}
\newtheorem{prop}[thm]{Proposition}
\newtheorem{eg}[thm]{Example}
\newtheorem{lem}[thm]{Lemma}
\newtheorem{cor}[thm]{Corollary}
\title{Quantifying and Improving Transferability in Domain Generalization}
\author{%
  Guojun Zhang 
  \\School of Computer Science\\
  University of Waterloo\\
  Vector Institute \\
  \texttt{guojun.zhang@uwaterloo.ca} 
   \And
   Han Zhao \\
   Department of Computer Science \\
   University of Illinois at Urbana-Champaign \\
   \texttt{hanzhao@illinois.edu} 
  \AND
  Yaoliang Yu
  \\School of Computer Science\\
  University of Waterloo\\
  Vector Institute \\
  \texttt{yaoliang.yu@uwaterloo.ca} 
  \And
  Pascal Poupart
  \\School of Computer Science\\
  University of Waterloo\\
  Vector Institute \\
  \texttt{ppoupart@uwaterloo.ca} 
}
\begin{document}

\maketitle

\begin{abstract}

Out-of-distribution generalization is one of the key challenges when transferring a model from the lab to the real world.  Existing efforts mostly focus on building invariant features among source and target domains. Based on invariant features, a high-performing classifier on source domains could hopefully behave equally well on a target domain. In other words, we hope the invariant features to be \emph{transferable}. However, in practice, there are no perfectly transferable features, and some algorithms seem to learn ``more transferable'' features than others. How can we understand and quantify such \emph{transferability}? In this paper, we formally define transferability that one can quantify and compute in domain generalization. We point out the difference and connection with common discrepancy measures between domains, such as total variation and Wasserstein distance. We then prove that our transferability can be estimated with enough samples and give a new upper bound for the target error based on our transferability. Empirically, we evaluate the transferability of the feature embeddings learned by existing algorithms for domain generalization. Surprisingly, we find that many algorithms are not quite learning transferable features, although few could still survive. In light of this, we propose a new algorithm for learning transferable features and test it over various benchmark datasets, including RotatedMNIST, PACS, Office-Home and WILDS-FMoW. Experimental results show that the proposed algorithm achieves consistent improvement over many state-of-the-art algorithms, corroborating our theoretical findings.\footnote{Code available at \url{https://github.com/Gordon-Guojun-Zhang/Transferability-NeurIPS2021}.}
\end{abstract}

\section{Introduction}
One of the cornerstone assumptions underlying the recent success of deep learning models is that the test data should share the same distribution as the training data. However, faced with ubiquitous distribution shifts in various real-world applications, such assumption hardly holds in practice. For example, a self-driving recognition system trained using data collected in the daytime may continually degrade its performance during nightfall. The system may also encounter weather or traffic conditions in a new city that never appear in the training set. In light of these potentially unseen scenarios, it is of paramount importance that the trained model can generalize \emph{Out-Of-Distribution} (OOD): even if the target domain is not exactly the same as the source domain(s), the learned model should hopefully behave robustly under slight distribution shift. 

To this end, one line of works focuses on learning the so-called \emph{invariant representations}~\citep{ganin2015unsupervised,zhao2018adversarial,zhang2019bridging,albuquerque2019generalizing}. At a colloquial level, the goal here is to learn feature embeddings that lead to indistinguishable feature distributions from different domains. In practice, both the feature embeddings and the domain discriminators are often parametrized by neural networks, leading to an adversarial game between these two. Furthermore, in order to avoid degenerate solutions, the learned features are required to be informative about the output variable as well. This is enforced by placing a predictor over the features and minimize the corresponding supervised loss simultaneously~\citep{long2017deep,tzeng2017adversarial,ganin2016domain,tachet2020domain}.

Another line of recent works aims to learn features that can induce \emph{invariant predictors}, first termed as the invariant risk minimization (IRM)~\citep{peters2016causal,arjovsky2019invariant} paradigm. Roughly speaking, the goal of IRM is to discover a feature embedding, upon which the optimal predictors, i.e., the Bayes predictor, are invariant across the training domains. Again, at the same time, the features should be informative about the output variable as well. However, the optimization problem of IRM is rather difficult, and several follow-up works have proposed different relaxations to the original formulation~\citep{ahuja2020invariant,krueger2020out}.

Despite being extensively studied, both theoretical~\citep{zhao2019learning,rosenfeld2020risks} and empirical~\citep{koh2020wilds,gulrajani2020search} works have shown the insufficiency of existing algorithms for domain generalization (DG). Methods based on invariant features ignore the potential shift in the marginal label distributions across domains~\citep{zhao2019learning} and the methods based on invariant predictors are not robust to covariate shift~\citep{krueger2020out}. Perhaps surprisingly, empirical works have shown that with proper data augmentation and careful model tuning, the very basic algorithm of empirical risk minimization (ERM) demonstrates superior performance on domain generalization over existing methods on benchmark image datasets~\citep{gulrajani2020search,koh2020wilds}. This sharp gap between theory and practice calls for a fundamental understanding of the following question:
\begin{quote}
\itshape
What kind of invariance should we look for, in order to ensure that a good model on source domains also achieves decent accuracy on a related target domain? 
\end{quote}
In this work we attempt to answer the above question by proposing a criterion for models to look at, dubbed as \emph{transferability}, which asks for an invariance of the \emph{excess risks} of a predictor across domains. Different from existing proposals of invariant features and invariant predictors, which seek to find feature embeddings that respectively induce invariant marginal and conditional distributions, our notion of transferability depends on the excess risk, hence it directly takes into account the joint distribution over both the features and the labels. We show how it can be used to naturally derive a new upper bound for the target error, and then we discuss how to estimate the transferability empirically with enough samples. Our definition also inspires a method that aims to find more transferable features via representation learning using adversarial training. 

\begin{wrapfigure}[12]{r}{0.27\textwidth}
\vspace*{-1.8em}
\begin{center}
\includegraphics[width=\linewidth]{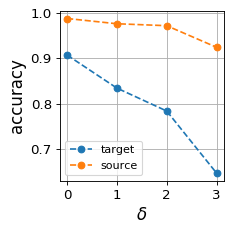}
\end{center}
\vspace{-5mm}
\caption{The target and source (test) accuracies of ERM on MNIST.}
\label{fig:demo}
\end{wrapfigure}
Empirically, we perform experiments to measure the transferability of several existing algorithms, on both small and large scale datasets. We show that many algorithms, including ERM, are not quite transferable under the definition (Fig.~\ref{fig:demo}, see more details in \S\ref{sec:exp}): when we go away from the optimal classifier (with distance $\delta$ in the parameter space), it could happen that the source accuracy remains high but the target accuracy drops significantly. This implies that during the training process, an existing algorithm may find a good source classifier with low target accuracy, hence violating the requirement for invariance of excess risks. In contrast, our algorithm is more transferable, and achieves consistent improvement over existing state-of-the-art algorithms, corroborating our findings.




\section{What is Transferability?}\label{sec:def_of_transfer}
In this section we present our definition of transferability in the classification setting. The setup of domain generalization is the following: 

\paragraph{Settings and Notation}  Given $n$ labeled source domains $\Sc_1, \dots, \Sc_n$, the problem of domain generalization is to learn a model from these source domains, in the hope that it performs well on an unseen target domain $\Tc$ that is ``similar'' to the source domains. Throughout the paper, we assume that both the source domains and the unseen target domain share the same input and output spaces, denoted as $\Xc$ and $\Yc$, respectively. For multi-class classification, the output space $\Yc = [K]$ is a set of labels for multi-class classification. For binary classification, we consider $\Yc = \{-1, +1\}$. Denote $\Hc$ as the hypothesis class. We define the classification error of a classifier $h \in \Hc$ on a domain $\Dc$ (or $\Sc$ for source domains, or $\Tc$ for target domains) as:\footnote{Throughout the paper, we will use the terms domain and distribution interchangeably.}
\begin{align}\label{eq:def_loss}
\epsilon_{\Dc}(h) = \Eb_{(x, y)\sim \Dc} [\ell(h(x), y)].
\end{align}
For $\ell(h(x), y) = \one(h(x) \neq y)$, where $\one(\cdot)$ is the usual indicator function, we use $\epsilon^{\rm 0-1}_{\Dc}(h)$ to denote it is the 0-1 loss.

In domain generalization, we often have several source domains. For the ease of presentation, we only consider a single source domain in this section, and later extend to the general case in Section~\ref{sec:exp}. Given two domains, the source domain $\Sc$ and the target domain $\Tc$, the task of domain generalization is to transfer a classifier $h$ that performs well on $\Sc$ to $\Tc$. We ask: how much of the success of $h$ on $\Sc$ can be transferred to $\Tc$? 

Note that in order to evaluate the transferability from $\Sc$ to $\Tc$, we need information from the target domain, similar to the test phase in traditional supervised learning. We believe a good criterion of transferability should satisfy the following properties:
\begin{enumerate}[topsep=0pt, parsep=0pt]
\item \emph{Quantifiable:} the notion should be quantifiable and can be computed in practice;
\item Any near-optimal source classifier should be near-optimal on the target domain. 
\item If the two domains are similar, as measured by e.g.,~total variation, then they are transferable to each other, but the converse may not be true. 
\end{enumerate}
At first glance the second criterion above might seem too strong and restrictive. However, we argue that in the task of domain generalization, we only have labeled source data and there is no clue to distinguish a classifier from another if both of them perform equally well on the source domain. Based on the second property, we first propose the following definition of transferability: 
\begin{definition}[\textbf{transferability}]\label{def:transfer}
$\Sc$ is $(\d_\Sc, \d_\Tc)_{\Hc}$-transferable to $\Tc$ if for $\d_\Sc > 0$, there exists $\d_\Tc > 0$ such that ${\argmin} (\e_\Sc, \d_\Sc)_\Hc \subseteq {}{\argmin} (\e_\Tc, \d_\Tc)_\Hc$, where:
\begin{align}
{\argmin} (\e_\Dc, \d_\Dc)_\Hc := \{h\in \Hc: \e_\Dc(h) \leq \inf_{h\in \Hc}\e_\Dc(h) + \d_\Dc\}. \nonumber
\end{align}
\end{definition}
\vspace{-0.5em}
In the literature the set ${\argmin}(\e_\Dc, \d_\Dc)_\Hc$ is also known as a \emph{$\d_\Dc$-minimal set} \citep{koltchinskii2010rademacher} of $\e_\Dc$, which represents the near-optimal set of classifiers. Note that the $\d$-minimal set depends on the hypothesis class $\Hc$. Throughout the paper, we omit the subscript $\Hc$ in the definition when there is no confusion. Def.~\ref{def:transfer} says that near-optimal source classifiers are also near-optimal target classifiers. Furthermore, it is easy to verify that our transferability is transitive: if $\Sc$ is $(\d_\Sc, \d_\Pc)$-transferable to $\Pc$, and $\Pc$ is $(\d_\Pc, \d_\Tc)$-transferable to $\Tc$, then $\Sc$ is $(\d_\Sc, \d_\Tc)$-transferable to $\Tc$. 

Next we define transfer measures, which we will show to be equivalent with Def.~\ref{def:transfer} in Prop.~\ref{prop:transfer_equiv}. 
\begin{definition}[\textbf{quantifiable transfer measures}]\label{def:transfer_measure}
Given some $\Hf \subseteq \Hc$, $\e_\Sc^* := \inf_{h\in \Hf} \e_\Sc(h)$ and $\e_\Tc^* := \inf_{h\in \Hf} \e_\Tc(h)$ we define the one-sided transfer measure, symmetric transfer measure and the realizable transfer measure respectively as:
\begin{align}\label{eq:one_sided_measure}
&\mathtt{T}_{\Hf}(\Sc\|\Tc) := \sup_{h\in \Hf}\e_\Tc(h) - \e_\Tc^* - (\e_\Sc(h) - \e_\Sc^*), \\
&\label{eq:symmetric_transfer}
\mathtt{T}_{\Hf}(\Sc, \Tc) := \max\{\Tt_\Hf(\Sc\|\Tc), \Tt_\Hf(\Tc\|\Sc)\} = \sup_{h\in \Hf}|\e_\Sc(h) - \e_\Sc^* - (\e_\Tc(h) - \e_\Tc^*)|, \\
&\label{eq:realizable_transfer}
\mathtt{T}^\rt_{\Hf}(\Sc, \Tc) := \sup_{h\in \Hf}|\e_\Sc(h) - \e_\Tc(h)|.
\end{align}
\end{definition}
\vspace{-0.5em}
\noin
The distinction between $\Hf$ and $\Hc$ will become apparent in Prop.~\ref{prop:transfer_equiv}.
Note that the one-sided transfer measure is not symmetric. If we want the two domains $\Sc$ and $\Tc$ to be mutually transferable to each other, we can use the symmetric transfer measure. We call both quantities as \emph{transfer measures}. Furthermore, the symmetric transfer measure reduces to \eqref{eq:realizable_transfer} in the realizable case when $\e_\Sc^* = \e_\Tc^* = 0$. In statistical learning theory, $\e_\Dc(h) - \e_\Dc^*$ is often known as an \emph{excess risk} \citep{koltchinskii2010rademacher}, which is the relative error compared to the optimal classifier. The transfer measures can thus be represented with the difference of excess risks. With Def.~\ref{def:transfer_measure}, we can immediately obtain the following result that upper bounds the target error:  
\begin{prop}[\textbf{target error bound}]\label{thm:target_bound}
Given $\Hf \subseteq \Hc$, for any $h\in \Hf$, the target error is bounded by:
\be\label{eq:target_error}
\e_\Tc(h) \leq \e_\Sc(h) + \e_{\Tc}^* - \e_\Sc^* + \Tt_{\Hf}(\Sc \| \Tc) \leq \e_\Sc(h) + \e_{\Tc}^* - \e_\Sc^* + \Tt_{\Hf}(\Sc, \Tc).
\en
\end{prop}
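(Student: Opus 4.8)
The plan is to read off both inequalities directly from Definition~\ref{def:transfer_measure}, since the proposition is essentially a rearrangement of the definition of the transfer measures. First I would establish the left inequality. Fixing the particular $h \in \Hf$ appearing in the statement, I observe that $h$ is one admissible competitor in the supremum defining the one-sided transfer measure, so
\[
\Tt_{\Hf}(\Sc\|\Tc) = \sup_{h'\in \Hf}\bigl[\e_\Tc(h') - \e_\Tc^* - (\e_\Sc(h') - \e_\Sc^*)\bigr] \;\geq\; \e_\Tc(h) - \e_\Tc^* - (\e_\Sc(h) - \e_\Sc^*).
\]
Rearranging this single inequality so that $\e_\Tc(h)$ is isolated on the left yields exactly $\e_\Tc(h) \leq \e_\Sc(h) + \e_\Tc^* - \e_\Sc^* + \Tt_{\Hf}(\Sc\|\Tc)$, which is the first bound.

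For the right inequality I would simply use that the symmetric transfer measure dominates the one-sided one: by \eqref{eq:symmetric_transfer}, $\Tt_{\Hf}(\Sc, \Tc) = \max\{\Tt_\Hf(\Sc\|\Tc), \Tt_\Hf(\Tc\|\Sc)\} \geq \Tt_\Hf(\Sc\|\Tc)$. Adding the common quantity $\e_\Sc(h) + \e_\Tc^* - \e_\Sc^*$ to both sides preserves the inequality and chains it onto the first bound, producing the two-step chain asserted in the statement.

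Since both steps are nothing more than unwinding the definitions, I do not anticipate a genuine obstacle. The only point I would verify carefully is bookkeeping: the optimal values $\e_\Sc^*$ and $\e_\Tc^*$ in the bound must be the infima taken over the same restricted class $\Hf$ (rather than over $\Hc$) that appears inside $\Tt_{\Hf}(\Sc\|\Tc)$, so that the $-\e_\Tc^*$ and $+\e_\Sc^*$ terms cancel exactly against those buried in the transfer measure. Once this consistency of $\Hf$ is confirmed, the rearrangement is exact and the proof goes through verbatim.
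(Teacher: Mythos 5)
Your proof is correct and matches the paper's intent exactly: the paper presents this proposition as an immediate consequence of Definition~\ref{def:transfer_measure} (evaluating the supremum at the given $h$ and rearranging, then using $\Tt_\Hf(\Sc,\Tc) \geq \Tt_\Hf(\Sc\|\Tc)$), and gives no separate proof in the appendix. Your bookkeeping check that $\e_\Sc^*$ and $\e_\Tc^*$ are infima over $\Hf$ is also consistent with how the paper defines them.
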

The first error bound of such type for a target domain uses $\Hc$-divergence \citep{ben2007analysis,blitzer2007learning,ben2010theory} for binary classification (or more rigorously, the $\Hc\Delta\Hc$-divergence). The main difference between ours and $\Hc$-divergence is that $\Hc$-divergence only concerns about the marginal input distributions, whereas the transfer measures depend on the joint distributions over both the inputs and the labels. We note that Proposition~\ref{thm:target_bound} is general and works in the multi-class case as well. Moreover, even in the binary classification case we can prove that our \Cref{thm:target_bound} is tighter than $\Hc$-divergence (see \Cref{prop:compare_trans_measure_hdiv} in the appendix). 

In practice we may not know the optimal errors. In this case, we can use the realizable transfer measure to upper bound the symmetric transfer measure (note that $\e_\Sc^*$ or $\e_\Tc^*$ may not be zero):

\begin{restatable}{prop}{UpperRealizable}\label{lem:upper_bound_realistic}
For $\Hf \subseteq \Hc$ and domains $\Sc$, $\Tc$ we have: $\Tt_\Hf(\Sc, \Tc) \leq 2\Tt_\Hf^\rt (\Sc, \Tc)$.
\end{restatable}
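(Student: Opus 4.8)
The plan is to reduce the symmetric transfer measure to the realizable one via the triangle inequality. First I would regroup the four terms inside the supremum defining $\Tt_\Hf(\Sc,\Tc)$ as $(\e_\Sc(h)-\e_\Tc(h)) - (\e_\Sc^*-\e_\Tc^*)$, separating the optimal-error gap from the per-hypothesis gap. The triangle inequality then gives, for every $h\in\Hf$,
\begin{align}
|\e_\Sc(h)-\e_\Sc^*-(\e_\Tc(h)-\e_\Tc^*)| \le |\e_\Sc(h)-\e_\Tc(h)| + |\e_\Sc^*-\e_\Tc^*|. \nonumber
\end{align}
The first term on the right is bounded above by $\Tt_\Hf^\rt(\Sc,\Tc)$ directly from the definition in \eqref{eq:realizable_transfer}, uniformly in $h$.

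The key step is to show that the second term also satisfies $|\e_\Sc^*-\e_\Tc^*|\le\Tt_\Hf^\rt(\Sc,\Tc)$. To this end I would note that for every $h\in\Hf$ the definition of the realizable measure gives both $\e_\Tc(h)\le\e_\Sc(h)+\Tt_\Hf^\rt(\Sc,\Tc)$ and, symmetrically, $\e_\Sc(h)\le\e_\Tc(h)+\Tt_\Hf^\rt(\Sc,\Tc)$. Taking the infimum over $h\in\Hf$ on both sides of the first inequality yields $\e_\Tc^*\le\e_\Sc^*+\Tt_\Hf^\rt(\Sc,\Tc)$, and symmetrically $\e_\Sc^*\le\e_\Tc^*+\Tt_\Hf^\rt(\Sc,\Tc)$; together these give the claimed bound on the optimal-error gap.

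Combining the two bounds produces $|\e_\Sc(h)-\e_\Sc^*-(\e_\Tc(h)-\e_\Tc^*)|\le 2\Tt_\Hf^\rt(\Sc,\Tc)$ for every $h\in\Hf$, and taking the supremum over $h$ on the left-hand side then yields $\Tt_\Hf(\Sc,\Tc)\le 2\Tt_\Hf^\rt(\Sc,\Tc)$, as desired.

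The only subtle point, and the one I would be most careful about, is the infimum manipulation in the key step: passing from a pointwise inequality $\e_\Tc(h)\le\e_\Sc(h)+c$ to $\e_\Tc^*\le\e_\Sc^*+c$ requires that the additive constant $c$ be uniform in $h$, which is precisely why the supremum-based quantity $\Tt_\Hf^\rt$ is the correct thing to invoke rather than any individual hypothesis gap. Notably, no regularity assumption on $\Hf$ nor existence of minimizers is needed, since the argument operates directly with infima; this keeps the proof fully general.
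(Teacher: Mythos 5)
Your argument is correct and follows essentially the same route as the paper's proof: split off $|\e_\Sc^*-\e_\Tc^*|$ by the triangle inequality and then bound that gap by $\Tt_\Hf^\rt(\Sc,\Tc)$. The only (minor) difference is that you pass to infima directly rather than picking a minimizer $h_\Sc^*\in\argmin_{h\in\Hf}\e_\Sc$ as the paper does, which makes your version marginally more careful when the infimum is not attained.
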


Since Def.~\ref{def:transfer} essentially asks that the excess risks of approximately optimal classifiers on the source domain are comparable between the source and target domains, we can show that Def.~\ref{def:transfer} and Def.~\ref{def:transfer_measure} are equivalent if $\Hf$ is a $\d$-minimal set:

\begin{restatable}[\textbf{equivalence between transferability and transfer measures}]{prop}{OneSide}
\label{prop:transfer_equiv}
Let $\d_\Sc > 0$ and $\Hf = \argmin(\e_\Sc, \d_\Sc)$ and suppose $\inf_{h\in \Hf} \e_\Tc(h) = \inf_{h\in \Hc} \e_\Tc(h)$. If $\mathtt{T}_{\Hf}(\Sc\|\Tc) \leq \d$ or $\Tt_\Hf(\Sc, \Tc) \leq \d$, then $\Sc$ is $(\d_\Sc, \d + \d_\Sc)$-transferable to $\Tc$. Furthermore, if $\Sc$ is $(\d_\Sc, \d_\Tc)$-transferable to $\Tc$, then $\mathtt{T}_{\Hf}(\Sc\|\Tc)\leq \d_\Tc$ and $\Tt_\Hf(\Sc, \Tc) \leq \max\{\d_\Sc, \d_\Tc\}$.  
\end{restatable}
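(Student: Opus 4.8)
The plan is to prove both implications directly from the definitions, where the crucial preliminary observation is how the two infima interact. Since $\Hf = \argmin(\e_\Sc, \d_\Sc)$ contains a minimizing sequence for $\e_\Sc$ over $\Hc$, we have $\e_\Sc^* = \inf_{h\in\Hf}\e_\Sc(h) = \inf_{h\in\Hc}\e_\Sc(h)$, so every $h\in\Hf$ satisfies $0 \le \e_\Sc(h) - \e_\Sc^* \le \d_\Sc$ by the very definition of the $\d_\Sc$-minimal set. The hypothesis $\inf_{h\in\Hf}\e_\Tc(h) = \inf_{h\in\Hc}\e_\Tc(h)$ plays the analogous role on the target side: it guarantees that the $\e_\Tc^*$ appearing in the transfer measures of Def.~\ref{def:transfer_measure} is the genuine target optimum over $\Hc$, so that the target $\d$-minimal set and the target excess risk $\e_\Tc(h) - \e_\Tc^*$ are measured against the same baseline. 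With this alignment in place, everything reduces to chasing excess-risk inequalities.

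For the first implication, I would take any $h\in\Hf = \argmin(\e_\Sc, \d_\Sc)$ and bound $\e_\Tc(h)$. From $\Tt_\Hf(\Sc\|\Tc)\le\d$ I get $\e_\Tc(h) - \e_\Tc^* \le (\e_\Sc(h) - \e_\Sc^*) + \d$, and combining with $\e_\Sc(h) - \e_\Sc^* \le \d_\Sc$ yields $\e_\Tc(h) \le \e_\Tc^* + \d + \d_\Sc$; by the target-side alignment this says $h\in\argmin(\e_\Tc, \d + \d_\Sc)$, which is exactly $(\d_\Sc, \d+\d_\Sc)$-transferability. The variant hypothesis $\Tt_\Hf(\Sc, \Tc)\le\d$ is handled by the same argument, since $\Tt_\Hf(\Sc\|\Tc)\le\Tt_\Hf(\Sc,\Tc)\le\d$ by the definition of the symmetric measure as a maximum.

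For the converse, I would assume $\Hf\subseteq\argmin(\e_\Tc, \d_\Tc)$ and bound the two one-sided measures separately. For $h\in\Hf$, the inclusion gives $\e_\Tc(h) - \e_\Tc^* \le \d_\Tc$, while $\e_\Sc(h) - \e_\Sc^* \ge 0$ by definition of $\e_\Sc^*$; subtracting and taking the supremum over $\Hf$ gives $\Tt_\Hf(\Sc\|\Tc)\le\d_\Tc$. Symmetrically, $\e_\Sc(h) - \e_\Sc^* \le \d_\Sc$ together with $\e_\Tc(h) - \e_\Tc^* \ge 0$ gives $\Tt_\Hf(\Tc\|\Sc)\le\d_\Sc$, and taking the max of the two one-sided bounds yields $\Tt_\Hf(\Sc, \Tc)\le\max\{\d_\Sc, \d_\Tc\}$.

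I expect the only real subtlety, rather than a hard obstacle, to be the careful bookkeeping of which infimum each starred quantity refers to: the $\d$-minimal sets in Def.~\ref{def:transfer} are defined via $\inf_{h\in\Hc}$, whereas the transfer measures in Def.~\ref{def:transfer_measure} use $\inf_{h\in\Hf}$. Reconciling the source infima is automatic from $\Hf = \argmin(\e_\Sc, \d_\Sc)$, but on the target side this reconciliation is precisely the stated hypothesis $\inf_{h\in\Hf}\e_\Tc(h) = \inf_{h\in\Hc}\e_\Tc(h)$; I would flag that without it the equivalence breaks, since a proper inclusion could make $\e_\Tc^*$ strictly larger than the true target optimum over $\Hc$ and thereby decouple the two notions.
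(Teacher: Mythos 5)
Your proposal is correct and follows essentially the same route as the paper's proof: the forward direction chains $\e_\Tc(h)-\e_\Tc^*\le(\e_\Sc(h)-\e_\Sc^*)+\d\le\d_\Sc+\d$ and invokes the target-infimum alignment, and the converse bounds the two one-sided measures by dropping the nonnegative excess-risk terms exactly as the paper does. Your explicit bookkeeping of which infimum each starred quantity refers to is a helpful clarification but does not change the argument.
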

In Prop.~\ref{prop:transfer_equiv}, we do not require $\Hf = \Hc$ since it is unnecessary to impose that all classifiers in $\Hc$ have similar excess risks on source and target domains. Instead, we only constrain $\Hf$ to be a $\d$-minimal set, i.e., $\Hf$ includes approximately optimal classifiers of $\Sc$. See also \Cref{eg:d_transfer_is_not_sim}. An additional assumption is that $\Hf$ also includes the optimal classifier of $\Tc$ which can be ensured by controlling $\d_\Sc$.

\subsection{Comparison with other discrepancy measures between domains}

In this subsection, we compare the realizable transfer measure~\eqref{eq:realizable_transfer} with other discrepancy measures between domains and focus on the 0-1 loss $\e^{\rm 0-1}_{\Dc}$.  We first note that $\Tt^{\mathtt{r}}_{\Hf}(\Sc, \Tc)$ can be written as an integral probability metric (IPM) \citep{sriperumbudur2009integral, muller1997integral}. The l.h.s.~of \eqref{eq:realizable_transfer} can be written as:
\begin{align}\label{eq:transfer_ipm}
&\Tt^{\mathtt{r}}_\Hf(\Sc, \Tc) := d_{\Fc_\Hf }(\Sc, \Tc), \mbox{ where } d_{\Fc}(\Sc, \Tc) = \sup_{f\in \Fc} \left|\sum_y \int f(x, y) (p_\Sc(x, y) - p_\Tc(x, y)) dx  \right|,
\end{align}
and $\Fc_{\rm \Hf} := \{(x, y)\mapsto \one(h(x)\neq y), h\in \Hf\}$. Typical IPMs~\citep{sriperumbudur2009integral} include MMD, Wasserstein distance, Dudley metric and the Kolmogorov--Smirnov distance (see Appendix \ref{app:other_ipm} for more details). However, $\Fc_\Hf$ is fundamentally different from these IPMs since it relies on an underlying function class $\Hf$. Our realizable transfer measure  shares some similarity with \citet{arora2017generalization}, where a changeable function class is used, but the exact choices of the function class are different. 

Even though the transferability can be written in terms of IPM, it is in fact a pseudo-metric:

\begin{restatable}[\textbf{pseudo-metric}]{prop}{pmetric}
For a general loss $\e_\Dc$ as in \eqref{eq:def_loss}, $\Tt^\rt_\Hf(\Sc, \Tc)$ is a pseudo-metric, i.e., for any distributions $\Sc, \Tc, \Pc$ on the same underlying space, we have $\Tt^\rt_\Hf(\Sc, \Sc) = 0$, $\Tt^\rt_\Hf(\Sc, \Tc) = \Tc^\rt_\Hf(\Tc, \Sc)$ (symmetry), and $\Tt^\rt_\Hf(\Sc, \Tc) \leq \Tt^\rt_\Hf(\Sc, \Pc) + \Tt^\rt_\Hf(\Pc, \Tc)$ (triangle inequality).
\end{restatable}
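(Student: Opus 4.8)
The plan is to verify directly the three defining properties of a pseudo-metric straight from the supremum formula $\Tt^\rt_\Hf(\Sc, \Tc) = \sup_{h\in\Hf}|\e_\Sc(h) - \e_\Tc(h)|$, exploiting the fact that each property already holds pointwise for the real numbers $\e_\Sc(h), \e_\Tc(h), \e_\Pc(h)$ and then survives passage to the supremum. No appeal to the IPM reformulation \eqref{eq:transfer_ipm} is needed here; working directly with the absolute difference of scalar losses is cleaner and avoids carrying around the integral notation.

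I would first dispatch the two easy conditions. Non-negativity and the diagonal condition are immediate: the quantity is a supremum of absolute values, hence $\geq 0$ (assuming $\Hf \neq \emptyset$), and setting $\Tc = \Sc$ makes every summand $|\e_\Sc(h) - \e_\Sc(h)| = 0$, so $\Tt^\rt_\Hf(\Sc, \Sc) = 0$. Symmetry follows because $|\e_\Sc(h) - \e_\Tc(h)| = |\e_\Tc(h) - \e_\Sc(h)|$ for every fixed $h$, so the two suprema defining $\Tt^\rt_\Hf(\Sc, \Tc)$ and $\Tt^\rt_\Hf(\Tc, \Sc)$ coincide term by term.

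The only step requiring a short argument is the triangle inequality, and even it is routine. For an arbitrary fixed $h\in\Hf$, the scalar triangle inequality gives $|\e_\Sc(h) - \e_\Tc(h)| \leq |\e_\Sc(h) - \e_\Pc(h)| + |\e_\Pc(h) - \e_\Tc(h)|$. Each term on the right is bounded by the corresponding supremum over $\Hf$, namely $|\e_\Sc(h) - \e_\Pc(h)| \leq \Tt^\rt_\Hf(\Sc, \Pc)$ and $|\e_\Pc(h) - \e_\Tc(h)| \leq \Tt^\rt_\Hf(\Pc, \Tc)$, so that $|\e_\Sc(h) - \e_\Tc(h)| \leq \Tt^\rt_\Hf(\Sc, \Pc) + \Tt^\rt_\Hf(\Pc, \Tc)$ holds uniformly in $h$. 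Taking the supremum over $h\in\Hf$ on the left then yields the claim. I expect no genuine obstacle; the content is a direct verification. The one conceptual point worth flagging is that this argument never uses injectivity of the map sending a distribution $\Sc$ to its loss profile $(\e_\Sc(h))_{h\in\Hf}$, which is precisely why $\Tt^\rt_\Hf$ is only a pseudo-metric rather than a true metric: two distinct distributions $\Sc\neq\Tc$ whose losses agree for every $h\in\Hf$ satisfy $\Tt^\rt_\Hf(\Sc,\Tc)=0$, so the identity of indiscernibles can fail and the ``pseudo'' qualifier is unavoidable.
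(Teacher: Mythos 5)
Your proof is correct and follows essentially the same route as the paper's: both establish the first two properties directly from the definition and obtain the triangle inequality by inserting the intermediate term $\e_\Pc(h)$, applying the scalar triangle inequality pointwise, and splitting the supremum. The only cosmetic difference is that you work with the raw losses $\e_\Dc(h)$ (which is exactly what the definition of $\Tt^\rt_\Hf$ calls for), whereas the paper's write-up phrases the same manipulation in terms of excess risks; the argument is otherwise identical.
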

However in general $\Tt^\rt_{\Hf}(\Sc, \Tc)$ is not a metric since $\Tt^\rt_{\Hf}(\Sc, \Tc) = 0$ even if $\Sc \neq \Tc$. For instance, taking $\Hf = \{h^*\}$ to be the optimal classifier on both $\Sc$ and $\Tc$. we have $\Tt^\rt_\Hf (\Sc, \Tc)  = 0$, but $\Sc$ and $\Tc$ could differ a lot (see Figure \ref{fig:match_joint_suff_not_nece}). In the next result we discuss the connection between realizable transfer measures and total variation (c.f.~\Cref{app:other_ipm}). 

\begin{restatable}[\textbf{equivalence with total variation}]{prop}{upperTV}\label{prop:transfer_TV}
For binary classification with labels $\{-1, 1\}$, given the 0-1 loss $\e_\Dc = \e_\Dc^{\rm 0-1}$, we have $\Tt^{\mathtt{r}}_{\Hf}(\Sc, \Tc) \leq d_{\rm TV}(\Sc, \Tc)$ for domains $\Sc, \Tc$ and any $\Hf \subseteq \Hc$. Denote $\Hc_t$ to be the set of all binary classifiers. Then we have $d_{\rm TV}(\Sc, \Tc) \leq 4 \Tt_{\Hc_t}^\rt(\Sc, \Tc)$.
\end{restatable}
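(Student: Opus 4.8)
The statement splits into two inequalities, and the plan is to read each as a comparison of integral probability metrics over different witness classes, exploiting the IPM form already recorded in~\eqref{eq:transfer_ipm}.

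\emph{The upper bound $\Tt^\rt_\Hf(\Sc,\Tc)\le d_{\rm TV}(\Sc,\Tc)$.} I would start from~\eqref{eq:realizable_transfer}: for any $h$, $\e_\Sc(h)-\e_\Tc(h)=\Sc(A_h)-\Tc(A_h)$, where $A_h=\{(x,y)\in\Xc\times\Yc:\ h(x)\neq y\}$ is a measurable subset of the joint space. Hence each term inside the supremum defining $\Tt^\rt_\Hf$ has the form $|\Sc(A_h)-\Tc(A_h)|$ with $A_h$ ranging over a subfamily of all measurable sets. Since $d_{\rm TV}(\Sc,\Tc)=\sup_A|\Sc(A)-\Tc(A)|$ over \emph{all} measurable $A$, enlarging the admissible sets can only increase the supremum, so $\Tt^\rt_\Hf(\Sc,\Tc)\le d_{\rm TV}(\Sc,\Tc)$ for every $\Hf\subseteq\Hc$. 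This direction uses no structure on $\Hf$ and is the routine half.

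\emph{The lower bound $d_{\rm TV}(\Sc,\Tc)\le 4\,\Tt^\rt_{\Hc_t}(\Sc,\Tc)$.} Write $g_y(x)=p_\Sc(x,y)-p_\Tc(x,y)$ and $(\,\cdot\,)_+=\max\{\cdot,0\}$, so that the optimal witness set $\{(x,y):g_y(x)>0\}$ gives $d_{\rm TV}(\Sc,\Tc)=\int (g_{+1})_+\,dx+\int (g_{-1})_+\,dx$. The plan is constructive: I want a single $h\in\Hc_t$ whose disagreement set $A_h$ recovers a constant fraction of this mass. The key structural fact is that $A_h$ must contain \emph{exactly one} of the two fiber points $(x,+1),(x,-1)$ for each $x$ (choosing $h(x)=+1$ inserts $(x,-1)$, and $h(x)=-1$ inserts $(x,+1)$). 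Maximizing $\e_\Sc(h)-\e_\Tc(h)$ over $h$ is therefore a pointwise choice, giving the closed form $\Tt^\rt_{\Hc_t}(\Sc,\Tc)=\int_\Xc \max\{g_{+1}(x),g_{-1}(x)\}\,dx$, and by passing to the complementary classifier (which sends $A_h\mapsto A_h^{c}$ and flips the sign of the gap, and is again in $\Hc_t$) the absolute value is already accounted for. It remains to bound $\int (g_{+1})_+ + \int (g_{-1})_+$ against $\int \max\{g_{+1},g_{-1}\}$, using the global constraint $\int (g_{+1}+g_{-1})\,dx=0$ that comes from $\Sc,\Tc$ being probability measures; the two binary alternatives (which label to select, and $h$ versus its complement) are what I expect to produce the factor $4$.

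The main obstacle is exactly this rounding step, and I expect it to be the delicate core of the argument. On inputs $x$ where $g_{+1}(x)$ and $g_{-1}(x)$ share the same sign, the one-label-per-fiber constraint forbids $A_h$ from matching the optimal witness set on both coordinates of the fiber, so a naive pointwise inequality between $\max\{a,b\}$ and $a_+ + b_+$ cannot succeed (when both are negative the latter exceeds the former). The crux is therefore to rule out cancellation across fibers and control the captured mass \emph{globally} rather than pointwise; this is where the integral constraint $\int(g_{+1}+g_{-1})=0$ (and, implicitly, the near-realizable regime in which $\Tt^\rt$ is intended to be informative) must do the work, and where the constant degrades from $1$ to $4$.
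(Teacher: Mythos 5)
Your first inequality is correct and is essentially the paper's argument: the disagreement indicator $(x,y)\mapsto\one(h(x)\neq y)$ is a $[0,1]$-valued witness, hence lies in the total-variation witness class $\{f:\|f\|_\infty\le 1\}$, and the bound follows by monotonicity of IPMs in the function class. No issue there.

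The second half has a genuine gap, and it is not merely that the ``rounding step'' is delicate --- under the structural constraint you impose, the inequality you are trying to prove is false. You correctly observe that for a strictly $\{-1,+1\}$-valued classifier the disagreement set $A_h$ contains exactly one point of each fiber $\{(x,+1),(x,-1)\}$, which gives the closed form $\sup_{h}\bigl(\e_\Sc(h)-\e_\Tc(h)\bigr)=\int\max\{g_{+1}(x),g_{-1}(x)\}\,dx$. But this quantity can vanish while $d_{\rm TV}(\Sc,\Tc)$ is maximal, even with the global constraint $\int(g_{+1}+g_{-1})=0$ in force. Take $\Xc=\{a,b\}$ with $p_\Sc(b,+1)=p_\Sc(b,-1)=\tfrac12$ and $p_\Tc(a,+1)=p_\Tc(a,-1)=\tfrac12$ (all other mass zero). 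Then $g_{+1}(a)=g_{-1}(a)=-\tfrac12$ and $g_{+1}(b)=g_{-1}(b)=\tfrac12$, so for \emph{every} strict binary classifier $h$ the fiber at $a$ contributes $-\tfrac12$ and the fiber at $b$ contributes $+\tfrac12$, giving $\e_\Sc(h)-\e_\Tc(h)=0$ identically; yet $d_{\rm TV}(\Sc,\Tc)=2$. So no constant can replace $4$ if $\Hc_t$ is taken to be the $\{-1,+1\}$-valued maps, and the cancellation across the two labels of a fiber is exactly the obstruction you flagged --- it cannot be ruled out globally. The missing idea, which the paper makes explicit as a standing assumption at the end of its proof, is to enlarge $\Hc_t$ to allow an abstaining ``garbage'' output $h(x)=0$ that disagrees with \emph{both} labels. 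The paper's witness $h_+$ abstains precisely on the region $\{x: d(x,1)\ge 0 \text{ and } d(x,-1)\ge 0\}$, thereby capturing $d(x,1)+d(x,-1)$ there rather than only one of the two, and a second witness $h_-$ handles the region where both differences are negative; summing the two resulting gaps and comparing with the four-part decomposition of $\int(|d(x,1)|+|d(x,-1)|)\,dx$ yields the factor $4$. In the counterexample above, the abstaining classifier recovers gap $\tfrac12$ and the bound $2\le 4\cdot\tfrac12$ is tight. Without importing that enlargement of the hypothesis class (or an equivalent device), your construction cannot be completed.
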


Prop.~\ref{prop:transfer_TV} tells us that transfer measures (see also Prop.~\ref{lem:upper_bound_realistic}) are no stronger than total variation, and in the realizable case, \eqref{eq:symmetric_transfer} is equivalent to the similarity of domains (as measured by total variation) if $\Hf$ is unconstrained. We can moreover show that transfer measures are strictly weaker, if we choose $\Hf$ to be some $\d$-minimal set:

\begin{wrapfigure}[20]{r}{0.5\textwidth}
\vspace{-10mm}
\begin{center}
\includegraphics[width=0.33\textwidth]{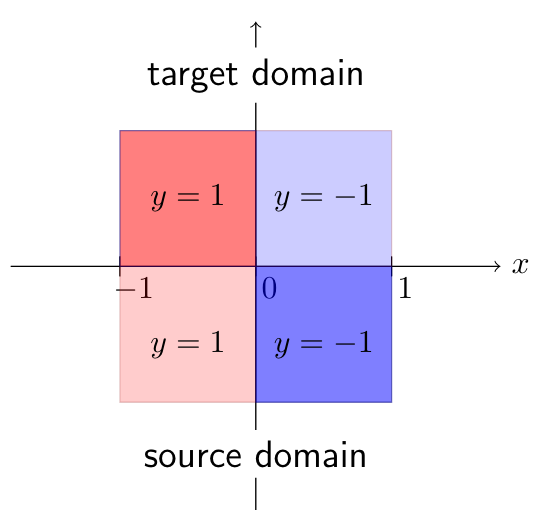}
\vspace{-2mm}
\caption{Visualization of Example \ref{eg:d_transfer_is_not_sim}. \textbf{Source domain}: $P_{\Sc}(Y=1, -1\leq X < 0) = 0.1$, $P_{\Sc}(Y=-1, 0\leq X < 1) = 0.9$. \textbf{Target domain}: $P_{\Tc}(Y=1, -1\leq X < 0) = 0.9$,  $P_{\Tc}(Y=-1, 0\leq X < 1) = 0.1$. The dark and light colors show the intensity of the probability mass. The vertical axis denotes whether it is the target or source domain (above or below $x$-axis).}
\label{fig:match_joint_suff_not_nece}
\end{center}
\end{wrapfigure}

\begin{eg}[\textbf{very dissimilar joint distributions but transferable}]\label{eg:d_transfer_is_not_sim}
We study the distributions described in Figure \ref{fig:match_joint_suff_not_nece}. The joint distributions are very dissimilar, i.e., for any $X, Y$ in the domain, $|p_\Sc(X, Y) - p_\Tc(X, Y)| = 0.8$. Define
\be\label{eq:step_func}
h_\rho(X) = \begin{cases}
1 & \textrm{ if } -1\leq X < \rho \\
-1 & \textrm{ if }\rho \leq X < 1
\end{cases}.
\en
We choose the hypothesis class $\Hc = \{h_\rho, \, \rho\in [-1, 1]\}$ and $\Hf = \{h_\rho, |\rho|\leq \delta/0.8\}$ (for small $\d$, say $\d < 0.01$) to be some neighborhood of the optimal source classifier $h^* = h_0$.
Then $\Tt_\Hf(\Sc, \Tc) = \sup_{h\in \Hf} |\e_\Sc(h) - \e_\Tc(h)| = \d$, and $\Sc$ is $(\d_\Sc, \d + \d_\Sc)$-transferable to $\Tc$ on $\Hf$ for any $\d_\Sc > 0$ according to Prop.~\ref{prop:transfer_equiv}. Note that $\e_\Sc^* = \e_\Tc^* = 0$. 
\end{eg}


\section{Computing Transferability}\label{sec:compute_transfer}
In the last section we proposed a new concept called transferability. However, although Def.~\ref{def:transfer} provides a theoretically sound result for transferability, it is hard to verify it in practice, since we cannot exhaust all approximately good classifiers, especially for rich models such as deep neural networks. Nevertheless, Prop.~\ref{thm:target_bound} and Prop.~\ref{prop:transfer_equiv} provide a framework to compute transferability through transfer measures, despite their simplicity. In this section we discuss how to compute these quantities by making necessary approximations based on transfer measures. There are two difficulties we need to overcome:
{\bf (1)} In practice we only have finite samples drawn from true distributions; 
{\bf (2)} We need a surrogate loss such as cross entropy for training and the 0-1 loss for evaluation. 
In \S\ref{sec:estimate} we show that our transfer measures can be estimated with enough samples, and in \S\ref{sec:surrogate} we discuss transferability with a surrogate loss. These results will be used in our algorithms in the next section.  

\subsection{Estimation of transferability}\label{sec:estimate}
We show how to estimate the transfer measure $\Tt_\Hf(\Sc\|\Tc)$ from finite samples. Other versions of transfer measures in Def.~\ref{def:transfer} follow analogously (see Appendix \ref{app:proofs} for more details). 

\begin{restatable}[\textbf{reduction of estimation error}]{lem}{ReductionEst}\label{lem:transfer_to_normal_error}
Given general loss $\e_\Dc$ as in \eqref{eq:def_loss}, suppose $\widehat{\Sc}$ and $\widehat{\Tc}$ are i.i.d.~sample distributions drawn from distributions of $\Sc$ and $\Tc$, then for any $\Hf\subseteq \Hc$ we have:
\begin{align}
&\Tt_\Hf(\Sc\|\Tc) \leq \Tt_\Hf(\widehat{\Sc}\| \widehat{\Tc})+ 2{\rm est}_{\Hf}(\Sc) + 2{\rm est}_{\Hf}(\Tc), \nonumber
\end{align}
with the estimation errors ${\rm est}_{\Hf}(\Sc) = \sup_{h\in \Hf} |\e_\Sc(h) - \e_{\widehat{\Sc}}(h)|,\, {\rm est}_{\Hf}(\Tc) = \sup_{h\in \Hf} |\e_\Tc(h) - \e_{\widehat{\Tc}}(h)|.$
\end{restatable}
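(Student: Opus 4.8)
The plan is to bound the supremum defining $\Tt_\Hf(\Sc\|\Tc)$ term by term, replacing each true error by its empirical counterpart and absorbing the replacement cost into the estimation errors. Recall that
\[
\Tt_\Hf(\Sc\|\Tc) = \sup_{h\in \Hf} \bigl[\e_\Tc(h) - \e_\Tc^* - (\e_\Sc(h) - \e_\Sc^*)\bigr],
\]
where $\e_\Sc^* = \inf_{h\in\Hf}\e_\Sc(h)$ and $\e_\Tc^* = \inf_{h\in\Hf}\e_\Tc(h)$. The idea is that each of the four quantities $\e_\Tc(h)$, $\e_\Tc^*$, $\e_\Sc(h)$, $\e_\Sc^*$ can be exchanged for its empirical version at a cost controlled by $\mathrm{est}_\Hf(\Tc)$ or $\mathrm{est}_\Hf(\Sc)$.

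The key steps, in order, would be as follows. First I would rewrite the objective inside the supremum by inserting and subtracting the four empirical quantities $\e_{\hat\Sc}(h)$, $\e_{\hat\Tc}(h)$, $\hat\e_\Sc^* := \inf_{h\in\Hf}\e_{\hat\Sc}(h)$, and $\hat\e_\Tc^* := \inf_{h\in\Hf}\e_{\hat\Tc}(h)$, so that the summand becomes the empirical summand $\e_{\hat\Tc}(h) - \hat\e_\Tc^* - (\e_{\hat\Sc}(h) - \hat\e_\Sc^*)$ plus a collection of error terms. Second I would bound each error term: for the pointwise terms, $|\e_\Tc(h) - \e_{\hat\Tc}(h)| \le \mathrm{est}_\Hf(\Tc)$ and $|\e_\Sc(h) - \e_{\hat\Sc}(h)| \le \mathrm{est}_\Hf(\Sc)$ hold uniformly over $h\in\Hf$ by the definition of the estimation errors. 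Third I would handle the infimum terms: the key inequality is that taking infima over the same index set cannot change the value by more than the uniform gap, i.e. $|\e_\Tc^* - \hat\e_\Tc^*| = |\inf_h \e_\Tc(h) - \inf_h \e_{\hat\Tc}(h)| \le \sup_h |\e_\Tc(h) - \e_{\hat\Tc}(h)| \le \mathrm{est}_\Hf(\Tc)$, and similarly for $\Sc$. Finally, using the subadditivity of the supremum, I would collect the constants: two factors of $\mathrm{est}_\Hf(\Tc)$ (one from the pointwise $\e_\Tc(h)$ term, one from $\e_\Tc^*$) and two factors of $\mathrm{est}_\Hf(\Sc)$, yielding exactly the claimed bound $\Tt_\Hf(\hat\Sc\|\hat\Tc) + 2\,\mathrm{est}_\Hf(\Sc) + 2\,\mathrm{est}_\Hf(\Tc)$.

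I expect the main subtlety, rather than a genuine obstacle, to be the treatment of the infimum terms: one must verify that $|\inf_h f(h) - \inf_h g(h)| \le \sup_h |f(h) - g(h)|$ so that perturbing the error function perturbs its infimum by no more than the uniform norm of the perturbation. This is a standard fact (the infimum is $1$-Lipschitz in the sup norm), but it is the one place where care is needed, since $\e_\Sc^*$ and $\hat\e_\Sc^*$ are attained (or approached) at possibly different hypotheses, so a naive term-by-term comparison at a single $h$ does not suffice. Everything else is routine bookkeeping using the triangle inequality and the defining property $\sup_h(a_h + b_h) \le \sup_h a_h + \sup_h b_h$ of the supremum. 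Note that the lemma as stated is a deterministic, sample-path inequality valid for any realization of $\hat\Sc$ and $\hat\Tc$; the probabilistic content, namely that $\mathrm{est}_\Hf(\Sc)$ and $\mathrm{est}_\Hf(\Tc)$ are small with high probability given enough samples, is deferred to standard uniform-convergence (Rademacher complexity) arguments and is not part of this reduction step.
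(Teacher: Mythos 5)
Your proposal is correct and follows essentially the same route as the paper's proof: decompose the summand by inserting the empirical errors and empirical optima, bound the pointwise terms uniformly by ${\rm est}_\Hf(\Sc)$ and ${\rm est}_\Hf(\Tc)$, and control the infimum differences via the $1$-Lipschitzness of the infimum in the sup norm (the paper does this one-sidedly by evaluating the empirical infimum at a population minimizer, which is the same fact). No gaps.
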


This lemma tells us that estimating transferability is no harder than computing the estimation errors of both domains. If the function class $\Hf$ has uniform convergence property \citep{shalev2014understanding}, then we can guarantee efficient estimation of transferability. We first bound the sample complexity through Rademacher complexity, which is a standard tool in bounding estimation errors \citep{bartlett2002rademacher}:

\begin{restatable}[\textbf{estimation error with Rademacher complexity}]{thm}{RadeEst}\label{thm:estimate_transfer_measures}
Given the 0-1 loss $\e_\Dc = \e_\Dc^{\rm 0-1}$, suppose $\widehat{\Sc}$ and $\widehat{\Tc}$ are sample sets with $m$ and $k$ samples drawn i.i.d.~from distributions $\Sc$ and $\Tc$, respectively. For any $\Hf\subseteq \Hc$ the following holds with probability $1 - \d$:
\begin{align}
\Tt_\Hf(\Sc\|\Tc) \leq \Tt_\Hf(\widehat{\Sc}\| \widehat{\Tc})+ 4\Rf_m(\Fc_\Hf) + 4\Rf_k(\Fc_\Hf)  + 2\sqrt{\frac{\log(4/\d)}{2m}} + 2\sqrt{\frac{\log(4/\d)}{2k}}, \nonumber
\end{align}
where $\Fc_{\Hf} := \{(x, y) \mapsto \one(h(x) \neq y),\, h\in \Hf\}$. If furthermore, $\Hf$ is a set of binary classifiers with labels $\{-1, 1\}$, then $2 \Rf_m(\Fc_{\Hf}) = \Rf_m(\Hf), \, 2 \Rf_k(\Fc_{\Hf}) = \Rf_k(\Hf)$.
\end{restatable}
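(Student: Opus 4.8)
The plan is to reduce the claim to a standard two-sided uniform-convergence bound over the loss class $\Fc_\Hf$, and then invoke symmetrization together with a bounded-difference concentration inequality. First I would apply \Cref{lem:transfer_to_normal_error}, which already gives
\[
\Tt_\Hf(\Sc\|\Tc) \leq \Tt_\Hf(\hSc\| \hTc) + 2\,{\rm est}_\Hf(\Sc) + 2\,{\rm est}_\Hf(\Tc),
\]
so it suffices to show that, with probability $1-\d$ (simultaneously), ${\rm est}_\Hf(\Sc) = \sup_{h\in\Hf}|\e_\Sc(h)-\e_{\hSc}(h)|$ is at most $2\Rf_m(\Fc_\Hf) + \sqrt{\log(4/\d)/(2m)}$ and the analogous bound holds for $\Tc$ with $k$ in place of $m$.

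The key observation is that ${\rm est}_\Hf(\Sc) = \sup_{f\in\Fc_\Hf}|\Eb_\Sc[f] - \Eb_{\hSc}[f]|$, where every $f\in\Fc_\Hf$ takes values in $[0,1]$ since it is an indicator. Writing $A^+ = \sup_f(\Eb_\Sc[f]-\Eb_{\hSc}[f])$ and $A^- = \sup_f(\Eb_{\hSc}[f]-\Eb_\Sc[f])$, we have ${\rm est}_\Hf(\Sc)=\max\{A^+,A^-\}$. For either sign, standard symmetrization gives $\Eb[A^\pm]\leq 2\Rf_m(\Fc_\Hf)$; and since altering a single sample perturbs $A^\pm$ by at most $1/m$, McDiarmid's inequality yields $A^\pm \leq \Eb[A^\pm] + \sqrt{\log(1/\d')/(2m)}$ with probability $1-\d'$. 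Setting $\d'=\d/4$ and taking a union bound over the four events (both signs, both domains) gives the two estimation-error bounds, and multiplying through by $2$ via \Cref{lem:transfer_to_normal_error} produces the stated inequality.

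I do not anticipate a genuine obstacle: the argument is bookkeeping around textbook tools \citep{bartlett2002rademacher,shalev2014understanding}. The only points requiring care are (i) handling the absolute value, which forces me to control both one-sided suprema and hence to split the failure budget into four pieces — this is precisely what produces the $\log(4/\d)$ inside the square roots — and (ii) tracking constants so that the factor $2$ from the lemma turns each $2\Rf_m(\Fc_\Hf)$ into the advertised $4\Rf_m(\Fc_\Hf)$.

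For the final claim I would specialize to binary labels $\{-1,1\}$ and use the identity $\one(h(x)\neq y) = \tfrac12(1 - y h(x))$. Substituting this into the definition of $\Rf_m(\Fc_\Hf)$, the constant $\tfrac12$ contributes a term $\frac{1}{2m}\sum_i\sigma_i$ that is independent of $h$ and vanishes in expectation, while $-\sigma_i y_i$ is again a symmetric Rademacher variable because $y_i\in\{-1,1\}$. Hence $\Rf_m(\Fc_\Hf)=\tfrac12\Rf_m(\Hf)$, i.e., $2\Rf_m(\Fc_\Hf)=\Rf_m(\Hf)$, and identically with $k$ in place of $m$.
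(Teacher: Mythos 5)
Your proposal is correct and follows essentially the same route as the paper: reduce via the estimation-error lemma, bound each $\sup_{h\in\Hf}|\e_\Dc(h)-\e_{\widehat{\Dc}}(h)|$ by $2\Rf(\Fc_\Hf)$ plus a McDiarmid deviation term for each one-sided supremum, and union-bound over the four events to obtain the $\log(4/\d)$ factors (the paper packages the two signs into a single two-sided lemma adapted from Mohri et al.\ before splitting $\d$ across the two domains, but the accounting is identical). Your derivation of $2\Rf_m(\Fc_\Hf)=\Rf_m(\Hf)$ via $\one(h(x)\neq y)=\tfrac12(1-yh(x))$ is exactly the standard argument the paper cites from Lemma~3.4 of Mohri et al.
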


We also provide estimation error results using Vapnik–Chervonenkis (VC) dimension and Natarajan dimension in Appendix~\ref{app:est_vc_natarajan}. It is worth mentioning that the VC dimension of piecewise-polynomial neural networks has been upper bounded in \citet{bartlett2019nearly}. Since transfer measures can be estimated, in later sections we do not distinguish the sample sets $\widehat{\Sc}, \widehat{\Tc}$ and the underlying distributions $\Sc, \Tc$.  

\subsection{Transferability with a surrogate loss}\label{sec:surrogate}
Due to the intractability of minimizing the 0-1 loss, we need to use a surrogate loss~\citep{bartlett2006convexity} for training in practice. In this section, we discuss this nuance w.r.t.~transferability. We will focus on the most commonly used surrogate loss, cross entropy (CE), although some of the results can be easily adapted to other loss functions. To distinguish a surrogate loss from the 0-1 loss, we use $\e_\Dc$ from now on for a surrogate loss and $\e_\Dc^{\rm 0-1}$ for the 0-1 loss. One of the difficulties is the non-equivalence between $\d$-minimal sets w.r.t.~the 0-1 loss and a surrogate loss, i.e.~$\argmin(\e_\Dc, \d_\Dc)$ might be quite different from $\argmin(\e_\Dc^{\rm 0-1}, \d_\Dc)$.
Moreover, it is not practical to find all elements in $\argmin(\e_\Dc^{\rm 0-1}, \d_\Dc)$ since the loss is nonconvex and nonsmooth. In light of these difficulties, we propose a more practical notion of transferability based on surrogate loss $\e_\Dc$:

\begin{restatable}[\textbf{transfer measure with a surrogate loss}]{prop}{transferSurrogate}\label{prop:surrogate_transfer} 
Given surrogate loss $\e_\Dc \geq \e_\Dc^{0-1}$ on a general domain $\Dc$. Suppose $\Hf = \argmin(\e_\Sc, \d_\Sc)$ and denote $\e_\Tc^* = \inf_{h\in \Hf} \e_\Tc(h)$, $\e_\Sc^* = \inf_{h\in \Hf} \e_\Sc(h)$, $(\e_\Tc^{\rm 0-1})^* = \inf_{h\in \Hc} \e_\Tc^{\rm 0-1}(h)$. If the following holds:
\be\label{eq:surrogate_measure}
\Tt_\Hf^{\rm  }(\Sc\| \Tc) = \sup_{h\in \Hf} \e_\Tc (h) - \e_\Tc^* - (\e_\Sc(h) - \e_\Sc^*) \leq \d, 
\en
then we have ${\argmin} (\e^{\rm  }_\Sc, \d_\Sc) \subseteq {}{\argmin} (\e^{\rm 0-1}_\Tc,  \d + \d_\Sc + \e_\Tc^*- (\e_\Tc^{\rm 0-1})^*).$
\end{restatable}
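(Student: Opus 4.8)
The plan is to prove the set inclusion elementwise by chaining three facts: membership of $h$ in the source $\d_\Sc$-minimal set controls its source excess risk, the transfer-measure hypothesis then controls its target surrogate excess risk, and surrogate domination finally passes this down to the target $0$-$1$ risk. Fix an arbitrary $h \in \argmin(\e_\Sc, \d_\Sc) = \Hf$; the goal is to show $\e_\Tc^{\rm 0-1}(h) \le (\e_\Tc^{\rm 0-1})^* + \big(\d + \d_\Sc + \e_\Tc^* - (\e_\Tc^{\rm 0-1})^*\big)$, i.e.\ $\e_\Tc^{\rm 0-1}(h) \le \d + \d_\Sc + \e_\Tc^*$, which is precisely membership in the target $\d$-minimal set on the right-hand side.

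First I would record the easy but essential observation that $\e_\Sc^* = \inf_{h\in\Hf}\e_\Sc(h)$ actually coincides with the global source optimum $\inf_{h\in\Hc}\e_\Sc(h)$: since $\Hf$ is the $\d_\Sc$-minimal set, it contains source classifiers with error within $\d_\Sc$ of, and hence approaching, the global infimum, so the two infima agree. With this identity in hand, membership $h\in\Hf$ reads $\e_\Sc(h) \le \e_\Sc^* + \d_\Sc$, i.e.\ the source excess risk obeys $\e_\Sc(h) - \e_\Sc^* \le \d_\Sc$.

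Next I apply the hypothesis $\Tt_\Hf(\Sc\|\Tc) \le \d$. By the definition of the one-sided transfer measure, for every $h\in\Hf$ we have $\e_\Tc(h) - \e_\Tc^* - (\e_\Sc(h) - \e_\Sc^*) \le \d$, so combining with the previous bound gives $\e_\Tc(h) - \e_\Tc^* \le \d + \d_\Sc$. Finally, invoking the surrogate-domination assumption $\e_\Tc \ge \e_\Tc^{\rm 0-1}$ pointwise yields $\e_\Tc^{\rm 0-1}(h) \le \e_\Tc(h) \le \e_\Tc^* + \d + \d_\Sc$, and rewriting the right-hand side as $(\e_\Tc^{\rm 0-1})^* + (\d + \d_\Sc + \e_\Tc^* - (\e_\Tc^{\rm 0-1})^*)$ gives the claimed inclusion.

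The argument is essentially a short chain of inequalities, so no single step is hard; the one place to be careful — and the main conceptual obstacle — is keeping straight the several distinct reference quantities. Specifically, $\e_\Sc^*$ and $\e_\Tc^*$ are \emph{surrogate-loss} infima taken over the \emph{restricted} class $\Hf$, whereas the target $\d$-minimal set in the conclusion is measured against the \emph{$0$-$1$} optimum $(\e_\Tc^{\rm 0-1})^*$ over the \emph{full} class $\Hc$. The gap between these two baselines is exactly absorbed by the additive slack $\e_\Tc^* - (\e_\Tc^{\rm 0-1})^*$; verifying that this is precisely the term that appears, and that the source-infimum identity lets $\d_\Sc$-minimal-set membership translate cleanly into an excess-risk bound, is the only subtlety worth double-checking.
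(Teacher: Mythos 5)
Your proof is correct and follows essentially the same chain of inequalities as the paper's own (much terser) argument: $\e_\Tc^{\rm 0-1}(h) \le \e_\Tc(h) \le (\e_\Sc(h)-\e_\Sc^*) + \d + \e_\Tc^* \le \d_\Sc + \d + \e_\Tc^*$, followed by unwinding the definition of the target $\d$-minimal set. Your explicit remark that $\inf_{h\in\Hf}\e_\Sc(h)$ coincides with the global source infimum is a worthwhile clarification that the paper leaves implicit.
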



This proposition implies that if the transfer measure is small, then a near-optimal classifier of the surrogate loss in the source domain would be near-optimal in the target domain for the 0-1 loss. It also gives us a practical framework to guarantee transferability, which we will discuss in more depth in Section \ref{sec:alg}. Assume $\e_\Dc : \Hc \to \Rb$ to be Lipschitz continuous and strongly convex, which is satisfied for the cross entropy loss (see \Cref{app:functional_surrogate}). We are able to translate the $\d$-minimal set to $L_p$ balls in the function space:
\begin{align}\label{eq:minimal_functional}
C_1 \| h - h^*\|_{2, \Dc} \leq \e_{\Dc}(h) - \e_{\Dc}(h^*) \leq C_2 \| h - h^*\|_{1, \Dc}, 
\end{align}
where $C_1$ and $C_2$ are absolute constants and $h^*$ is an optimal classifier. The function norms $\|\cdot\|_{1, \Dc}$ and $\|\cdot\|_{2, \Dc}$ are the usual $L_p$ norms over distribution $\Dc$. Since the classifier $h = q(\theta, \cdot)$ is usually parameterized with, say a neural network, we further upper bound the function norms by the distance of parameters, that is, for $1\leq p < \infty$, $h = q(\theta, \cdot)$ and $h' = q(\theta', \cdot)$, we have $\| h - h'\|_{p, \Dc} \leq L \| \theta - \theta'\|_2,$
with $L$ some Lipschitz constant of $q$ (\Cref{app:functional_surrogate}). Combined with \eqref{eq:minimal_functional}, we obtain: 
\begin{align}\label{eq:bound_parameter}
\e_{\Dc}(h) - \e_{\Dc}(h') \leq LC_2  \| \theta - \theta'\|_2.
\end{align}
In other words, if the parameters are close enough, then the losses should not differ too much. We denote $\|\cdot\|_2$ as the Euclidean norm, and for later convenience we will omit the subscript in $\|\cdot\|_2$.

\section{Algorithms for Evaluating and Improving Transferability}\label{sec:alg}

The notion of transferability is defined w.r.t.\ domains, hence by learning feature embeddings that induce certain feature distributions, one can aim to improve transferability of two given domains. In this section we design algorithms to evaluate and improve transferability by learning such transformations. To start with, let $g:\Xc\to\Zc$ be a feature embedding (a.k.a.~featurizer), where $\Zc$ is understood to be a feature space. By a joint distribution $\Dc^g$ (or $\Sc^g$, $\Tc^g$) we mean a distribution on $g(\Xc) \times \Yc$. Formally, we are dealing with push-forwards of distributions: 
\begin{align}
\Sc^g := (g, \id) \# \Sc, \, \Tc^g := (g, \id) \# \Tc,
\end{align}
where $(g, \id): (x, y) \mapsto (g(x), y)$ is a function on $\Xc \times \Yc$. $\Sc$ and $\Tc$ here are joint distributions on $\Xc \times \Yc$, and here we specify $\Xc$ to be the space of the original signal such as an image. Since $\Sc$ and $\Tc$ cannot be changed, what we are evaluating here is the feature embedding $g$. 
The key quantity is transfer measures as in \eqref{eq:surrogate_measure}:
\begin{align}\label{eq:transfer_again}
\Tt_\Hf(\Sc^g \| \Tc^g) = \sup_{h\in \Hf} \e_{\Tc^g}(h) - \e_{\Tc^g}^* - (\e_{\Sc^g}(h) - \e_{\Sc^g}^*), \quad \Hf = \argmin(\e_{\Sc^g}, \d_{\Sc^g}).
\end{align}
Although $\Hf$ is hard to compute, we can use \eqref{eq:bound_parameter} to obtain a lower bound of \eqref{eq:transfer_again}. That is, given a parametrization of the classifier $h = q(\theta, \cdot)$ and the optimal classifier $h^* = q(\theta^*, \cdot)$, we have:
\begin{align}\label{eq:lower_bound_transfer}
\small
\Tt_\Hf(\Sc^g \| \Tc^g) &\geq \sup_{\|\theta - \theta^*\| \leq \d} \e_{\Tc^g}(h) - \e_{\Sc^g}(h)  - \e_{\Tc^g}^* + \e_{\Sc^g}^* \tr
&\geq \sup_{\|\theta - {\theta^*}\| \leq \d} \e_{\Tc^g}(h) - \e_{\Sc^g}(h)  - \e_{\Tc^g}(\widehat{h^*}) \tr
&\approx \sup_{\|\theta - \widehat{\theta^*}\| \leq \d} \e_{\Tc^g}(h) - \e_{\Sc^g}(h)  - \e_{\Tc^g}(\widehat{h^*})
\end{align}
where $\d > 0$ depends on $\Hf$ and the constant in \eqref{eq:bound_parameter}. In the second and the third lines, we approximated the optimal errors $\e_{\Tc^g}^*$ and $\e_{\Sc^g}^*$ with $0 \leq \e_{\Sc^g}^* \leq \e_{\Sc^g}(\widehat{h^*}), \, 0 \leq \e_{\Tc^g}^* \leq \e_{\Tc^g}(\widehat{h^*})$, and we use the learned classifier $\widehat{h^*} = q(\widehat{\theta^*}, \cdot)$ as a surrogate for the optimal classifier. As a result, if the r.h.s.~of \eqref{eq:lower_bound_transfer} is large, then $\Sc^g$ is not quite transferable to $\Tc^g$. 

\setlength{\textfloatsep}{1pt}
\begin{algorithm}\label{alg:measure_transfer}
\textbf{Input: }learned feature embedding $g$, learned classifier $\widehat{h^*} = q(\widehat{\theta^*}, \cdot)$, target sample training set $\Tc = \Sc_0$, sample training sets  $\Sc_1$, \dots, $\Sc_n$, ascent optimizer, minimal errors $\e_{\Sc_i}^* \approx \e_{\Sc_i}(\widehat{h^*})$, adversarial radius $\d$ \\
\textbf{Initialize:} a classifier $h = q(\theta, \cdot)$ and $\theta = \widehat{\theta^*}$, gap $ = -\infty$ \\
\For{$t$ in $1 \dots T$}{
Find $\max_i \e_{\Sc_i}(h\circ g)$ and $\min_i \e_{\Sc_i}(h\circ g)$ 
and corresponding indices $j$ and $k$
\\
Run an ascent optimizer on $h$ to maximize $ {\rm gap}_0 = \e_{\Sc_j}(h\circ g) - \e_{\Sc_k}(h\circ g)$ \\
Project $\theta$ onto the Euclidean ball $\|\theta - \widehat{\theta^*}\| \leq \d$\\
\If{${\rm gap}_0 > {\rm gap}$}{${\rm gap} = {\rm gap}_0$, save accuracies and losses of each domain}
}
\textbf{Output: } $j$, $k$, $h$, $\e_{\Sc_j}(h\circ g) -\e_{\Sc_k}(h\circ g)$, $\e_{\Sc_j}(\widehat{h^*})$, $\e_{\Sc_k}(\widehat{h^*})$
\caption{Algorithm for evaluating transferability among multiple domains}
\label{alg:measure_transfer_whole}
\end{algorithm}

We can thus design an algorithm to evaluate the transferability in Section~\ref{sec:eval_transfer}. By computing the lower bound in \eqref{eq:lower_bound_transfer}, we can disprove the transferability as in Prop.~\ref{prop:transfer_equiv} and Prop.~\ref{prop:surrogate_transfer}. Computing the lower bound in \eqref{eq:lower_bound_transfer} can be regarded as an attack method: there is an adversary trying to show that $\Sc^g$ is not transferable to $\Tc^g$. For this attack, we could also design a defence method aiming to minimize the lower bound and learn more transferable features.

\vspace{-0.2em}
\subsection{Algorithm for evaluating transferability}\label{sec:eval_transfer}
\vspace{-0.2em}

In domain generalization we have one target domain and more than one source domains. To ease the presentation, we denote $\Sc_0 = \Tc$ (and thus $ \Sc_0^g = \Tc^g$) and extend the index set to be $\{0, 1, \cdots, n\}$. We need to evaluate the transferability \eqref{eq:lower_bound_transfer} between all pairs of $\Sc^g_i$ and $\Sc^g_j$. Algorithm \ref{alg:measure_transfer} gives an efficient method to compute the worst-case gap $\sup_{\|\theta - \widehat{\theta^*}\| \leq \d} \e_{\Sc^g_i}(h) - \e_{\Sc^g_j}(h)$ among all pairs of $(i, j)$. Essentially, it finds the worst pair of $(i,j)$ at each step such that 
the gap $\e_{\Sc^g_i}(h) - \e_{\Sc^g_j}(h)$ takes the largest value, and then maximize this gap over parameter $\theta$ through gradient ascent. 

Note that the computation of \eqref{eq:lower_bound_transfer} also depends on the information from the target domain. This is valid since we are only \emph{evaluating} but not \emph{training} over these domains. 


\subsection{Algorithm for improving transferability}

The evaluation sub-procedure provides us a way to pick a pair of non-transferable domains $(\Sc_i^g, \Sc_j^g)$, which in turn could be used to improve the transferability among all source domains by updating the feature embedding $g$ such that the gap $\sup_{\|\theta - \theta^*\| \leq \d} \e_{\Sc^g_i}(h) - \e_{\Sc^g_j}(h)$ for $(i, j) \in [n]\times[n]$. Simultaneously, we also require that the feature embedding $g$ preserves information for the target task of interest. With the parametrization $h = q(\theta, \cdot)$, $h' = q(\theta', \cdot)$, the overall optimization problem can be formulated as:
\begin{align}\label{eq:alg_optimization}
\min_{g, h} \max_{\|\theta' - \theta\| \leq \d} \frac{1}{n}\sum_{i=1}^n \e_{\Sc_i}^{\rm }(h \circ g) + \left( {\rm max}_i \e_{\Sc_i}^{\rm }(h'\circ g)  - {\rm min}_i \e_{\Sc_i}^{\rm }(h'\circ g) \right).
\end{align}
Intuitively, we want to learn a common feature embedding and a classifier such that all source errors are small and the pairwise transferability between source domains is also small. If the optimization problem is properly solved, then we have the following guarantee:
\begin{restatable}[\textbf{optimization guarantee}]{thm}{OptGuarantee}\label{thm:alg_framework}
Assume that the function $q(\cdot, x)$ is $L_{\theta}$ Lipschitz continuous for any $x$. Suppose we have learned a feature embedding $g$ and a classifier $h$ such that the loss functional $\e_{\Sc_i^g}: \Hc \to \Rb$ is $L_{\ell}$ Lipschitz continuous w.r.t.~distribution $\Sc_i^g$ for $i \in [n]$ and 
\be\label{eq:optimization_guarantee}
\small
\max_{\|\theta' - \theta\| \leq \d} \frac{1}{n}\sum_{i=1}^n \e_{\Sc_i}^{\rm }(h \circ g) + \left( {\rm max}_i \e_{\Sc_i}^{\rm }(h'\circ g)  - {\rm min}_i \e_{\Sc_i}^{\rm }(h'\circ g) \right) \leq \eta,
\en
where $\theta, \theta'$ are parameters of $h$ and $h'$. Then for any $h' \in \Hf = \{q(\theta', \cdot) :\|\theta - \theta'\| \leq \d\}$, we have:
\be\label{eq:alg_result_to_prove}
\small
\Tt_\Hf^{\rt}(\Tc^g_1, \Tc^g_2) \leq \eta, \quad \e_{\Sc_i}^{\rm }(h'\circ g) \leq \eta + L_{\ell}L_{\theta} \d, \quad \e_{\Tc}^{\rm }(h'\circ g) \leq 2 \eta +  L_{\ell}L_{\theta} \d,
\en
for any $\Tc_1^g, \Tc_2^g, \Tc^g \in \conv(\Sc_1^g, \dots, \Sc_n^g)$ and any $i \in [n]$. 
\end{restatable}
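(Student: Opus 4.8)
The plan is to extract two consequences from the hypothesis \eqref{eq:optimization_guarantee} and then propagate them to arbitrary members of the convex hull using the fact that the risk $\e_{\Dc^g}(h)$ is \emph{affine} in the distribution $\Dc^g$, i.e.\ $\e_{\sum_i \gamma_i \Sc_i^g}(h) = \sum_i \gamma_i\,\e_{\Sc_i^g}(h)$ for any mixture weights $\gamma_i \ge 0$ with $\sum_i \gamma_i = 1$. Writing $a_i(\theta') := \e_{\Sc_i}(q(\theta',\cdot)\circ g) = \e_{\Sc_i^g}(h')$ and noting that the first summand of \eqref{eq:optimization_guarantee} does not depend on $\theta'$, the hypothesis splits into two facts: (i) evaluating at $\theta'=\theta$ gives $\frac1n\sum_i a_i(\theta) + \max_i a_i(\theta) - \min_i a_i(\theta) \le \eta$; and (ii) for every $\|\theta'-\theta\|\le\d$ the gap obeys $\max_i a_i(\theta') - \min_i a_i(\theta') \le \eta$, since the averaged-risk term is nonnegative.

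First I would prove the source-error bound (the middle inequality of \eqref{eq:alg_result_to_prove}). From (i), since $\frac1n\sum_i a_i(\theta) \ge \min_i a_i(\theta)$, we get $\max_i a_i(\theta) \le \eta$, hence $a_i(\theta)\le\eta$ for every $i$. Chaining the two Lipschitz assumptions — $|q(\theta',x)-q(\theta,x)|\le L_\theta\|\theta'-\theta\|$ pointwise, and $L_\ell$-Lipschitzness of $\e_{\Sc_i^g}$ in its functional argument — yields $|a_i(\theta')-a_i(\theta)| \le L_\ell L_\theta\|\theta'-\theta\| \le L_\ell L_\theta\d$, so $a_i(\theta')\le \eta + L_\ell L_\theta\d$ for all $h'\in\Hf$ and all $i$. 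For the transfer-measure bound $\Tt_\Hf^\rt(\Tc_1^g,\Tc_2^g)\le\eta$, I would fix $h'\in\Hf$ and write $\Tc_1^g=\sum_i\alpha_i\Sc_i^g$, $\Tc_2^g=\sum_i\beta_i\Sc_i^g$; by affineness both $\e_{\Tc_1^g}(h')$ and $\e_{\Tc_2^g}(h')$ are convex combinations of $\{a_i(\theta')\}_i$ and therefore lie in $[\min_i a_i(\theta'),\max_i a_i(\theta')]$, so their difference is at most $\max_i a_i(\theta')-\min_i a_i(\theta')$. Taking $\sup_{h'\in\Hf}$ and invoking (ii) gives the claim.

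Finally, the target bound follows by combining the previous two: for $\Tc^g\in\conv(\Sc_1^g,\dots,\Sc_n^g)$ and any fixed source $\Sc_j^g$, the triangle step $\e_{\Tc^g}(h') \le \e_{\Sc_j^g}(h') + |\e_{\Tc^g}(h')-\e_{\Sc_j^g}(h')|$ bounds the first term by $\eta + L_\ell L_\theta\d$ (source bound) and the second by $\Tt_\Hf^\rt(\Tc^g,\Sc_j^g)\le\eta$ (transfer-measure bound, since $\Sc_j^g$ is trivially in the hull), giving $\e_{\Tc^g}(h')\le 2\eta + L_\ell L_\theta\d$. A slightly tighter $\eta + L_\ell L_\theta\d$ is in fact available by averaging the source bound directly, but the stated $2\eta$ suffices. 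I expect the only real subtlety to be organizational: correctly disentangling the constant averaged-risk term from the $\theta'$-dependent gap term in \eqref{eq:optimization_guarantee}, and recognizing that affineness of the risk in the distribution is exactly what lets the worst-case min--max gap over the vertices $\Sc_i^g$ control every pairwise transfer measure and every target risk across the whole convex hull; the Lipschitz chaining that converts the parameter ball $\|\theta'-\theta\|\le\d$ into the additive $L_\ell L_\theta\d$ slack is routine.
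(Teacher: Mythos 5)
Your proposal is correct and follows essentially the same route as the paper's proof: evaluate the hypothesis at $\theta'=\theta$ to bound $\max_i \e_{\Sc_i}(h\circ g)\le\eta$, chain the two Lipschitz assumptions to get the $L_\ell L_\theta\d$ slack, and use linearity of the risk in the distribution to propagate the min--max gap over the vertices to all pairs in the convex hull (the paper isolates this last step as a separate proposition, proved by the same affineness argument you give). Your side remark that the target bound can be tightened to $\eta+L_\ell L_\theta\d$ by writing $\e_{\Tc^g}(h')$ as a convex combination of the source risks is also correct, though the paper only states the looser $2\eta+L_\ell L_\theta\d$.
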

The Lipschitzness assumption for $\e_{\Sc_i^g}$ is mild and can be satisfied for cross entropy loss (c.f.~\Cref{app:lipschitz_cont_loss}). Here $\conv(\cdot)$ denotes the convex hull in the same sense as \citet{albuquerque2019generalizing}, i.e., each element is a mixture of source distributions.
Thm~\ref{thm:alg_framework} tells us that if we can solve the optimization problem \eqref{eq:alg_optimization} properly, we can guarantee transferability on a neighborhood of the classifier, as an approximation of the $\delta$-minimal set. We thus propose Algorithm \ref{alg:train_transfer}, which shares similarity with existing frameworks, such as DANN \citep{ganin2015unsupervised} and Distributional Robust Optimization \citep{sinha2017certifying, Sagawa*2020Distributionally}, in the sense that they all involve adversarial training and minimax optimization. However, the objective in our case is different and we provide a more detailed comparison with existing methods in Appendix~\ref{app:comparison}. 

\begin{algorithm}[H]
\textbf{Input:} samples sets of source domains $\Sc_1, \dots, \Sc_n$, feature embedding $g$, classifier $h = q(\theta, \cdot)$, adversarial classifier $h' = q(\theta', \cdot)$, surrogate loss $\e_{\Dc}$, adversarial radius $\d$, ascent optimizer, descent optimizer, weight parameter $\lambda$, number of epochs $T$ \\
\For{$t$ in $1 \dots T$}{
Compute $\max_i \e_{\Sc_i}(h\circ g)$ and $\min_i \e_{\Sc_i}(h\circ g)$ \\
Initialization $h' = h$ (or $\theta' = \theta$) \\
\For{$k$ in $1 \dots N$}{
Run the ascent optimizer on $h'$ to maximize $\max_i \e_{\Sc_i}(h'\circ g) - \min_i \e_{\Sc_i}(h'\circ g)$ fixing $g$ \\
Project $\theta'$ onto the Euclidean ball $\|\theta' - \theta\| \leq \d$}
Fixing $h'$, run the descent optimizer on $g, h$ to minimize ${\rm error} = \frac{1}{n}\sum_i \e_{\Sc_i}(h\circ g) + (\max_i \e_{\Sc_i}(h'\circ g) - \min_i \e_{\Sc_i}(h'\circ g))$
}
\textbf{Output:} feature embedding $g$, classifier $h$
\caption{Transfer algorithm for domain generalization}
\label{alg:train_transfer}
\end{algorithm}

\section{Experiments}\label{sec:exp}

\citet{gulrajani2020search} did extensive experiments on comparing DG algorithms, using the same neural architecture and data split. Specifically, they show that with data augmentation, ERM perform relatively well among a large array of algorithms. Our experiments are based on their settings. We run Algorithm \ref{alg:measure_transfer_whole} on standard benchmarks, including RotatedMNIST \citep{ghifary2015domain}, PACS \citep{li2017deeper},  Office-Home \citep{venkateswara2017deep} and WILDS-FMoW \citep{koh2020wilds} (c.f.~\Cref{app:datasets}). Specifically, WILDS-FMoW is a large dataset with nearly half a million images. Detailed experimental settings can be seen at Appendix \ref{app:add_exp}.

\setlength{\textfloatsep}{5pt}
\begin{figure}
\centering
\includegraphics[width=\textwidth]{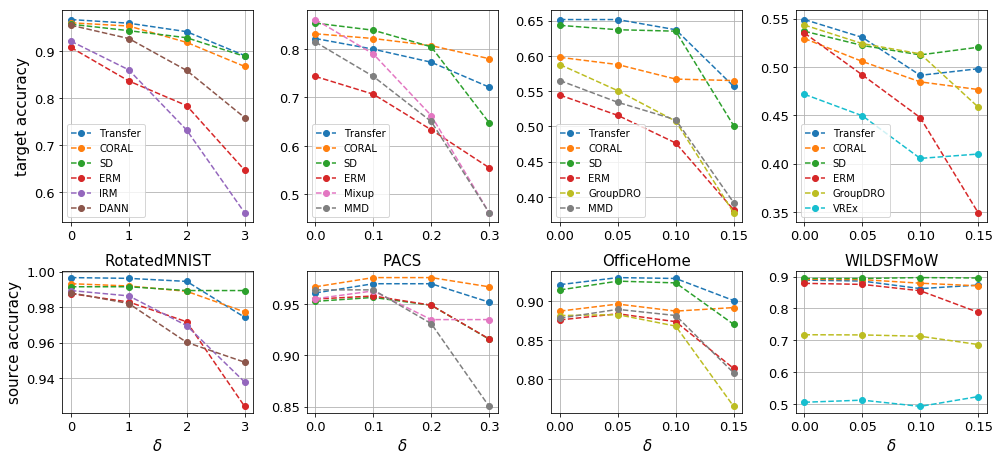}
\caption{\textbf{Top row}: test accuracy of the target domain; \textbf{bottom row}: test accuracy of one of the source domains. Each column is for a given dataset with the name in the middle, and the legends on the bottom row are the same as those on the top row. $\delta$ is the parameter in Algorithm \ref{alg:measure_transfer}. }
\label{fig:exp}
\end{figure}

\paragraph{Evaluating transferability}\;  From Figure \ref{fig:exp} it can be seen that at a neighborhood of the learned classifier, there exists a classifier such that the target accuracy is degraded significantly, whereas some source domain still has high accuracy. This poses questions to whether current popular algorithms such as ERM \citep{vapnik1992principles}, DANN \citep{ganin2016domain} and Mixup \citep{xu2020adversarial, yan2020improve} are really learning invariant and transferable features. If so, the target accuracy should be high given a high source accuracy. However, for the PACS dataset and Mixup model (the second column of Figure \ref{fig:exp}), the target accuracy decreases by more than $30\%$ while the source accuracy remains roughly at the same level. We can also, e.g., read from the first column that with a small decrease of the source (test) accuracy by $\sim 2\%$ (at $\delta=2$), the target accuracy of DANN drops by $\sim 10\%$. 

From Figure \ref{fig:exp} we can also see that Correlation Alignment \citep[CORAL,][]{sun2016deep} and Spectral Decomposition \citep[SD,][]{pezeshki2020gradient} have better transferability that other algorithms. In some sense, they are in fact learning \emph{robust classifiers}, i.e., all the classifiers on the neighborhood of the learned classifier can achieve good accuracies. With this robust classifier, the target accuracy does not decrease much even if the classifier is perturbed.

\paragraph{Improving transferability} \; \Cref{alg:train_transfer} has good performance among all four datasets that we tried, comparable to CORAL and SD. Note that CORAL and SD do not always perform well, such as in the  Office-Home and WILDS-FMoW datasets, but our Transfer algorithm does. However, in our experiments we find there are two limitations of Algorithm \ref{alg:train_transfer}: {\bf (1)} we need a large number of inner maximization steps to compute the gap, which needs more training time. This is similar to adversarial robustness \citep{madry2018towards} which is slower than usual training. In order to overcome this difficulty we used pretraining from other algorithms in the experiments on  Office-Home and WILDS-FMoW; {\bf (2)} Moderate hyper-parameter tuning is needed. For example, we need to tune $N$ is Algorithm \ref{alg:train_transfer}, the learning rate (\texttt{lr}) of SGA and the choice of $\delta$. We find that taking $N = 20$ or $30$ is usually a good choice, and $\delta$ can be quite large such that the projection step is not taken. We take $\texttt{lr}=0.01$ for RotatedMNIST and  $\texttt{lr}=0.001$ for other datasets. 

\paragraph{Label shift}\; In order to show the difference with the well-known $\Hc$-divergence \citep{ben2007analysis}, we compute the label shifts in the PACS dataset. As shown in \citet{zhao2019learning}, the optimal joint error $\e_\Sc(h^*) + \e_\Sc(h^*)$ ($h^*$ is the optimal classifier that minimizes $\e_\Sc(h) + \e_\Tc(h)$) can be large under the shift of label distributions. We follow \citep{zhao2019learning} and compute the label shift between pairs of domains in the PACS dataset, measured by total variation. From Table \ref{tab:label_shift} we can see that the label shift is large in this case, and thus the $\Hc$-divergence bound \citep{ben2007analysis} can be quite loose. 
Comparably, our transfer measure bound Prop.~\ref{thm:target_bound} is tighter (c.f.~Prop.~\ref{prop:compare_trans_measure_hdiv}) and therefore still useful in practice.
\begin{table}[ht]
\centering
\caption{Label shift between pairs of domains in the PACS dataset. {\bf TV:} total variation; {\bf A:} art painting; {\bf C}: cartoon; {\bf P}: photo; {\bf S}: sketch. The total variation is always between zero and one.}
\label{tab:label_shift}
    \begin{tabular}{ccccc}
        {\bf TV} & {\bf A} & {\bf C} & {\bf P} & {\bf S} \\
        \midrule
        {\bf A} & 0.0 & 0.12 & 0.11 & 0.3 \\
        {\bf C} & 0.12 & 0.0 & 0.18 & 0.24 \\
        {\bf P} & 0.11 & 0.18 & 0.0 & 0.37 \\
        {\bf S} & 0.3 & 0.24 & 0.37 & 0.0 \\
        \bottomrule
    \end{tabular}

\end{table}


\vspace{-0.3em}
\section{Related Work} 
\vspace{-0.3em}


\textbf{Multi-task learning}\, Multi-task learning (MTL)~\citep{zhang2017survey} is related to but different from DG. In MTL, there are several tasks, and one hopes to improve the performance of each task by jointly training all the tasks simultaneously, utilizing the relationships between them. This is different from DG in the sense that in DG the target domain is unknown a priori, whereas in MTL the focus is more on better generalization on existing tasks that appear in training. Hence, there is no distribution shift in MTL per se. Furthermore, for MTL, the output spaces of different tasks are not necessarily the same.

\textbf{Zero-shot learning / Few-shot learning / Meta-learning}\, DG is different from zero-shot learning \citep{lampert2009learning}. In zero-shot learning, one has labeled training data and the goal is to make predictions on a new unseen label set. However, in DG the label set remains the same for the source and the target domains. On the other hand, the focus of few-shot learning is on fast adaptation, in the sense that the test distribution remains the same as the training distribution, but the learner can only have access to a few labeled samples. Domain generalization also shares similarity with meta-learning. However, in meta-learning, the learner is allowed to fine-tune over the target domain. In other words, the protocol of meta-learning allows access to a small amount of labeled data from future unseen domains. Meta-learning is more or less one specific method that is used to tackle few-shot learning. Because of the similarity, some meta-learning algorithms can be applied to DG~\citep{li2018learning}. 

\textbf{Self-supervised learning} Self-supervised learning (SSL) is a popular unsupervised feature learning approach \citep{ChenKNH20, he2020momentum, Grill2020bootstrap}. The goal of SSL is to learn invariant representations w.r.t.~different views of the same image. Although it is a promising feature learning method, it differs from our DG settings in the sense that no labels are used in SSL. 

\textbf{Domain generalization}
There have been a lot of old and new algorithms proposed for domain generalization. The simplest one is Empirical Risk Minimization (ERM), where we simply minimize the empirical risk of (the sum of) all source domains. In \citet{blanchard2011generalizing, muandet2013domain}, kernel methods for DG were proposed. \citet{arjovsky2019invariant} proposed Invariant Risk Minimization (IRM) which aims to learn invariant predictors across source domains, and follow-up discussions can be found in \cite{rosenfeld2020risks, koyama2021invariance}. Another approach is called distributional robustness \citep{volpi2018generalizing, Sagawa*2020Distributionally}, where the model is optimized over a worst-case distribution under the constraint that this distribution is generated from a small perturbation around the source distributions. In \citet{albuquerque2019generalizing}, a DG scheme based on distribution matching was proposed. Moreover, many domain adaptation algorithms can be directly adapted to the task of domain generalization, such as CORAL \citep{sun2016deep} and DANN \citep{ganin2015unsupervised}. Last but not least, we mention a concurrent work \citep{ye2021theoretical} on the theory of domain generalization, which focuses more on proposing a model selection rule based on accuracy and variation.

\textbf{Adversarial robustness} Our evaluation and training methods in \S\ref{sec:alg} are reminiscent of the adversarial training method~\citep{madry2018towards} in the literature of adversarial robustness. Perturbing the classifier in our case corresponds to perturbing the input data in adversarial robustness. From this perspective, our Transfer algorithm is parallel to the adversarial training method. It would be interesting to design certified robust feature embeddings, by analogy with certified robust classifiers~\citep{cohen2019certified}.

\vspace{-0.3em}
\section{Conclusions}
\vspace{-0.3em}

In this paper we formally define the notion of \emph{transferability} that we can quantify, estimate and compute.  
Our transfer measures can be understood as a special class of IPMs. They are weaker than total variation and even very dissimilar distributions could be transferable to each other. Our definition of transferability can also be naturally used to derive a generalization bound for prediction error on the target domain. Based on our theory, we propose algorithms to evaluate and improve the transferability by learning feature representations. Experiments show that, somewhat surprisingly, many existing algorithms are not quite learning transferable features. From this perspective, our transfer measures offer a novel way to evaluate the features learned from different DG algorithms. We hope that our proposal of transferability could draw the community's attention to further investigate and better understand the fundamental quantity that allows robust models under distribution shifts.

\textbf{Broader Impact}\; Reliable domain generalization models are important for practice use. Our work points out the reliability issue of DG algorithms. It is worth mentioning that our evaluation method can only disprove the transferability and survival of our attack method should not be treated as a warranty. Misunderstanding of it could lead to potential harm in practical applications.

\newpage

\section*{Acknowledgements and Funding Transparency Statement}
We thank the anonymous reviewers for their constructive comments as well as the area chair and the senior area chair for overseeing the review process. 
Resources used in preparing this research were provided, in part, by the Province of Ontario, the Government of Canada through CIFAR, and companies sponsoring the Vector Institute.
We thank NSERC and the Canada CIFAR AI Chairs program for funding support. GZ is also supported by David R. Cheriton scholarship and research grant from Vector Institute. HZ is supported by a startup funding from the Department of Computer Science at UIUC. Finally, we thank Vector Institute for providing the GPU cluster.



\bibliographystyle{plainnat}
\bibliography{DA,DG}

\numberwithin{equation}{section}

\appendix

\newpage

\section{Proofs}\label{app:proofs}

In this appendix, we present proofs of our theoretical results in the main paper. 

\setcounter{thm}{3}

\UpperRealizable*

\begin{proof}

We first note that:
\begin{align}
\Tt_{\Hf}(\Sc, \Tc) &= \sup_{h\in \Hf}|(\e_\Sc(h) - \e_\Tc(h) - (\e_\Sc^* - \e_\Tc^*))| \leq  \sup_{h\in \Hf}|\e_\Sc(h) - \e_\Tc(h)| +  |\e_\Sc^* - \e_\Tc^*| \tr
&= \Tt_{\Hf}^\rt(\Sc, \Tc) + |\e_\Sc^* - \e_\Tc^*|.
\end{align}
It suffices to prove that $|\e_\Sc^* - \e_\Tc^*| \leq \Tt^{\mathtt{r}}_\Hf(\Sc, \Tc)$. Suppose $\Tt^{\mathtt{r}}_\Hf(\Sc, \Tc) \leq \d$, and $h_\Sc^* \in \argmin_{h\in \Hf} \e_\Sc$. Then $\e_\Sc(h_\Sc^*) = \inf_{h\in \Hf} \e_\Sc = \e_\Sc^*$ and we have:
\be
\e_\Tc^* - \d \leq \e_\Tc(h_\Sc^*) - \d \leq \e_\Sc(h_\Sc^*) = \e_\Sc^*,
\en
where in the first inequality, we used $\e_\Tc^* = \inf_{h\in \Hf} \e_\Tc(h)$, and in the second inequality, we used that for any $h\in \Hf$, we have:
\be
\e_\Tc(h) - \e_\Sc(h) \leq \sup_{h\in \Hf} |\e_\Tc(h) - \e_\Sc(h)| = \Tt^{\mathtt{r}}_\Hf(\Sc, \Tc) \leq \d.
\en
Hence, we have $\e_\Tc^* - \e_\Sc^* \leq \Tt^{\mathtt{r}}_\Hf(\Sc, \Tc)$. Since $\Tt^{\mathtt{r}}_\Hf(\Sc, \Tc)$ is symmetric in $\Tc$ and $\Sc$, we also have $\e_\Sc^* - \e_\Tc^* \leq \Tt^{\mathtt{r}}_\Hf(\Sc, \Tc)$. Hence we have proved $|\e_\Sc^* - \e_\Tc^*| \leq \Tt^{\mathtt{r}}_\Hf(\Sc, \Tc)$. 
\end{proof}

\OneSide*

\begin{proof}
If $\mathtt{T}_{\Hf}(\Sc\|\Tc) = \sup_{h\in \Hf}\e_\Tc(h) - \e_\Tc^* - (\e_\Sc(h) - \e_\Sc^*)  \leq \d$, then for any $h \in \Hf = \argmin(\e_\Sc, \d_\Sc)$ we have $\e_\Sc(h) \leq \e_\Sc^* + \d_\Sc$ and
\be
\e_\Tc(h) - \e_\Tc^* \leq \e_\Sc(h) - \e_\Sc^* + \d \leq \d_\Sc + \d.
\en
Since we assume $\e_\Tc^* = \inf_{h\in \Hf} \e_\Tc(h) = \inf_{h\in \Hc} \e_\Tc(h)$, we obtain that $\Sc$ is $(\d_\Sc, \d_\Sc + \d)$-transferable to $\Tc$.  If $\Tt_\Hf(\Sc, \Tc) \leq \d$, then $\Tt_\Hf(\Sc \| \Tc) \leq \d$, and $\Sc$ is $(\d_\Sc, \d_\Sc + \d)$-transferable to $\Tc$.

If $\Sc$ is $(\d_\Sc, \d_\Tc)$-transferable to $\Tc$, then for any $h\in {}{\argmin}(\e_\Sc, \d_\Sc)$, we have:
\be\label{eq:bound_one_side_transfer}
\e_\Tc(h) - \e_\Tc^* - (\e_\Sc(h) - \e_\Sc^*) \leq \e_\Tc(h) - \e_\Tc^* \leq \d_\Tc.
\en
We also have $\Tt_\Hf(\Sc \| \Tc) \leq \d_\Tc$ from \eqref{eq:bound_one_side_transfer}. Moreover, we can derive that:
\be
\Tt_\Hf(\Tc \| \Sc) = \sup_{h\in \Hf}  (\e_\Sc(h) - \e_\Sc^*) -(\e_\Tc(h) - \e_\Tc^*) \leq \sup_{h\in \Hf}  (\e_\Sc(h) - \e_\Sc^*) \leq \d_\Sc,
\en
and thus $\Tt_\Hf(\Sc, \Tc) \leq \max\{\d_\Sc, \d_\Tc\}$ from the definition. 
\end{proof}

\setcounter{thm}{5}

\pmetric*

\begin{proof}
$\Tt_\Hf(\Sc, \Sc) = 0$ and $\Tt_\Hf(\Sc, \Tc) = \Tt_\Hf(\Tc, \Sc)$ follow from the definition. Denote the excess risk ${\rm exc}_\Sc(h) = \e_\Sc(h) - \e_\Sc^*$ (we could change the letter $\Sc$ here). The triangle inequality can be derived as: 
\begin{align}\label{eq:composable_one_side}
\Tt_\Hf(\Sc, \Tc) &= \sup_{h\in \Hf} |{\rm exc}_\Sc(h) - {\rm exc}_\Tc(h) |\tr 
&= \sup_{h\in \Hf} |{\rm exc}_\Sc(h) - {\rm exc}_\Pc(h) + {\rm exc}_\Pc(h) - {\rm exc}_\Tc(h) | \tr
&\leq\sup_{h\in \Hf} |{\rm exc}_\Sc(h) - {\rm exc}_\Pc(h)| +\sup_{h\in \Hf}  | {\rm exc}_\Pc(h) - {\rm exc}_\Tc(h) | \tr
&= \Tt_\Hf(\Sc, \Pc) + \Tt_\Hf(\Pc, \Tc).
\end{align}
Similarly, we can derive $\Tt_\Hf(\Sc \| \Tc) \leq \Tt_\Hf(\Sc \| \Pc) + \Tt_\Hf(\Pc \| \Tc)$.
\end{proof}

\upperTV*

\begin{proof}
Let us first recall the definition of IPMs: 
\be
d_{\Fc}(\Sc, \Tc) = \sup_{f\in \Fc} \left|\sum_y \int f(x, y) (p_\Sc(x, y) - p_\Tc(x, y)) dx  \right|.
\en
The symmetric transfer measure $\TwoTransferReal$ and the total variation can be represented as:
\be
\TwoTransferReal = d_{\Fc_\Hf}(\Sc, \Tc), \, \TV = d_{\Fc_{\rm TV}}(\Sc, \Tc),
\en
with $\Fc_{\rm \Hf} := \{(x, y)\mapsto \one(h(x)\neq y), h\in \Hf\}$, $\Fc_{\rm TV} = \{f : \|f\|_{\infty}\leq 1\}$ (see also \Cref{app:other_ipm}). 
The first sentence follows from $\Fc_{\Hf} \subseteq \Fc_{\rm TV}$, and the definition of IPM.

Now let us prove the case when $\Hf = \Hc_t$ is unconstrained.
Suppose $\Tt_{\Hc_t}^\rt(\Sc, \Tc) \leq \d$, then for any binary classifier $h$, we have $|\e_\Sc(h) - \e_\Tc(h)| \leq \d$. For simplicity, denote the difference of the two distributions as: 
\be
d(x, y) := p_\Sc(x, y) - p_\Tc(x, y).
\en
\noindent
Take $h_+$ to be the following (note that we allow the classifier to take a garbage value $0$): 
\be
h_+(x) = \begin{cases}
0 & \textrm{ if }x \in \Bc_{>>}:= \{x \in \Xc: d(x, 1) \geq 0\textrm{ and }d(x, -1) \geq 0\} \\
-1 & \textrm{ if }x \in \Bc_{><}: = \{x\in \Xc: d(x, 1) \geq 0, \, d(x, -1) < 0\} \\
1 & \textrm{ if }x\in \Bc_{<>}:= \{x\in \Xc: d(x, 1) < 0,\, d(x, -1) \geq 0\} \\
1 & \textrm{ if }x\in \Bc_{<<}^- := \{x\in \Xc: d(x, 1) < d(x, -1) < 0\} \\
-1 & \textrm{ if }x\in \Bc_{<<}^+ := \{x\in \Xc: 0 > d(x, 1) \geq d(x, -1)\}
\end{cases},
\en
and denote $\Bc_{<<}:= \Bc_{<<}^- \cup \Bc_{<<}^+$. Then we have from the definition:
\begin{align}\label{eq:error_diff_ST}
\e_\Sc(h_+) - \e_\Tc(h_+) &= \sum_y \int (p_\Sc(x, y) - p_\Tc(x, y)) \one(h_+(x)\neq y) dx \tr
&= \int d(x,1) \one(h_+(x)\neq 1) + d(x, -1)  \one(h_+(x)\neq -1) dx \tr
&= \int_{\Bc_{>>}}d(x, 1) + d(x, -1) dx + \int_{\Bc_{><}}d(x, 1) dx + \int_{\Bc_{<>}}d(x, -1) dx \tr
&- \int_{\Bc_{<<}} \min\{-d(x, 1), -d(x, -1)\}dx.
\end{align}
Moreover, one can verify that $\e_\Sc(h_+) -\e_\Tc(h_+) = \sup_{h\in \Hc_t} \e_\Sc(h) -\e_\Tc(h)$. Similarly, let us define $h_-$ to be:
\be
h_-(x) = \begin{cases}
0 & \textrm{ if }x \in \Bc_{<<}:= \{x \in \Xc: d(x, 1) < 0\textrm{ and }d(x, -1) < 0\} \\
-1 & \textrm{ if }x \in \Bc_{<>}: = \{x\in \Xc: d(x, 1) < 0, \, d(x, -1) \geq 0\} \\
1 & \textrm{ if }x\in \Bc_{><}:= \{x\in \Xc: d(x, 1) \geq 0,\, d(x, -1) < 0\} \\
-1 & \textrm{ if }x\in \Bc_{>>}^- := \{x\in \Xc: 0 \leq d(x, 1) < d(x, -1)\} \\
1 & \textrm{ if }x\in \Bc_{>>}^+ := \{x\in \Xc: d(x, 1) \geq d(x, -1) \geq 0\}
\end{cases}.
\en
Then we have from the definition:
\begin{align}\label{eq:error_diff_TS}
\e_\Tc(h_-) - \e_\Sc(h_-) &= -\sum_y \int (p_\Sc(x, y) - p_\Tc(x, y)) \one(h_-(x)\neq y) dx \tr
&= \int -d(x,1) \one(h_-(x)\neq 1) - d(x, -1)  \one(h_-(x)\neq -1) dx \tr
&= \int_{\Bc_{<<}} -d(x, 1) - d(x, -1) dx + \int_{\Bc_{><}}-d(x, -1) dx + \int_{\Bc_{<>}}-d(x, 1) dx \tr
&- \int_{\Bc_{>>}} \min\{d(x, 1), d(x, -1)\}dx.
\end{align}
Moreover, $\e_\Tc(h_-) -\e_\Sc(h_-) = \sup_{h\in \Hc_t} \e_\Tc(h) -\e_\Sc(h)$. Summing over \eqref{eq:error_diff_ST} and \eqref{eq:error_diff_TS} we have:
\begin{align}\label{eq:lower_bound_es_et}
2\sup_{h\in \Hc_t} |\e_\Sc(h) - \e_\Tc(h)| &\geq |\e_\Sc(h_+) - \e_\Tc(h_+)| + |\e_\Tc(h_-) - \e_\Sc(h_-)| \tr
&\geq \e_\Sc(h_+) - \e_\Tc(h_+) + \e_\Tc(h_-) - \e_\Sc(h_-) \tr
&= \int_{\Bc_{>>}} \max\{d(x, 1), d(x, -1)\}dx + \int_{\Bc_{><}}d(x, 1) - d(x, -1) dx \tr
&+\int_{\Bc_{<>}}-d(x, 1) + d(x, -1) dx + \int_{\Bc_{<<}} \max\{-d(x, 1), -d(x, -1)\}dx.
\end{align}
On the other hand, we can compute the total variation between $\Sc$ and $\Tc$: 
\begin{align}\label{eq:dTV_ST}
d_{\rm TV}(\Sc, \Tc) &=  \sum_y \int |p_\Sc(x, y) - p_\Tc(x, y)| dx \tr
&= \int |d(x, 1)| + |d(x, -1)| dx \tr
&= \int_{\Bc_{>>}} d(x, 1) + d(x, -1) dx + \int_{\Bc_{><}}d(x, 1) - d(x, -1) dx \tr
&+\int_{\Bc_{<>}}-d(x, 1) + d(x, -1) dx + \int_{\Bc_{<<}} - d(x, 1) - d(x, -1)dx  \tr
&\leq 2\int_{\Bc_{>>}} \max\{ d(x, 1), d(x, -1) \} dx + 2\int_{\Bc_{><}}d(x, 1) - d(x, -1) dx \tr
&+2\int_{\Bc_{<>}}-d(x, 1) + d(x, -1) dx + \int_{\Bc_{<<}} 2\max\{- d(x, 1) ,- d(x, -1)\} dx \tr
&\leq 4\sup_{h\in \Hc_t} |\e_\Sc(h) - \e_\Tc(h)| = 4 \Tt_{\Hc_t}^\rt(\Sc, \Tc),
\end{align}
where in the last line we used \eqref{eq:lower_bound_es_et}. 
\end{proof}
In the proof above, we assumed a classifier $h\in \Hf$ is allowed to take a garbage value $0$ if it is not sure which label to choose. This is a mild assumption that can hold in practice. 

\renewcommand{\thethm}{\arabic{thm}'}
\setcounter{thm}{8}

\begin{restatable}[\textbf{reduction of estimation error}]{lem}{ReductionEstApp}\label{app_lem:transfer_to_normal_error}
Suppose $\widehat{\Sc}$ and $\widehat{\Tc}$ are i.i.d.~sample distributions drawn from distributions of $\Sc$ and $\Tc$, then for any $\Hf\subseteq \Hc$ we have:
\begin{align}
& \label{eq:estimation_one_side_transfer_measure_app}\Tt_\Hf(\Sc\|\Tc) \leq \Tt_\Hf(\widehat{\Sc}\| \widehat{\Tc})+ 2{\rm est}_{\Hf}(\Sc) + 2{\rm est}_{\Hf}(\Tc), \\
& \Tt_\Hf(\Sc, \Tc) \leq \Tt_\Hf(\widehat{\Sc}, \widehat{\Tc}) + 2{\rm est}_{\Hf}(\Sc) + 2{\rm est}_{\Hf}(\Tc) , \\
& \Tt^{\mathtt{r}}_\Hf(\Sc, \Tc) \leq \Tt^{\mathtt{r}}_\Hf(\widehat{\Sc}, \widehat{\Tc}) + {\rm est}_{\Hf}(\Sc) + {\rm est}_{\Hf}(\Tc),
\end{align}
where we define 
\be\label{eq:estimate_domains}
{\rm est}_{\Hf}(\Sc) = \sup_{h\in \Hf} |\e_\Sc(h) - \e_{\widehat{\Sc}}(h)|,\, {\rm est}_{\Hf}(\Tc) = \sup_{h\in \Hf} |\e_\Tc(h) - \e_{\widehat{\Tc}}(h)|.
\en
\end{restatable}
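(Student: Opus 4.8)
The plan is to prove each of the three inequalities by the same mechanism: bound the population transfer measure by its empirical counterpart plus controllable slack terms, where the slack comes entirely from the estimation errors $\mathrm{est}_{\Hf}(\Sc)$ and $\mathrm{est}_{\Hf}(\Tc)$ defined in \eqref{eq:estimate_domains}. The key observation is that each transfer measure is a supremum over $h\in\Hf$ of some linear combination of the four quantities $\e_\Sc(h)$, $\e_\Tc(h)$, $\e_\Sc^*$, $\e_\Tc^*$, and that replacing each population error by its empirical version perturbs it by at most the corresponding estimation error. So first I would record the elementary fact that for every $h\in\Hf$, $|\e_\Sc(h)-\e_{\widehat\Sc}(h)|\le \mathrm{est}_\Hf(\Sc)$ and likewise for $\Tc$, and moreover that the optimal errors satisfy $|\e_\Sc^* - \e_{\widehat\Sc}^*|\le \mathrm{est}_\Hf(\Sc)$ (since $\e_\Sc^*=\inf_{h\in\Hf}\e_\Sc(h)$ is an infimum of functions each uniformly close to the empirical ones, and an infimum of uniformly-close functions is itself close).

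Granting those two facts, I would handle the realizable measure $\Tt^\rt_\Hf(\Sc,\Tc)=\sup_{h\in\Hf}|\e_\Sc(h)-\e_\Tc(h)|$ first, since it is the simplest and only involves the errors themselves, not the optima. For each fixed $h$,
\begin{align}
|\e_\Sc(h)-\e_\Tc(h)| &\leq |\e_{\widehat\Sc}(h)-\e_{\widehat\Tc}(h)| + |\e_\Sc(h)-\e_{\widehat\Sc}(h)| + |\e_\Tc(h)-\e_{\widehat\Tc}(h)| \nonumber\\
&\leq |\e_{\widehat\Sc}(h)-\e_{\widehat\Tc}(h)| + \mathrm{est}_\Hf(\Sc) + \mathrm{est}_\Hf(\Tc), \nonumber
\end{align}
and taking $\sup_{h\in\Hf}$ on both sides yields the third inequality, with exactly one copy of each estimation error. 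For the one-sided and symmetric measures the same telescoping is applied to the four-term expression $\e_\Tc(h)-\e_\Tc^*-(\e_\Sc(h)-\e_\Sc^*)$; here each of $\e_\Tc(h)$, $\e_\Tc^*$, $\e_\Sc(h)$, $\e_\Sc^*$ is replaced by its empirical version, and since the $\Tc$-quantities $\e_\Tc(h)$ and $\e_\Tc^*$ each contribute an error of $\mathrm{est}_\Hf(\Tc)$ (and likewise for $\Sc$), the total slack is $2\,\mathrm{est}_\Hf(\Sc)+2\,\mathrm{est}_\Hf(\Tc)$, explaining the factor of $2$ in \eqref{eq:estimation_one_side_transfer_measure_app} and the symmetric bound. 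The symmetric case is then immediate because $\Tt_\Hf(\Sc,\Tc)=\max\{\Tt_\Hf(\Sc\|\Tc),\Tt_\Hf(\Tc\|\Sc)\}$ and the max of two bounded quantities is bounded by the common bound.

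The one genuinely non-routine step, and the place I would be most careful, is the bound $|\e_\Sc^*-\e_{\widehat\Sc}^*|\le\mathrm{est}_\Hf(\Sc)$ relating the population and empirical \emph{optimal} errors. This needs the standard fact that for two functions $f,g$ with $\sup_h|f(h)-g(h)|\le\epsilon$ one has $|\inf_h f(h)-\inf_h g(h)|\le\epsilon$; I would verify it quickly via $\inf_h f(h)\le f(h^*_g)\le g(h^*_g)+\epsilon=\inf_h g(h)+\epsilon$ for a near-minimizer $h^*_g$ of $g$, and symmetrically. Everything else is bookkeeping of the triangle inequality applied termwise and then taking suprema, so I do not anticipate any analytic difficulty beyond correctly tracking how many times each estimation error appears.
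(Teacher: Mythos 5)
Your proposal is correct and follows essentially the same route as the paper's proof: a termwise add-and-subtract of the empirical quantities followed by the triangle inequality and a supremum, together with the key auxiliary bound $|\e_\Sc^*-\e_{\widehat{\Sc}}^*|\le \mathrm{est}_\Hf(\Sc)$ obtained from the stability of infima under uniform perturbation (the paper proves this via a minimizer of $\e_\Tc$ exactly as you sketch, and your use of a near-minimizer is if anything slightly more careful when the infimum is not attained). The bookkeeping of the factors of $2$ and the reduction of the symmetric case to the one-sided case also match the paper's argument.
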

\begin{proof}
We prove the first inequality for example and others follow similarly. Note that:
\begin{align}
\e_\Tc(h) - \e_{\Tc}^* - \e_\Sc(h) + \e_\Sc^* &= \e_\Tc(h) - \e_{\widehat{\Tc}}(h) + \e_{\widehat{\Tc}}(h) - \e_{\Tc}^* - \e_{\widehat{\Tc}}^*  + \e_{\widehat{\Tc}}^*  - \e_\Sc(h)- \e_{\widehat{\Sc}}(h) + \e_{\widehat{\Sc}}(h) + \tr
&+ \e_\Sc^* - \e_{\widehat{\Sc}}^*  + \e_{\widehat{\Sc}}^* \tr
&= (\e_{\widehat{\Tc}}(h) - \e_{\hTc}^* - \e_{\hSc}(h) + \e_{\hSc}^*) + (\e_\Tc(h) - \e_{\hTc}(h)) +  (\e_{\widehat{\Tc}}^* - \e_{\Tc}^*) + 
\tr
&+ (\e_{\widehat{\Sc}}(h) - \e_\Sc(h)) + (\e_\Sc^* - \e_{\widehat{\Sc}}^*). 
\end{align}
Taking the supremum on both sides we have:
\begin{align}
\Tt_\Hf(\Sc\|\Tc) \leq \Tt_\Hf(\widehat{\Sc}\| \widehat{\Tc}) + \sup_{h\in \Hf} |\e_\Tc(h) - \e_{\widehat{\Tc}}(h)| + \e_{\widehat{\Tc}}^* -  \e_{{\Tc}}^* + \sup_{h\in \Hf}| \e_{\widehat{\Sc}}(h) - \e_{{\Sc}}(h)| + \e_\Sc^* - \e_{\hat{\Sc}}^*.
\end{align}
Take $h_{\Tc}^* \in \argmin_{h\in \Hf} \e_\Tc(h)$ to be an optimal classifier. We can derive:
\be
\e_{\hTc}^* \leq \e_{\hTc}(h_\Tc^*) \leq \e_{\Tc}^*(h_\Tc^*) + {\rm est}_{\Hf}(\Tc) = \e_{\Tc}^* +  {\rm est}_{\Hf}(\Tc).
\en
Therefore, $\e_{\hTc}^* - \e_{\Tc}^* \leq {\rm est}_{\Hf}(\Tc)$. Similarly, $\e_{\Sc}^* - \e_{\hSc}^* \leq  {\rm est}_{\Hf}(\Sc)$. Combining all those above we obtain \eqref{eq:estimation_one_side_transfer_measure_app}. 
\end{proof}

\renewcommand{\thethm}{\arabic{thm}'}

\begin{restatable}[\textbf{estimation error with Rademacher complexity}]{thm}{RadeEstApp}\label{thm:estimate_transfer_measures_app}
Given 0-1 loss $\e_\Dc = \e_\Dc^{\rm 0-1}$, suppose $\widehat{\Sc}$ and $\widehat{\Tc}$ are sample sets with $m$ and $k$ samples drawn i.i.d.~from distributions $\Sc$ and $\Tc$, respectively. For any $\Hf\subseteq \Hc$ any of the following holds w.p.~$1 - \d$:
\begin{align}
& \label{eq:estimation_one_side_transfer_measure_Rade}\Tt_\Hf(\Sc\|\Tc) \leq \Tt_\Hf(\widehat{\Sc}\| \widehat{\Tc})+ 4\Rf_m(\Fc_\Hf) + 4\Rf_k(\Fc_\Hf)  + 2\sqrt{\frac{\log(4/\d)}{2m}} + 2\sqrt{\frac{\log(4/\d)}{2k}}, \\
& \Tt_\Hf(\Sc, \Tc) \leq \Tt_\Hf(\widehat{\Sc}, \widehat{\Tc}) + 4\Rf_m(\Fc_\Hf) + 4\Rf_k(\Fc_\Hf)  + 2\sqrt{\frac{\log(4/\d)}{2m}} + 2\sqrt{\frac{\log(4/\d)}{2k}}, \\
& \Tt^{\mathtt{r}}_\Hf(\Sc, \Tc) \leq \Tt^{\mathtt{r}}_\Hf(\widehat{\Sc}, \widehat{\Tc}) + 2\Rf_m(\Fc_\Hf) + 2\Rf_k(\Fc_\Hf) + \sqrt{\frac{\log(4/\d)}{2m}} + \sqrt{\frac{\log(4/\d)}{2k}},
\end{align}
where $\Fc_{\Hf} := \{(z, y) \mapsto \one(h(z) \neq y),\, h\in \Hf\}$. If furthermore, $\Hf$ is a set of binary classifiers with labels $\{-1, 1\}$, then $2 \Rf_m(\Fc_{\Hf}) = \Rf_m(\Hf), \, 2 \Rf_k(\Fc_{\Hf}) = \Rf_k(\Hf)$.
\end{restatable}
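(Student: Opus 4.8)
The plan is to combine the reduction established in Lemma~\ref{app_lem:transfer_to_normal_error} with a standard Rademacher-based uniform convergence bound applied separately to each domain. By that lemma, it suffices to control the two estimation errors ${\rm est}_\Hf(\Sc)$ and ${\rm est}_\Hf(\Tc)$, since each of the three transfer measures is bounded by its empirical counterpart plus a multiple of these quantities (a factor of $2$ for $\Tt_\Hf(\Sc\|\Tc)$ and $\Tt_\Hf(\Sc,\Tc)$, and a factor of $1$ for $\Tt^\rt_\Hf(\Sc,\Tc)$). This already explains the shape of all three inequalities, so the remaining work reduces to bounding a single estimation error of the form $\sup_{h\in\Hf}|\e_\Dc(h)-\e_{\widehat\Dc}(h)|$.

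First I would bound ${\rm est}_\Hf(\Sc)=\sup_{h\in\Hf}|\e_\Sc(h)-\e_{\widehat\Sc}(h)|$. Writing $\e_\Dc(h)=\Eb_{(z,y)\sim\Dc}[f(z,y)]$ with $f(z,y)=\one(h(z)\neq y)\in\{0,1\}$ from $\Fc_\Hf$, each empirical average has bounded differences $1/m$, so McDiarmid's inequality together with the symmetrization argument gives the one-sided bound $\sup_{f\in\Fc_\Hf}(\Eb_\Sc[f]-\Eb_{\widehat\Sc}[f])\le 2\Rf_m(\Fc_\Hf)+\sqrt{\log(1/\d')/(2m)}$ with probability $1-\d'$, and symmetrically for the reversed deviation. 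Applying this to the four one-sided events (upper and lower deviation for each of $\Sc$ and $\Tc$), each at confidence level $\d/4$, and taking a union bound yields, with probability $1-\d$,
\begin{align}
{\rm est}_\Hf(\Sc) \le 2\Rf_m(\Fc_\Hf) + \sqrt{\frac{\log(4/\d)}{2m}}, \quad {\rm est}_\Hf(\Tc) \le 2\Rf_k(\Fc_\Hf) + \sqrt{\frac{\log(4/\d)}{2k}}. \nonumber
\end{align}
Substituting these into the three inequalities of Lemma~\ref{app_lem:transfer_to_normal_error} and carrying the factors through gives exactly the three stated bounds.

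Finally, for the claim about binary classifiers, I would use the identity $\one(h(z)\neq y)=(1-yh(z))/2$, valid whenever $y,h(z)\in\{-1,1\}$. Plugging this into $\Rf_m(\Fc_\Hf)=\Eb_\sigma\sup_{h\in\Hf}\frac1m\sum_i\sigma_i\one(h(z_i)\neq y_i)$, the term $\frac{1}{2m}\sum_i\sigma_i$ is $h$-independent and so factors out of the supremum and vanishes in expectation. In the remaining term the product $\sigma_i y_i$ is distributed identically to $\sigma_i$ (and $-\sigma_i$ likewise), so the $y_i$ drop out, giving $\Rf_m(\Fc_\Hf)=\tfrac12\Rf_m(\Hf)$, i.e.\ $2\Rf_m(\Fc_\Hf)=\Rf_m(\Hf)$, and identically with $k$ in place of $m$.

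The hard part here is not conceptual but lies in the constant bookkeeping: one must split the desired failure probability across the four one-sided deviation events so that the union bound produces exactly $\log(4/\d)$ inside the square roots, and track where the factor $2$ from the reduction lemma combines with the factor $2$ from symmetrization to produce the $4\Rf_m(\Fc_\Hf)$ terms for the first two measures versus the $2\Rf_m(\Fc_\Hf)$ term for the realizable one. Everything else is a direct application of standard uniform-convergence machinery \citep{bartlett2002rademacher,shalev2014understanding}.
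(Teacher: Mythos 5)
Your proposal is correct and follows essentially the same route as the paper's proof: reduce via Lemma~\ref{app_lem:transfer_to_normal_error} to the two per-domain estimation errors, bound each by $2\Rf(\Fc_\Hf)+\sqrt{\log(4/\d)/(2\cdot)}$ using McDiarmid plus symmetrization with the failure probability split across the one-sided deviation events, and union bound. The only cosmetic difference is that you derive the identity $2\Rf_m(\Fc_\Hf)=\Rf_m(\Hf)$ from $\one(h(z)\neq y)=(1-yh(z))/2$ directly, whereas the paper cites Lemma~3.4 of \citet{mohri2018foundations} for the same fact.
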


\begin{proof}

\renewcommand{\thethm}{\arabic{thm}}

\noin We use the following lemma, which a slight adaptation of \citet{mohri2018foundations}, Theorem 3.3: 
\begin{lem}
Let $\Fc$ be a family of functions from $\Xc\times \Yc$ to $[0, 1]$. Then for any $\d > 0$, with probability at least $1 - \d$ over the draw from a distribution $\Sc$ of an i.i.d.~samples $S$ of size $m$, $\{w_i\}_{i=1}^m$, the following holds for all $f \in \Fc$, 
\be\label{eq:diff_empirical_true_mean}
\left|\Eb[f(w)] - \frac{1}{m}\sum_{i=1}^m f(w_i)\right| \leq 2\Rf_m(\Fc) + \sqrt{\frac{\log(2/\d)}{2m}}.
\en
\end{lem}
\begin{proof}
From \citet{mohri2018foundations}, Theorem 3.3, we know with probability at least $1 - \d/2$, the following holds 
\be
\Eb[f(w)] - \frac{1}{m}\sum_{i=1}^m f(w_i) \leq  2\Rf_m(\Fc) + \sqrt{\frac{\log(2/\d)}{2m}}.
\en
This result relies on applying McDiarmid's inequality on $\Phi(S) = \sup_{f\in \Fc} \Eb[f] - \frac{1}{m}\sum_{i=1}^m f(w_i)$. By repeating the same proof and applying McDiarmid's inequality on $\Phi'(S) = \sup_{f\in \Fc} \frac{1}{m}\sum_{i=1}^m f(w_i) - \Eb[f]$, we conclude that 
with probability at least $1 - \d/2$, the following holds 
\be
\frac{1}{m}\sum_{i=1}^m f(w_i) - \Eb[f(w)] \leq 2 \Rf_m(\Fc) + \sqrt{\frac{\log(2/\d)}{2m}}.
\en
Therefore, with union bound we obtain that with probability (w.p.) at least $1 - \d$, we have \eqref{eq:diff_empirical_true_mean}. 

\end{proof}

\noin Let us now go back to the proof of \Cref{thm:estimate_transfer_measures_app}. Taking $\Fc_{\Hf} = \{(z, y) \mapsto \one(h(z) \neq y),\, h\in \Hf\}$, we can derive from the theorem above that w.p.~at least $1-\d$:
\begin{align}\label{eq:upper_bound_est_Rademacher}
{\rm est}_{\Hf}(\Sc) = \sup_{h\in \Hf}|\e_\Sc(h) -  \e_{\hat{\Sc}}(h)| \leq 2\Rf_m(\Fc_{\Hf}) + \sqrt{\frac{\log(2/\d)}{2m}}.
\end{align}

With \eqref{eq:upper_bound_est_Rademacher} we know that with probability at least $1-\d/2$:
\begin{align}
{\rm est}_{\Hf}(\Sc) = \sup_{h\in \Hf}|\e_\Sc(h) -  \e_{\hat{\Sc}}(h)| \leq 2 \Rf_m(\Fc_{\Hf}) + \sqrt{\frac{\log(4/\d)}{2m}},
\end{align}
and w.p.~at least $1-\d/2$:
\begin{align}
{\rm est}_{\Hf}(\Tc) = \sup_{h\in \Hf}|\e_\Tc(h) -  \e_{\hat{\Tc}}(h)| \leq 2 \Rf_k(\Fc_{\Hf}) + \sqrt{\frac{\log(4/\d)}{2k}},
\end{align}
therefore from union bound w.p.~at least $1- \d$ we have:
\begin{align}\label{eq:union_bound_rademacher}
{\rm est}_{\Hf}(\Sc) + {\rm est}_{\Hf}(\Tc)  \leq 2 \Rf_m(\Fc_{\Hf}) + 2 \Rf_k(\Fc_{\Hf})  + \sqrt{\frac{\log(4/\d)}{2m}} + \sqrt{\frac{\log(4/\d)}{2k}}.
\end{align}
Moreover, from Lemma 3.4 of \citet{mohri2018foundations} we have 
\be
2 \Rf_m(\Fc_{\Hf}) = \Rf_m(\Hf), \, 2 \Rf_k(\Fc_{\Hf}) = \Rf_k(\Hf),
\en
for binary classification. 
The rest follows from Lemma \ref{app_lem:transfer_to_normal_error}. 
\end{proof}

\setcounter{thm}{10}
\renewcommand{\thethm}{\arabic{thm}}

\transferSurrogate*

\begin{proof}
Suppose \eqref{eq:surrogate_measure} holds and thus for any $h \in  \argmin(\e_\Sc , \d_\Sc)$ we have:
\be
\e_\Tc^{\rm 0-1}(h)  \leq \e_\Tc (h) \leq (\e_\Sc (h) -  \e_\Sc^*) + \d +  \e_\Tc^* \leq \d_{\Sc} + \d +  \e_\Tc^*.
\en
The rest follows from definitions.
\end{proof}

\begin{restatable}[\textbf{domain generalization guarantee}]{prop}{DGGuarantee}\label{prop:new_domain}
Suppose we have $n$ distributions $\Sc^g_1, \dots, \Sc^g_n$ which satisfy 
\be\label{eq:max_minus_min}
\sup_{h\in \Hf}\max_i \e_{\Sc^g_i}(h) - \min_j \e_{\Sc^g_j}(h) \leq \d.
\en
Then for any two distributions $\Tc^g_1, \Tc^g_2$ in ${\rm conv}(\Sc^g_1, \dots, \Sc^g_n)$, we have $\Tt_\Hf^{\rt}(\Tc^g_1, \Tc^g_2) \leq \d$. 
\end{restatable}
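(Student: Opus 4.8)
The plan is to prove $\Tt_\Hf^{\rt}(\Tc^g_1, \Tc^g_2) \leq \d$ by first establishing the bound for the source distributions $\Sc^g_i$ themselves, then extending it to arbitrary convex combinations. The key observation is that the realizable transfer measure is defined as $\Tt_\Hf^{\rt}(\Tc^g_1, \Tc^g_2) = \sup_{h\in \Hf} |\e_{\Tc^g_1}(h) - \e_{\Tc^g_2}(h)|$, so I need to control the difference $\e_{\Tc^g_1}(h) - \e_{\Tc^g_2}(h)$ uniformly over $h \in \Hf$. The hypothesis \eqref{eq:max_minus_min} tells me that for each fixed $h \in \Hf$, the spread of the loss values $\{\e_{\Sc^g_i}(h)\}_{i=1}^n$ across source domains is at most $\d$, i.e. $\max_i \e_{\Sc^g_i}(h) - \min_j \e_{\Sc^g_j}(h) \leq \d$.

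\textbf{Key step: linearity of the loss in the distribution.} The crucial fact I would exploit is that the expected loss $\e_{\Dc}(h) = \Eb_{(z,y)\sim \Dc}[\ell(h(z),y)]$ is \emph{linear} (affine) in the distribution $\Dc$. Concretely, if $\Tc^g = \sum_i \a_i \Sc^g_i$ is a mixture with $\a_i \geq 0$ and $\sum_i \a_i = 1$, then $\e_{\Tc^g}(h) = \sum_i \a_i \e_{\Sc^g_i}(h)$. This means that for any convex combination, the loss value is a weighted average of the source loss values, hence it lies between $\min_j \e_{\Sc^g_j}(h)$ and $\max_i \e_{\Sc^g_i}(h)$. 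Writing $\Tc^g_1 = \sum_i \a_i \Sc^g_i$ and $\Tc^g_2 = \sum_i \b_i \Sc^g_i$, I would then bound, for any fixed $h \in \Hf$,
\begin{align}
\e_{\Tc^g_1}(h) - \e_{\Tc^g_2}(h) &= \sum_i \a_i \e_{\Sc^g_i}(h) - \sum_j \b_j \e_{\Sc^g_j}(h) \nonumber \\
&\leq \max_i \e_{\Sc^g_i}(h) - \min_j \e_{\Sc^g_j}(h) \leq \d, \nonumber
\end{align}
since any convex average of the source losses is at most $\max_i \e_{\Sc^g_i}(h)$ and at least $\min_j \e_{\Sc^g_j}(h)$. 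By symmetry (swapping $\Tc^g_1$ and $\Tc^g_2$) the same bound holds for the absolute value.

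\textbf{Taking the supremum and concluding.} The inequality above holds for every individual $h \in \Hf$ with the \emph{same} bound $\d$, and critically the bound does not depend on $h$ because \eqref{eq:max_minus_min} already takes the supremum over $\Hf$ on the left-hand side. Therefore taking the supremum over $h \in \Hf$ on both sides of $|\e_{\Tc^g_1}(h) - \e_{\Tc^g_2}(h)| \leq \d$ yields $\Tt_\Hf^{\rt}(\Tc^g_1, \Tc^g_2) = \sup_{h\in \Hf} |\e_{\Tc^g_1}(h) - \e_{\Tc^g_2}(h)| \leq \d$, which is exactly the desired conclusion.

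\textbf{Main obstacle.} The proof is essentially routine given the affine structure of expected risk, so there is no deep difficulty. The only point requiring a little care is the bookkeeping around the supremum: I must make sure the per-$h$ spread bound is uniform, which follows precisely because the hypothesis \eqref{eq:max_minus_min} places the $\sup_{h\in\Hf}$ \emph{outside} the max-minus-min expression. If the hypothesis had only guaranteed a pointwise-in-$h$ bound without the uniform supremum, the argument would still go through verbatim (since I bound each $h$ separately and then take sup), so in fact the statement is somewhat robust. The one genuinely necessary ingredient is the linearity of $\e_{\Dc}(h)$ in $\Dc$, together with the definition of $\conv(\Sc^g_1,\dots,\Sc^g_n)$ as the set of mixtures of the source distributions (as clarified in the text following Thm~\ref{thm:alg_framework}); I would state this linearity explicitly at the start as the foundation of the whole argument.
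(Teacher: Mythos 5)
Your proof is correct and follows essentially the same route as the paper's: both rest on the linearity of $\e_{\Dc}(h)$ in the distribution $\Dc$, bound the difference of the two mixture risks by $\max_i \e_{\Sc^g_i}(h) - \min_j \e_{\Sc^g_j}(h)$ for each fixed $h$, and then take the supremum over $\Hf$. The only cosmetic difference is that the paper derives the intermediate inequality via the product weights $\pi_i\pi'_j$ and the triangle inequality, whereas you observe directly that a convex average lies between the min and the max; the substance is identical.
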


\begin{proof}
For the ease of notation we omit the superscript $g$ in the proof. We treat distributions as probabilistic measures and thus for any $h\in \Hc$, $\e_\Dc(h)$ is a linear function of $\Dc$, if we treat $\Dc$ as a probability measure. It suffices to prove for a linear function $f$, we have:
\be
|f(\sum_{i}\pi_i \Sc_i) - f(\sum_{j}\pi'_j \Sc_j)| \leq \max_i f(\Sc_i) - \min_j f(\Sc_j),
\en
where $\pi_i, \pi'_j \geq 0$ and $\sum_i \pi_i = \sum_j \pi'_j = 1$. This is because 
\begin{align}\label{eq:super_max_min}
|f(\sum_{i}\pi_i \Sc_i) - f(\sum_{j}\pi'_j \Sc_j)|  &= |f(\sum_{i, j}\pi_i \pi'_j \Sc_i) - f(\sum_{i, j}\pi_i \pi'_j \Sc_j))| \tr
&= |f(\sum_{i, j}\pi_i \pi'_j (\Sc_i - \Sc_j))| \tr
&= |\sum_{i, j}\pi_i \pi'_j f(\Sc_i - \Sc_j)| \tr
&\leq \sum_{i, j}\pi_i \pi'_j |f(\Sc_i - \Sc_j)| \tr
&\leq \max_{i, j}|f(\Sc_i) - f(\Sc_j)| \tr
&= \max_i f(\Sc_i) - \min_j f(\Sc_j).
\end{align}
The second and the third lines follow from the linearity of $f$ and the fourth line follows from triangle inequality. Therefore, taking $ f: \Dc \mapsto \e_\Dc(h)$ for any $h\in \Hf$, and $\Tc_1 = \sum_{i}\pi_i \Sc_i$, $\Tc_2 = \sum_j \pi'_j \Sc_j$, we can derive from \eqref{eq:super_max_min} that:
\begin{align}
|\e_{\Tc_1}(h) -  \e_{\Tc_2}(h)| \leq \max_i \e_{\Sc_i}(h) - \min_j \e_{\Sc_j}(h),
\end{align}
for any $h\in \Hf$. Taking the supremum over $h$ on both sides we finish the proof. 
\end{proof}

\OptGuarantee*

\begin{proof}
From \eqref{eq:optimization_guarantee} we know that:
\be\label{eq:immediate_consequence}
{\rm max}_i \e_{\Sc_i}^{\rm  }(h'\circ g)  - {\rm min}_i \e_{\Sc_i}^{\rm  }(h'\circ g) \leq {\eta},
\en
for any $h' = q(\theta', \cdot)$ and $\|\theta' - \theta\|_2 \leq \d$. Taking $h' = h$, we obtain that:
\begin{align}
\max_i \e_{\Sc_i}(h\circ g) &= \min_i \e_{\Sc_i}(h\circ g) + \max_i \e_{\Sc_i}(h\circ g) - \min_i \e_{\Sc_i}(h\circ g) \tr 
&\leq \frac{1}{n}\sum_{i=1}^n \e_{\Sc_i}^{\rm  }(h \circ g)  +  \max_i \e_{\Sc_i}(h\circ g) - \min_i \e_{\Sc_i}(h\circ g)  \tr
&\leq \eta.
\end{align}
In other words, for any $i \in [n] = \{1, \dots, n\}$, $\e_{\Sc_i}(h\circ g) \leq \eta$ holds. 
We have from Theorem \ref{thm:parametrize} $\|h - h'\|_{1, \Dc} \leq L_\theta \d$ for any probability measure $\Dc$. Using Definition \ref{def:Lipschitz_continuous_functional} we know that $|\e_{\Sc_i}^{\rm  }(h'\circ g) - \e_{\Sc_i}^{\rm  }(h\circ g)| \leq L_{\ell} L_\theta \d$. Therefore, for any $h' \in \Hf$, we have:
\be
\e_{\Sc_i}^{\rm  }(h' \circ g) \leq \e_{\Sc_i}^{\rm  }(h \circ g) + L_{\ell} L_\theta \d \leq \eta + L_{\ell} L_\theta \d.
\en
From \eqref{eq:immediate_consequence} and Prop.~\ref{prop:new_domain}, for any $\Tc \in \conv(\Sc_1, \dots, \Sc_n)$ and any $\Sc_i$, $\Tt_{\Hf}^\rt(\Tc, \Sc_i) \leq \eta$ holds, and thus from the definition of $\Tt_{\Hf}^\rt$ we have the third inequality of
\eqref{eq:alg_result_to_prove}. The first inequality of \eqref{eq:alg_result_to_prove} follows from \Cref{prop:new_domain}. 
\end{proof}

\section{Additional theoretical results}

In this appendix we present additional theoretical results as supplementary material. 

\subsection{Necessity of excess risks}\label{app:counter_realizable}

We give an example where the realizable transfer measure is large but the source domain is transferable to the target domain.  

\begin{eg}
Consider two distributions:
\be
p_\Sc(X, Y) = \begin{cases}
0.5 & Y = 1,\,-1\leq X < 0,\\
0.5 & Y = -1,\,0\leq X < 1
\end{cases},\, p_\Tc(X, Y) = \begin{cases}
0.2 & Y = 1,\,-1\leq X < 0,\\
0.2 & Y = -1,\,0\leq X < 1 \\
0.3 & Y = 1,\,-1\leq X < 1 \\
0.3 & Y = -1,\,-1\leq X < 1
\end{cases},
\en
and the hypothesis class $\Hc$ to be the same as \Cref{eg:d_transfer_is_not_sim}. Then $\Sc$ is $(0.5\d, 0.2\d)$-transferable (Definition \ref{def:transfer}) for small $\d$. However, for any $\Hf$ that includes the optimal (source and target) classifier $h_0$ we have
\be
\mathtt{T}^\rt_{\Hf}(\Sc, \Tc) = \sup_{h\in \Hf}|\e_\Sc(h) - \e_\Tc(h)| \geq |\e_\Sc(h_0) - \e_\Tc(h_0)| = 0.3.
\en
\end{eg}


The example above shows that when the optimal errors of two domains are dissimilar, simply measuring the difference of errors cannot fully describe the transferability. Instead, we should consider the difference of the \emph{excess risks} as in \Cref{def:transfer}.

\subsection{Other IPMs}\label{app:other_ipm}

Different choices the the function class in \eqref{eq:transfer_ipm} could lead to various definitions \citep{sriperumbudur2009integral}:
\begin{itemize}
    \item maximum mean discrepancy (MMD): $\Fc_{\rm MMD} = \{f: \|f\|_{\rm Hilbert} \leq 1\}$ where the norm $\|f\|_{\rm Hilbert}$ is defined on a reproducing kernel Hilbert space (RKHS).
    \item Wasserstein distance: $\Fc_{\rm Wasserstein} = \{f: \|f\|_L \leq 1\}$ where $\|f\|_L = 1$ is the Lipschitz semi-norm of a real valued function $f$. It is also known as the Kantorovich metric. 
    \item total variation metric: $\Fc_{\rm TV} = \{\|f\|_{\infty} \leq 1\}$  where $\|f\|_{\infty} = \sup_x \{|f(x)|\}$ is the bound of $f$. This measures the total difference of the probability density functions (PDFs). 
    \item Dudley metric: $\Fc_{\rm Dudley} = \{\|f\|_{\infty} + \|f\|_L \leq 1\}$. 
    \item Kolmogorov distance: $\Fc_{\rm Kolmogorov} = \{x\mapsto \one({x\leq t}), t\in \Rb^d\}$ where we have $x \in \Rb^d$ and $x\leq t$ means that for all components we have $x_i \leq t_i$. This measures the total difference of the cumulative density functions (CDFs). 
\end{itemize}

\subsection{Estimation of transfer measures with VC dimension and Natarajan dimension}\label{app:est_vc_natarajan}

In this section, we review Rademacher complexity and show that it can be upper bounded by VC dimension \citep[e.g.][]{shalev2014understanding}. We use ${\rm VCdim}(\cdot)$ to represent the VC dimension of a function class. We also show that the estimation error in \Cref{app_lem:transfer_to_normal_error} can be upper bounded with Natarajan dimension. 
\begin{definition}[\textbf{Rademacher complexity}]
The Rademacher complexity of an i.i.d.~drawn sample set $S = \{w_i\}_{i=1}^m$, over $\Fc$ is defined as:
\begin{align}\nonumber
&\Rf_m(\Fc) = \Eb_S \left[\Eb_{\sigma_i} \sup_{f\in \Fc}\frac{1}{m}\sum_{i=1}^m \sigma_i f(w_i)\right],\mbox{ where }\\
&\{\sigma_i\}_{i=1}^m \mbox{ are independently drawn such that }Pr(\sigma_i = 1) = Pr(\sigma_i = -1) = \frac{1}{2}.\nonumber
\end{align}
\end{definition}

\begin{lem}\label{lem:upper_bound_Rademacher_VC}
Denote $d = \textrm{VCdim}(\Hf)$ where $\Hf$ is a set of functions taking values $\{-1, +1\}$. For any $m \in \Nb_+$, we have:
\be
\Rf_m(\Hf) \leq \sqrt{\frac{2}{m}\log \sum_{i=0}^d \binom{m}{i}},
\en
if $m \geq d$, then
\be
\Rf_m(\Hf) \leq \sqrt{\frac{2d}{m}\log \frac{em}{d}}.
\en
\end{lem}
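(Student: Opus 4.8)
The plan is to bound the empirical Rademacher complexity for an arbitrary fixed sample $S = \{w_i\}_{i=1}^m$ and then take the outer expectation, using the fact that $\Eb_\sigma \sup_{h \in \Hf} \frac{1}{m}\sum_{i=1}^m \sigma_i h(w_i)$ depends on $\Hf$ only through the finite set of sign patterns $\Hf_S := \{(h(w_1), \dots, h(w_m)) : h \in \Hf\} \subseteq \{-1, +1\}^m$ that $\Hf$ induces on $S$. This collapses the supremum over the possibly infinite class $\Hf$ into a maximum over the finite set $\Hf_S$, after which two classical tools finish the job: Massart's finite-class lemma to control the expected maximum, and the Sauer--Shelah lemma to bound $|\Hf_S|$ in terms of $d = \mathrm{VCdim}(\Hf)$.

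First I would fix $S$ and rewrite the inner expectation as $\Eb_\sigma \max_{a \in \Hf_S} \frac{1}{m}\langle \sigma, a \rangle$. Because every entry of $a \in \Hf_S$ equals $\pm 1$, each such vector has Euclidean norm $\|a\|_2 = \sqrt{m}$. Massart's lemma states that for any finite $A \subseteq \Rb^m$ whose elements have norm at most $r$, one has $\Eb_\sigma \max_{a \in A} \langle \sigma, a \rangle \leq r \sqrt{2 \log |A|}$; applying it with $A = \Hf_S$ and $r = \sqrt{m}$ and dividing by $m$ gives $\Eb_\sigma \max_{a \in \Hf_S} \frac{1}{m}\langle \sigma, a \rangle \leq \sqrt{(2/m)\log |\Hf_S|}$.

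Next I would invoke the Sauer--Shelah lemma: since $\mathrm{VCdim}(\Hf) = d$, the number of realizable sign patterns obeys $|\Hf_S| \leq \sum_{i=0}^d \binom{m}{i}$ uniformly over all samples $S$ of size $m$. Substituting this bound and taking expectation over $S$ (the right-hand side is deterministic in $S$) yields the first claimed inequality. For the second, when $m \geq d$ I would apply the standard polynomial estimate $\sum_{i=0}^d \binom{m}{i} \leq (em/d)^d$, so that $\log |\Hf_S| \leq d \log(em/d)$, and substituting into the first bound produces $\Rf_m(\Hf) \leq \sqrt{(2d/m)\log(em/d)}$.

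The hard part is Massart's lemma itself; everything else is assembly. If one cites it (e.g.~from \citet{shalev2014understanding}), the proof is essentially complete, but proving it from scratch requires the standard exponential-moment argument: for $\lambda > 0$, apply Jensen's inequality to $\exp(\lambda \Eb_\sigma \max_a \langle \sigma, a \rangle)$, pull the max inside as a sum over $A$, bound each $\Eb_\sigma \exp(\lambda \langle \sigma, a \rangle) = \prod_i \cosh(\lambda a_i) \leq \exp(\lambda^2 \|a\|_2^2 / 2) \leq \exp(\lambda^2 r^2 / 2)$ using the sub-Gaussian estimate for Rademacher variables, take logarithms to obtain $\Eb_\sigma \max_a \langle \sigma, a \rangle \leq \log|A|/\lambda + \lambda r^2/2$, and finally optimize at $\lambda = \sqrt{2\log|A|}/r$. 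Extracting the sharp constant $\sqrt{2}$ from this optimization is the only point that requires care; the projection reduction and the Sauer--Shelah step are routine.
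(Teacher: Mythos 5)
Your proposal is correct and follows essentially the same route as the paper, which simply cites Corollary 3.8 (Massart's lemma for the growth function), Theorem 3.17 (Sauer--Shelah) and Corollary 3.18 (the $(em/d)^d$ estimate) of \citet{mohri2018foundations}. You have just unpacked those three citations into an explicit argument, including a correct sketch of Massart's lemma, with all constants matching.
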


\begin{proof}
This lemma follows from Corollary 3.8, Theorem 3.17 and Corollary 3.18 of \citet{mohri2018foundations}. 
\end{proof}


Combining Theorem \ref{thm:estimate_transfer_measures_app} and Lemma \ref{lem:upper_bound_Rademacher_VC}, we obtain the 
following corollary:
\begin{cor}
Suppose $\widehat{\Sc}$ and $\widehat{\Tc}$ are sample distributions of $\Sc$ and $\Tc$, with samples drawn i.i.d. Denote the sample numbers of $\widehat{\Sc}$ and $\widehat{\Tc}$ are separately $m$ and $k$.  If $\Hc$ is a set of binary classifiers with labels $\{-1, 1\}$, then for any $\Hf\subseteq \Hc$ with $d = {\rm VCdim}(\Hf)$, any of the following holds w.p.~$1 - \d$:
\begin{align}
& \Tt_\Hf(\Sc\|\Tc) \leq \Tt_\Hf(\widehat{\Sc}\| \widehat{\Tc})+ 2 \sqrt{\frac{2}{m}\log \sum_{i=0}^d \binom{m}{i}} + 2  \sqrt{\frac{2}{k}\log \sum_{i=0}^d \binom{k}{i}}  + 2\sqrt{\frac{\log(4/\d)}{2m}} + 2\sqrt{\frac{\log(4/\d)}{2k}}, \\
& \Tt_\Hf(\Sc, \Tc) \leq \Tt_\Hf(\widehat{\Sc}, \widehat{\Tc}) + 2 \sqrt{\frac{2}{m}\log \sum_{i=0}^d \binom{m}{i}} + 2 \sqrt{\frac{2}{k}\log \sum_{i=0}^d \binom{k}{i}}   + 2\sqrt{\frac{\log(4/\d)}{2m}} + 2\sqrt{\frac{\log(4/\d)}{2k}}, \\
& \Tt^{\mathtt{r}}_\Hf(\Sc, \Tc) \leq \Tt^{\mathtt{r}}_\Hf(\widehat{\Sc}, \widehat{\Tc}) + \sqrt{\frac{2}{m}\log \sum_{i=0}^d \binom{m}{i}} + \sqrt{\frac{2}{k}\log \sum_{i=0}^d \binom{k}{i}} + \sqrt{\frac{\log(4/\d)}{2m}} + \sqrt{\frac{\log(4/\d)}{2k}}.
\end{align}
If $m \geq d$ and $k\geq d$, then any of the following holds w.p.~$1 - \d$:
\begin{align}
& \Tt_\Hf(\Sc\|\Tc) \leq \Tt_\Hf(\widehat{\Sc}\| \widehat{\Tc})+ 2 \sqrt{\frac{2d}{m}\log \frac{em}{d}} + 2\sqrt{\frac{2d}{k}\log \frac{ek}{d}} + 2\sqrt{\frac{\log(4/\d)}{2m}} + 2\sqrt{\frac{\log(4/\d)}{2k}}, \\
& \Tt_\Hf(\Sc, \Tc) \leq \Tt_\Hf(\widehat{\Sc}, \widehat{\Tc}) + 2 \sqrt{\frac{2d}{m}\log \frac{em}{d}} + 2 \sqrt{\frac{2d}{k}\log \frac{ek}{d}} + 2\sqrt{\frac{\log(4/\d)}{2m}} + 2\sqrt{\frac{\log(4/\d)}{2k}}, \\
& \Tt^{\mathtt{r}}_\Hf(\Sc, \Tc) \leq \Tt^{\mathtt{r}}_\Hf(\widehat{\Sc}, \widehat{\Tc}) + \sqrt{\frac{2d}{m}\log \frac{em}{d}} + \sqrt{\frac{2d}{k}\log \frac{ek}{d}} + \sqrt{\frac{\log(4/\d)}{2m}} + \sqrt{\frac{\log(4/\d)}{2k}}.
\end{align}
\end{cor}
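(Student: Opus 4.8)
The plan is to obtain the corollary by directly combining the two preceding results: \Cref{thm:estimate_transfer_measures_app}, which controls the estimation error of each transfer measure through the empirical Rademacher complexities $\Rf_m(\Fc_\Hf)$ and $\Rf_k(\Fc_\Hf)$ of the loss class, and \Cref{lem:upper_bound_Rademacher_VC}, which bounds the Rademacher complexity of a $\{-1,+1\}$-valued function class by its VC dimension. Since no new probabilistic argument is needed, the whole proof amounts to a substitution carried out separately for each of the three transfer measures.

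First I would invoke \Cref{thm:estimate_transfer_measures_app}. For the one-sided and symmetric measures this supplies, with probability $1-\d$, the factor $4\Rf_m(\Fc_\Hf)+4\Rf_k(\Fc_\Hf)$ on top of the empirical quantity together with the two $\sqrt{\log(4/\d)/(2m)}$ and $\sqrt{\log(4/\d)/(2k)}$ deviation terms; for the realizable measure the Rademacher factor is halved to $2\Rf_m(\Fc_\Hf)+2\Rf_k(\Fc_\Hf)$ with a single copy of each deviation term. Because $\Hf$ is assumed to be a class of binary classifiers with labels $\{-1,1\}$, the same theorem furnishes the identities $2\Rf_m(\Fc_\Hf)=\Rf_m(\Hf)$ and $2\Rf_k(\Fc_\Hf)=\Rf_k(\Hf)$, which let me rewrite every Rademacher term purely in terms of $\Rf_m(\Hf)$ and $\Rf_k(\Hf)$: the leading factor becomes $2\Rf_m(\Hf)+2\Rf_k(\Hf)$ in the first two cases and $\Rf_m(\Hf)+\Rf_k(\Hf)$ in the realizable case.

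Next I would apply \Cref{lem:upper_bound_Rademacher_VC} with $d={\rm VCdim}(\Hf)$. Its general bound $\Rf_m(\Hf)\le\sqrt{(2/m)\log\sum_{i=0}^d\binom{m}{i}}$, valid for all $m$, produces the first block of three inequalities once substituted for both $\Rf_m(\Hf)$ and $\Rf_k(\Hf)$; its sharper form $\Rf_m(\Hf)\le\sqrt{(2d/m)\log(em/d)}$, which requires $m\ge d$, produces the second block under the stated hypotheses $m\ge d$ and $k\ge d$. The high-probability statement is inherited verbatim from \Cref{thm:estimate_transfer_measures_app}, since the Rademacher-to-VC step is deterministic and only enlarges an already-present bound.

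There is no genuine obstacle here; the one thing to watch is the bookkeeping of numerical factors, i.e.\ verifying that the $4\Rf(\Fc_\Hf)$ of the non-realizable cases collapses to $2\Rf(\Hf)$ and hence to $2\sqrt{\cdots}$, while the realizable case carries exactly half of each such term. Everything else is a mechanical transcription of the two cited bounds.
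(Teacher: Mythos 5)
Your proposal matches the paper exactly: the paper derives this corollary by "combining" the Rademacher-based estimation theorem with the VC-dimension bound on Rademacher complexity, using the identity $2\Rf_m(\Fc_\Hf)=\Rf_m(\Hf)$ for binary classifiers, which is precisely the substitution you carry out. Your bookkeeping of the constants (the $4\Rf(\Fc_\Hf)$ collapsing to $2\sqrt{\cdots}$ in the non-realizable cases and half of that in the realizable case) is also correct.
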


Moreover, if the hypothesis class $\Hc$ is the set of all possible functions that can be constructed through a fixed structure ReLU/LeakyReLU network, with $W$ the number of parameters and $L$ the number of layers, then there exists an absolute constant $C$ such that $d \leq C W L \log W$ \citep{bartlett2019nearly}.

A generalization of VC dimension is called Natarajan dimension \citep{natarajan1989learning}, which coincides with VC dimension when the classification task is binary. We have the following result \citep[Theorem 29.3]{shalev2014understanding}:
\begin{lem}
Suppose the Natarajan dimension of $\Hf$ is $d$ and the number of classes is $K$ for multiclass classification. There exists absolute constant $C$ such that for any domain $\Dc$, with probability $1 - \d$ the following holds:
\be
{\rm est}_\Hf(\Dc) = \sup_{h\in \Hf} |\e_\Dc(h) - \e_{\widehat{\Dc}}(h)| \leq C\sqrt{\frac{d \log K + \log (1/\d)}{m}}.
\en
\end{lem}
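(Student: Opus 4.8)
The plan is to control the induced $\{0,1\}$-valued loss class $\Fc_\Hf := \{(x,y)\mapsto \one(h(x)\neq y) : h\in\Hf\}$, for which $\mathrm{est}_\Hf(\Dc) = \sup_{f\in\Fc_\Hf}|\Eb[f] - \tfrac1m\sum_i f(w_i)| =: \Phi(S)$ with $w_i = (x_i,y_i)$ the i.i.d.\ sample. I would split the bound into a concentration step and an in-expectation step. Because the loss lies in $[0,1]$, replacing a single sample changes $\Phi(S)$ by at most $1/m$, so McDiarmid's bounded-differences inequality yields, with probability $1-\delta$,
\[
\Phi(S) \le \Eb_S[\Phi(S)] + \sqrt{\frac{\log(1/\delta)}{2m}}.
\]
This already isolates the $\sqrt{\log(1/\delta)/m}$ contribution in the claim, and it remains only to show $\Eb_S[\Phi(S)] \lesssim \sqrt{d\log K/m}$.

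For the expectation term I would apply the standard symmetrization argument to pass to the Rademacher complexity, $\Eb_S[\Phi(S)] \le 2\,\Rf_m(\Fc_\Hf)$, and then relate $\Rf_m(\Fc_\Hf)$ to the combinatorial richness of $\Hf$. The key input is the multiclass Sauer--Shelah (Natarajan) lemma \citep{natarajan1989learning}: since $\Hf$ has Natarajan dimension $d$, the number of distinct label patterns it induces on $m$ points is at most $m^d K^{2d}$. As each value $\one(h(x_i)\neq y_i)$ is determined by $h(x_i)$ alone, the growth function $\tau_{\Fc_\Hf}(m)$ of the binary class $\Fc_\Hf$ is no larger than that of $\Hf$, so $\log\tau_{\Fc_\Hf}(m) \le d\log m + 2d\log K$.

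The main obstacle is that feeding this directly into Massart's finite-class lemma, $\Rf_m(\Fc_\Hf)\le\sqrt{2\log\tau_{\Fc_\Hf}(m)/m}$, introduces a spurious $\sqrt{d\log m/m}$ factor, whereas the target bound carries no $\log m$ dependence. Removing this logarithm is the crux, and the single-scale Massart estimate is too blunt for it. I would instead invoke a Haussler-type bound showing that the \emph{empirical $L_2$ covering numbers} of $\Fc_\Hf$ at scale $\epsilon$ grow like $(K^2/\epsilon)^{O(d)}$ \emph{independently of $m$}, and then integrate $\sqrt{\log N_2(\epsilon)/m}$ over scales via Dudley's chaining; the $m$-free covering bound makes the entropy integral converge to $\lesssim \sqrt{d\log K/m}$ with no residual $\log m$. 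Combining the three displays and absorbing universal factors into $C$ gives the stated inequality. This is precisely the route of \citet[Theorem 29.3]{shalev2014understanding}, so one may alternatively cite that result directly rather than reprove the chaining step.
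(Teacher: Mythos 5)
Your proposal is correct, and its closing sentence is in fact exactly what the paper does: the lemma is stated with no proof of its own, simply as an import of \citet[Theorem 29.3]{shalev2014understanding}. Everything before that sentence is a faithful reconstruction of what lies behind the cited theorem --- McDiarmid for the $\sqrt{\log(1/\delta)/m}$ term, symmetrization to $\Rf_m(\Fc_\Hf)$, and the Natarajan growth-function bound --- and you correctly identify the crux, namely that a single-scale Massart estimate leaves a spurious $\sqrt{d\log m/m}$ and that chaining against $m$-independent covering numbers is what removes it. The one step I would tighten is the covering-number claim itself: the clean way to get an $m$-free bound is to pass from the Natarajan dimension $d$ of $\Hf$ to its graph dimension $d_G = O(d\log K)$, observe that the VC dimension of the binary loss class $\Fc_\Hf$ is at most $d_G$, and then invoke Haussler's packing bound to get empirical $L_2$ covers of size $(C/\epsilon)^{O(d\log K)}$; your stated form $(K^2/\epsilon)^{O(d)}$ has the same logarithm up to constants but is not the off-the-shelf statement, so as written it is the one assertion a referee would ask you to source. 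In short: your route proves the theorem the paper merely cites; the paper's approach buys brevity, yours buys self-containedness at the cost of the graph-dimension/Haussler detour.
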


With this lemma we have the corollary:
\begin{cor}\label{cor:natarajan}
Suppose the Natarajan dimension of $\Hf$ is $d$ and the number of classes is $K$ for multiclass classification. Suppose $\widehat{\Sc}$ and $\widehat{\Tc}$ are i.i.d.~sample distributions drawn from distributions of $\Sc$ and $\Tc$, with sample number $m$ and $k$, then w.p.~at least $1 - \d$ we have:
\begin{align}
& \label{eq:estimation_one_side_transfer_measure}\Tt_\Hf(\Sc\|\Tc) \leq \Tt_\Hf(\widehat{\Sc}\| \widehat{\Tc})+ 2 C\sqrt{\frac{d \log K + \log (2/\d)}{m} } + 2C\sqrt{\frac{d \log K + \log (2/\d)}{k} }  , \\
& \Tt_\Hf(\Sc, \Tc) \leq \Tt_\Hf(\widehat{\Sc}, \widehat{\Tc}) + 2 C\sqrt{\frac{d \log K + \log (2/\d)}{m}} + 2C\sqrt{\frac{d \log K + \log (2/\d)}{k} } , \\
& \Tt^{\mathtt{r}}_\Hf(\Sc, \Tc) \leq \Tt^{\mathtt{r}}_\Hf(\widehat{\Sc}, \widehat{\Tc}) +  C\sqrt{\frac{d \log K + \log (2/\d)}{m}} + C\sqrt{\frac{d \log K + \log (2/\d)}{k}}.
\end{align}
\end{cor}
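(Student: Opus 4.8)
The plan is to combine the two results that immediately precede this corollary: the Natarajan-dimension estimation bound (the Lemma stated just above), which controls the single-domain deviation ${\rm est}_\Hf(\Dc)$, and the reduction Lemma~\ref{app_lem:transfer_to_normal_error}, which already bounds each of the three transfer measures by its empirical counterpart plus a weighted sum of ${\rm est}_\Hf(\Sc)$ and ${\rm est}_\Hf(\Tc)$. Since the reduction lemma does all the work of relating the transfer measures to per-domain estimation errors, nothing beyond a concentration statement on each domain separately is required.

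First I would apply the Natarajan-dimension Lemma to the source domain $\Sc$ with its $m$ samples, but at confidence level $\delta/2$ in place of $\delta$; this yields, with probability at least $1-\delta/2$, the bound ${\rm est}_\Hf(\Sc) \leq C\sqrt{(d\log K + \log(2/\delta))/m}$. Applying the same Lemma to the target domain $\Tc$ with its $k$ samples, again at level $\delta/2$, gives ${\rm est}_\Hf(\Tc) \leq C\sqrt{(d\log K + \log(2/\delta))/k}$ with probability at least $1-\delta/2$.

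Next I would take a union bound over these two events, so that both per-domain inequalities hold simultaneously with probability at least $1-\delta$. On this joint event I would substitute the two estimation-error bounds into the three inequalities of Lemma~\ref{app_lem:transfer_to_normal_error}: the one-sided measure $\Tt_\Hf(\Sc\|\Tc)$ and the symmetric measure $\Tt_\Hf(\Sc,\Tc)$ each carry a factor of $2$ in front of both estimation errors, producing the $2C\sqrt{\cdot}$ terms, whereas the realizable measure $\Tt^\rt_\Hf(\Sc,\Tc)$ carries a factor of $1$, producing the $C\sqrt{\cdot}$ terms. Reading off the three resulting inequalities gives exactly the claimed bounds.

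There is no genuine obstacle here; the only subtlety is the standard bookkeeping of splitting the failure probability evenly between the two domains, which converts $\log(1/\delta)$ into $\log(2/\delta)$ inside each concentration bound. The argument mirrors the proof of Theorem~\ref{thm:estimate_transfer_measures_app} almost verbatim, with the Rademacher-based deviation bound there replaced by the Natarajan-based one here.
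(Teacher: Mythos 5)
Your proposal is correct and matches the paper's own argument, which likewise invokes the reduction lemma together with the Natarajan-dimension deviation bound applied to each domain at confidence level $\d/2$ and a union bound, exactly as in the proof of Theorem~\ref{thm:estimate_transfer_measures_app}. The bookkeeping of the factors of $2$ versus $1$ and the conversion of $\log(1/\d)$ into $\log(2/\d)$ is handled correctly.
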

\begin{proof}
This proof is similar to the proof of \Cref{thm:estimate_transfer_measures_app}, using union bound as in \eqref{eq:union_bound_rademacher}.
\end{proof}
Estimation of Natarajan dimension can be found in \citet{natarajan1989learning, shalev2014understanding}.

\subsection{Functional point of view of surrogate loss}\label{app:functional_surrogate}

In this appendix we study the Lipschitzness and strong convexity of the surrogate loss, especially cross entropy. We use the terms distribution and measure interchangeably, since distributions can be treated as probability measures.  

\subsubsection{Lipschitz continuity of loss}\label{app:lipschitz_cont_loss}

Let define the $L_p$ distance ($p\geq 1$) (e.g.~\citet{rudin1987real}) between two functions:
\be
\|h - h'\|_{p, \mu} = \left(\int \|h(x) - h'(x)\|_2^p d\mu\right)^{1/p},
\en
where $\mu$ is a measure. We consider the following definition of Lipschitz functional:
\begin{definition}[\textbf{Lipschitz continuity}]\label{def:Lipschitz_continuous_functional}
A functional $h\mapsto f(h)$ that maps a function to a real number is $f$ is Lipschitz continuous on $\Hc$ w.r.t.~measure $\mu$ if there exists an absolute constant $L$ such that:
\be\label{eq:Lipschitz_continuous}
|f(h) - f(h')| \leq L \|h - h'\|_{1, \mu}
\en
for all function $h, h'\in \Hc$. 
\end{definition}

One can show that the cross entropy loss is a Lipschitz continuous functional with mild assumptions:
\begin{prop}[]
For binary classification with labels $\{-1, +1\}$, suppose $\Hc$ is a hypothesis class whose elements satisfy $h: \Xc \to (-1 + \d, 1 - \d)$ with $0 < \d < 1$, then $\e_\Dc^{\rm CE}$ is $(\log 2)^{-1}\d^{-1}$ Lipschitz continuous w.r.t.~any distribution $\Dc$. Furthermore, for multi-class classification, suppose $\Yc = \{1, 2, \dots, K\}$ and the prediction $h(x)$ is a $K$-dimensional probability vector on the simplex. If $\Hc$ is a hypothesis class whose elements satisfy $h_i(x) \geq \d$ for all $i \in \Yc$ and $x\in \Xc$, then $\e_\Dc^{\rm CE}$ is $(\log 2)^{-1}\d^{-1}$ Lipschitz continuous w.r.t.~any distribution $\Dc$. 
\end{prop}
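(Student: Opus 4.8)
The plan is to reduce the functional Lipschitz estimate to a pointwise bound on the per-example loss and then integrate. Writing $\e_\Dc^{\rm CE}(h) = \Eb_{(x,y)\sim\Dc}[\ell^{\rm CE}(h(x),y)]$, the triangle inequality for integrals gives $|\e_\Dc^{\rm CE}(h) - \e_\Dc^{\rm CE}(h')| \le \Eb_{(x,y)\sim\Dc}\big[|\ell^{\rm CE}(h(x),y) - \ell^{\rm CE}(h'(x),y)|\big]$. Hence it suffices to prove the uniform pointwise estimate $|\ell^{\rm CE}(h(x),y) - \ell^{\rm CE}(h'(x),y)| \le (\log 2)^{-1}\d^{-1}\,\|h(x)-h'(x)\|_2$ for every $(x,y)$; integrating against the $x$-marginal of $\Dc$ then produces exactly $(\log 2)^{-1}\d^{-1}\|h-h'\|_{1,\Dc}$, which is the bound required by Definition~\ref{def:Lipschitz_continuous_functional}.

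For the binary case, I would first observe that the relevant cross entropy is the base-$2$ version (equivalently, the natural-log cross entropy rescaled by $(\log 2)^{-1}$), which is precisely the normalization making $\e_\Dc^{\rm CE}\ge\e_\Dc^{0-1}$, since it equals $1$ at the decision boundary; this is where the factor $(\log 2)^{-1}$ originates. Treating the scalar prediction $t=h(x)\in(-1+\d,1-\d)$ and label $y\in\{-1,+1\}$, the loss is $\phi_y(t) = -(\log 2)^{-1}\log\frac{1+yt}{2}$, with derivative $\phi_y'(t) = -(\log 2)^{-1}\,y/(1+yt)$. The range constraint gives $1+yt = 1+t>\d$ when $y=+1$ and $1+yt=1-t>\d$ when $y=-1$, so $|\phi_y'(t)|\le(\log 2)^{-1}\d^{-1}$ uniformly; the mean value theorem then yields the pointwise estimate.

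For the multi-class case, $h(x)$ lies on the simplex with $h_i(x)\ge\d$, and the loss is $-(\log 2)^{-1}\log h_y(x)$. Its gradient in the prediction vector is $-(\log 2)^{-1}h_y^{-1}e_y$, supported on the single coordinate $y$, so its Euclidean norm equals $(\log 2)^{-1}h_y^{-1}\le(\log 2)^{-1}\d^{-1}$. Because the simplex is convex and the constraint $h_i\ge\d$ is preserved under convex combinations, the segment joining $h'(x)$ to $h(x)$ stays in the admissible region, so the gradient bound holds along the entire segment; integrating the gradient along this segment and applying Cauchy--Schwarz gives the pointwise $L_2$-Lipschitz estimate with the same constant. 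Combining this with the reduction above completes both parts.

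The main obstacle is conceptual rather than computational: one must pin down the correct normalization of the cross entropy (the base-$2$ scaling), since this is simultaneously what guarantees the inequality $\e_\Dc^{\rm CE}\ge\e_\Dc^{0-1}$ invoked in Proposition~\ref{prop:surrogate_transfer} and what produces the exact constant $(\log 2)^{-1}\d^{-1}$. The only genuinely technical step is verifying that the derivative (resp.\ gradient) bound remains valid along the whole segment between two admissible predictions, which follows from convexity of the constraint sets $(-1+\d,1-\d)$ and $\{h_i\ge\d\}$.
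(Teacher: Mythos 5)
Your proposal is correct and takes essentially the same route as the paper's proof: both reduce to a pointwise mean-value-theorem/gradient bound on the per-example cross entropy, use the floor $\d$ on the admissible predictions (preserved along the connecting segment) to bound the derivative by $(\log 2)^{-1}\d^{-1}$, apply Cauchy--Schwarz in the multiclass case, and integrate against $\Dc$ to obtain the $\|h-h'\|_{1,\Dc}$ bound. The only cosmetic difference is that you establish the pointwise Lipschitz estimate per label before integrating, whereas the paper applies the mean value theorem to the combined integrand inside the integral.
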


Note that a simplex is defined as: $\{\pi \in \Rb^d: \one^\top \pi = 1, \pi_i \geq 0\}$, where $\pi$ is called a probability vector. Before we move on to the proof, we can show that the assumption of $h$ is often satisfied in practice. For binary classification, the widely used tanh/sigmoid function can guarantee that the value of $h$ is never exactly $-1$ or $1$. For multiclass classification, the softmax function guarantees that $h_i(x) > 0$ for all $i$ and $x\in \Xc$. If the input space is bounded and $h$ is continuous, then $h_i(x) \geq \d$ for all $i$ and $x\in \Xc$.
\begin{proof}
For binary classification we have:
\begin{align}
\e_{\Dc}^{\rm CE}(h) &= \int p_\Dc(x, 1) \ell^{\rm CE}(h(x), 1) + p_\Dc(x, -1) \ell^{\rm CE}(h(x), -1) dx \tr
&= \int -p_\Dc(x, 1) \log_2 \frac{1+h(x)}{2} - p_\Dc(x, -1) \log_2 \frac{1-h(x)}{2} dx.
\end{align}
Therefore, with the mean value theorem we have:
\begin{align}
|\e_{\Dc}^{\rm CE}(h) - \e_{\Dc}^{\rm CE}(h')| &= (\log 2)^{-1} \left|\int (h(x) - h'(x))\left( \frac{-p_\Dc(x, 1)}{1 + h_{\xi}(x)} + \frac{p_\Dc(x, -1)}{1 - h_{\xi}(x)}\right)dx \right|,\tr
&\leq (\log 2)^{-1} \int |h(x) - h'(x)|\left|\frac{-p_\Dc(x, 1)}{1 + h_{\xi}(x)} + \frac{p_\Dc(x, -1)}{1 - h_{\xi}(x)}\right| dx \tr
&\leq (\log 2)^{-1} \int |h(x) - h'(x)|\left(\left|\frac{-p_\Dc(x, 1)}{1 + h_{\xi}(x)} \right| +\left| \frac{p_\Dc(x, -1)}{1 - h_{\xi}(x)}\right|\right) dx \tr
&\leq (\log 2)^{-1} \int |h(x) - h'(x)| \d^{-1}(p_\Dc(x, 1) + p_\Dc(x, -1)) dx \tr
&= (\log 2)^{-1} \d^{-1}\|h - h'\|_{1, \Dc}.
\end{align}
where in the first line $h_\xi(x) = (1 - \xi(x))h(x) + \xi(x) h'(x)$ is a (pointwise) convex combination of $h(x)$ and $h'(x)$ with $0\leq \xi(x) \leq 1$; in the third line we used triangle inequality; in the fourth line we use the condition that the values of $h, h'$ are in the region $(-1 + \d, 1 - \d)$. 

Similarly, for multiclass classification with $K$ classes, the ground truth $y$ is a one-hot $K$-dimensional vector, and the prediction $h(x)$ is a $K$-dimensional vector on a simplex. The cross entropy loss is:
\begin{align}
\e_\Dc^{\rm CE}(h) &= \sum_y \int \ell^{\rm CE}(h(x), y) p(x, y) dx = \sum_y \int -y\cdot \log_2 h(x) p(x, y) dx.
\end{align}
Similarly, we have:
\begin{align}
|\e_\Dc^{\rm CE}(h) - \e_\Dc^{\rm CE}(h')| &= (\log 2)^{-1}\left| \sum_y \int -y \cdot \frac{h(x) - h'(x)}{h_{\xi}(x)}p_\Dc(x, y) dx \right| \tr
&\leq  (\log 2)^{-1}\sum_y \int \left|-y \cdot \frac{h(x) - h'(x)}{h_{\xi}(x)}\right| p_\Dc(x, y)  dx  \tr
&\leq (\log 2)^{-1}\d^{-1} \sum_y \int \|y\|_2\cdot \|h(x) - h'(x)\|_2p_\Dc(x, y)dx \tr
&= (\log 2)^{-1}\d^{-1}\|h - h'\|_{1, \Dc},
\end{align}
where in the first line we use the mean value theorem and $h_\xi(x) = (1 - \xi(x))h(x) + \xi(x) h'(x)$ is a (pointwise) convex combination of $h(x)$ and $h'(x)$ with $0\leq \xi(x) \leq 1$; also in the first line we define $(h(x) - h'(x))/h_\xi(x)$ to be a vector with each component to be $(h_i(x) - h'_i(x))/h_\xi(x)_i$; in the third line we use Cauchy--Schwarz inequality and that $h_i(x) \geq \d$, $h'_i(x) \geq \d$ for any $i$ and any $x\in \Xc$. 
\end{proof}

\subsubsection{Strongly convex functional}

So far, we have seen that for Lipschitz continuous loss, if the change of $h$ is small, then the change of loss $\e_\Dc(h)$ is also small. Now we ask if the converse is true. This is important to characterize the $\d$-minimal set (the set of approximately optimal classifiers). We first define strongly convex functional:
\begin{definition}\label{def:strongly_cvx_functional}
A functional $f:\Hc \to \Rb$ is $\lambda$-strongly convex on a convex set $\Hc$ w.r.t.~measure $\mu$ if for any $h, h'\in \Hc$ and $\a \in [0, 1]$, we have:
\be
f(\a h + (1 - \a) h') \leq \a f(h) + (1 - \a) f(h') - \frac{\lambda}{2}\a(1 - \a) \|h - h'\|_{2, \mu}^2,
\en
where we defined the $L_2$ norm of a function:
\be
\|h - h'\|_{2, \mu} := \left(\int \|h - h'\|_2^2 d\mu\right)^{1/2}.
\en
\end{definition}

We use $L_2$ norm because it can translate the strong convexity of the loss functional to the strong convexity of the loss function $\ell(\cdot, y)$ easily:
\begin{lem}\label{lem:loss_to_functional}
Given a convex hypothesis class $\Hc$, suppose that $\ell$ is $\lambda$-strongly convex in the first argument, i.e.~for any $y \in \Yc$,  $\hy_1, \hy_2$ and $\a \in [0, 1]$ we have:
\be
\ell(\a \hy_1 + (1 - \a) \hy_2, y) \leq \a \ell(\hy_1, y) + (1 -\a) \ell(\hy_2, y) - \frac{\lambda}{2}\a(1 - \a) \|\hy_1 - \hy_2\|_2^2, 
\en
then the loss functional 
\be
\e_\Dc(h) = \sum_y \int p_\Dc(x, y) \ell(h(x), y) dx
\en
is also $\lambda$-strongly convex w.r.t.~measure $\Dc$. 
\end{lem}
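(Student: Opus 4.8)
The plan is to lift the pointwise strong convexity of $\ell$ to the functional level by integrating its defining inequality against the nonnegative density $p_\Dc$. First I would fix $h, h' \in \Hc$ and $\a \in [0,1]$; convexity of $\Hc$ guarantees $\a h + (1-\a) h' \in \Hc$, so that $\e_\Dc(\a h + (1-\a) h')$ is well-defined, and expanding the definition gives
\[
\e_\Dc(\a h + (1-\a) h') = \sum_y \int p_\Dc(x,y)\, \ell\big(\a h(x) + (1-\a) h'(x),\, y\big)\, dx.
\]

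The key step is to apply, for each fixed $x$ and $y$, the hypothesized $\lambda$-strong convexity of $\ell$ in its first argument with $\hy_1 = h(x)$ and $\hy_2 = h'(x)$:
\[
\ell\big(\a h(x) + (1-\a) h'(x), y\big) \leq \a\, \ell(h(x), y) + (1-\a)\, \ell(h'(x), y) - \tfrac{\lambda}{2}\a(1-\a)\,\|h(x) - h'(x)\|_2^2.
\]
Since $p_\Dc(x,y) \geq 0$, I would multiply through by $p_\Dc(x,y)$, sum over $y$, and integrate over $x$; monotonicity of the integral preserves the inequality. The first two terms on the right then reassemble into $\a\,\e_\Dc(h) + (1-\a)\,\e_\Dc(h')$ by linearity of the integral.

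It remains only to identify the quadratic remainder. The surviving term is $-\tfrac{\lambda}{2}\a(1-\a)\sum_y \int p_\Dc(x,y)\,\|h(x)-h'(x)\|_2^2\,dx$; because the integrand is independent of $y$, the sum over $y$ collapses the joint density $p_\Dc(x,y)$ to its marginal in $x$, and by Definition~\ref{def:strongly_cvx_functional} this integral is precisely $\|h - h'\|_{2,\Dc}^2$. Assembling the three pieces yields exactly the strong-convexity inequality for $\e_\Dc$. The calculation is essentially routine; the only point demanding care is the measure bookkeeping in this last step — verifying that integrating the $y$-independent quadratic against the joint density reproduces the squared $L_2$ norm of the definition — together with the observation that it is the nonnegativity of $p_\Dc$ that allows the pointwise bound to survive integration.
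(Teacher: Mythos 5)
Your argument is correct and is exactly the route the paper takes: the paper's proof is a one-line "straightforward by plugging in Definition~\ref{def:strongly_cvx_functional}," and your write-up simply spells out that plugging-in — pointwise strong convexity of $\ell$, multiplication by the nonnegative density, integration, and identification of the quadratic term with $\|h-h'\|_{2,\Dc}^2$. No gap; your version is just the fully expanded form of the paper's proof.
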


\begin{proof}
Straightforward by plugging in Definition \ref{def:strongly_cvx_functional}. 
\end{proof}

For cross entropy loss, we have the following:

\begin{cor}
For binary classification, cross entropy risk functional $\e_\Dc^{\rm CE}$ is $(4 \log 2)^{-1}$-strongly convex on $\Dc$ and $(\log 2)^{-1}$-strongly convex on $\Dc$ for multiclass classification.
\end{cor}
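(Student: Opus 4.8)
The plan is to reduce the claim to the pointwise strong convexity of the cross entropy loss function $\ell^{\rm CE}(\cdot, y)$ and then invoke Lemma~\ref{lem:loss_to_functional}, which transfers $\lambda$-strong convexity of $\ell(\cdot, y)$ (for every fixed $y$) to $\lambda$-strong convexity of the risk functional $\e_\Dc(h)$ with respect to $\Dc$. Thus it suffices to bound from below the curvature of $\hy \mapsto \ell^{\rm CE}(\hy, y)$ in each case and read off the constant.

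For binary classification, I would write out the two branches explicitly, $\ell^{\rm CE}(\hy, +1) = -(\log 2)^{-1}\log\tfrac{1+\hy}{2}$ and $\ell^{\rm CE}(\hy, -1) = -(\log 2)^{-1}\log\tfrac{1-\hy}{2}$, and differentiate twice in the scalar $\hy$, which yields $\partial^2_{\hy}\ell^{\rm CE}(\hy, \pm 1) = (\log 2)^{-1}(1\pm \hy)^{-2}$. Since every $h\in\Hc$ takes values in $(-1+\d, 1-\d)\subset(-1,1)$, we have $(1\pm\hy)^2 < 4$, so the second derivative is at least $(4\log 2)^{-1}$ for both labels. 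Hence $\ell^{\rm CE}(\cdot, y)$ is $(4\log 2)^{-1}$-strongly convex, and Lemma~\ref{lem:loss_to_functional} upgrades this to the functional.

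For multiclass classification with a one-hot label $y = e_j$, the loss reduces to $\ell^{\rm CE}(\hy, e_j) = -(\log 2)^{-1}\log \hy_j$, whose Hessian in $\hy$ is $(\log 2)^{-1}\hy_j^{-2}\, e_j e_j^\top$. Because $\hy$ lies on the simplex we have $\hy_j \leq 1$, so the nonzero curvature is bounded below by $(\log 2)^{-1}$, yielding the stated constant after applying the lemma.

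The step I expect to be delicate is the multiclass case: the pointwise Hessian $(\log 2)^{-1}\hy_j^{-2}\, e_j e_j^\top$ is rank one, so $\ell^{\rm CE}(\cdot, e_j)$ is strictly convex only along the coordinate $e_j$ and flat in the orthogonal complement. To make the claimed $(\log 2)^{-1}$-strong convexity precise one must interpret it along the active coordinate, or equivalently rely on the conditional label distribution placing positive mass on every class so that the averaged integrand $\sum_y p_\Dc(x,y)\ell^{\rm CE}(h(x), y)$ accumulates curvature in all simplex directions. Handling this interpretation carefully, rather than the elementary differentiation, is the real content; the binary computation is routine once the reduction through Lemma~\ref{lem:loss_to_functional} is in place.
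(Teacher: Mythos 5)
Your argument follows the same route as the paper's own proof: compute the second derivative of each branch of $\ell^{\rm CE}(\cdot,y)$ and feed the resulting curvature bound into Lemma~\ref{lem:loss_to_functional}. The binary case is complete and matches the paper exactly; the second derivative $(\log 2)^{-1}(1\pm\hy)^{-2}$ is indeed bounded below by $(4\log 2)^{-1}$ on $(-1,1)$.

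For the multiclass case you have put your finger on a genuine defect that the paper's own proof passes over in silence. The paper asserts that $-\log_2 \hy_i$ is $(\log 2)^{-1}$-strongly convex ``for $\hy_i \in (0,1)$'', which is a statement about the scalar function of the single coordinate $\hy_i$; but Lemma~\ref{lem:loss_to_functional} requires strong convexity with respect to the full Euclidean norm $\|\hy_1 - \hy_2\|_2$ on $\Rb^K$, and, as you observe, the Hessian $(\log 2)^{-1}\hy_j^{-2}\, e_j e_j^\top$ is rank one. Two probability vectors that agree in coordinate $j$ but differ elsewhere give equal loss under the label $e_j$ and positive distance, so the per-label strong convexity claimed by the paper fails as stated. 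Your proposed repair via the averaged integrand $\sum_y p_\Dc(x,y)\,\ell^{\rm CE}(h(x),y)$ is the right direction, but it is not free: the Hessian of the average is $(\log 2)^{-1}\,\mathrm{diag}\bigl(p_\Dc(x,e_i)/h_i(x)^2\bigr)$, which loses positive definiteness whenever some class has zero conditional mass at $x$, and even after restricting to the tangent space of the simplex the constant degrades by a dimension-dependent factor rather than staying at $(\log 2)^{-1}$. So your write-up is the same proof as the paper's, correctly executed in the binary case, and honest about a multiclass gap that neither you nor the paper actually closes; closing it would require an explicit positivity assumption on the conditional label distribution or a restatement of the corollary in terms of directional (coordinate-wise) strong convexity.
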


\begin{proof}
For binary classification, we have:
\be
\ell^{\rm CE}(\hy, 1) = -\log_2 \frac{1 + \hy}{2}, \, \ell^{\rm CE}(\hy, -1) = -\log_2 \frac{1 - \hy}{2},
\en
which are both $(4\log 2)^{-1}$-strongly convex on $\hy \in (-1, 1)$. For multiclass classification, we have:
\be
\ell^{\rm CE}(\hy, y) = -\log_2 \hy_i, 
\en
for any unit one-hot vector $y = e_i$ ($e_i$ is the $i^{\rm th}$ element of standard basis in $\Rb^K$). This is $(\log 2)^{-1}$-strongly convex for $\hy_i \in (0, 1)$. The rest follows from Lemma \ref{lem:loss_to_functional}.
\end{proof}

From the strongly convexity we can derive the uniqueness of the function (up to $L_2$ norm) and relate $\d$-minimal set to an $L_2$ neighborhood of an optimal classifier. 

\begin{thm}
For any $\lambda$-strongly convex functional $f$ on a convex hypothesis class $\Hc$ w.r.t.~measure $\mu$, the minimizer is almost surely unique, in the sense that if $h_1^*$, $h_2^*$ are both minimizers, then 
\be
\|h_1^* - h_2^*\|_{2, \mu} = 0, 
\en
and thus $h_1^*$, $h_2^*$ only differ by a measure zero set. Suppose $h^* \in \argmin f(h)$. If $f(h) \leq f^* + \e$ with $f^*$ the optimal value, then 
\be\label{eq:function_set}
\|h - h^*\|_{2, \mu} \leq \sqrt{\frac{2}{\lambda}\e}.
\en
\end{thm}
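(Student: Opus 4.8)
The plan is to derive both claims directly from the zeroth-order definition of strong convexity in Definition \ref{def:strongly_cvx_functional}, since we are given no first-order (gradient) characterization of $f$ and $\Hc$ carries only a convex structure. Convexity of $\Hc$ is exactly what guarantees that every convex combination appearing below is again an element of $\Hc$, so that the minimizer inequality $f(\cdot) \geq f^*$ applies to it; the strong-convexity gap term $\frac{\lambda}{2}\alpha(1-\alpha)\|\cdot\|_{2,\mu}^2$ is then what we play against this lower bound.

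For the almost-sure uniqueness, I would take two minimizers $h_1^*, h_2^*$ with $f(h_1^*) = f(h_2^*) = f^*$ and instantiate strong convexity at $\alpha = \tfrac12$. The right-hand side becomes $f^* - \frac{\lambda}{8}\|h_1^* - h_2^*\|_{2,\mu}^2$, while the left-hand side, being $f$ evaluated at a point of $\Hc$, is at least $f^*$. This forces $\|h_1^* - h_2^*\|_{2,\mu}^2 \leq 0$, hence $\|h_1^* - h_2^*\|_{2,\mu} = 0$; since $\|\cdot\|_{2,\mu}$ is an $L_2$ seminorm over $\mu$, this means $h_1^*$ and $h_2^*$ agree $\mu$-almost everywhere, i.e.\ differ only on a measure-zero set.

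For the quantitative bound, I would apply strong convexity to the pair $(h, h^*)$ with an arbitrary $\alpha \in (0,1)$. Using $f(h^*) = f^*$ and bounding the left-hand side from below by $f^*$ (the convex combination lies in $\Hc$), I obtain $f^* \leq \alpha f(h) + (1-\alpha)f^* - \frac{\lambda}{2}\alpha(1-\alpha)\|h - h^*\|_{2,\mu}^2$. The $f^*$ terms cancel to give $\frac{\lambda}{2}\alpha(1-\alpha)\|h-h^*\|_{2,\mu}^2 \leq \alpha(f(h) - f^*) \leq \alpha\epsilon$. Dividing by $\alpha$ yields $\frac{\lambda}{2}(1-\alpha)\|h-h^*\|_{2,\mu}^2 \leq \epsilon$, and letting $\alpha \to 0^+$ removes the $(1-\alpha)$ factor to give $\frac{\lambda}{2}\|h-h^*\|_{2,\mu}^2 \leq \epsilon$, which is exactly \eqref{eq:function_set}.

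The only subtlety, such as it is, is that we cannot invoke the familiar first-order estimate $f(h) \geq f^* + \frac{\lambda}{2}\|h-h^*\|_{2,\mu}^2$ directly, because strong convexity is supplied only in the two-point convex-combination form; the limiting argument $\alpha \to 0^+$ is precisely the device that recovers this estimate from the zeroth-order inequality, and everything else is cancellation. I would state at the outset that all convex combinations used are legitimate members of $\Hc$ so that the lower bound $f(\cdot) \geq f^*$ is justified at each step.
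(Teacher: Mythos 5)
Your proof is correct and follows essentially the same route as the paper: apply the two-point strong convexity inequality to a convex combination of $h$ and $h^*$, lower-bound the left side by $f^*$, cancel, and pass to the limit in the mixing weight (the paper puts weight $\alpha$ on $h^*$ and sends $\alpha \to 1$, which is the same computation as your $\alpha \to 0^+$ after relabeling). Your explicit $\alpha = \tfrac12$ argument for uniqueness is a harmless elaboration of what the paper dispatches by noting uniqueness is the $\e = 0$ case of the quantitative bound.
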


\begin{proof}
It suffices to prove the second claim only. From the definition of strong convexity, for $\a\in [0, 1]$ we have:
\begin{align}
f^* \leq f(\a h^* + (1 - \a) h) &\leq \a f(h^*) + (1 - \a) f(h) - \frac{\lambda}{2}\a(1 - \a) \|h^* - h\|_{2, \mu}^2 \tr
&\leq (1 - \a) \e + f^* - \frac{\lambda}{2}\a(1 - \a) \|h^* - h\|_{2, \mu}^2,
\end{align}
where we use $h^* \in \argmin f(h)$ and $f(h) \leq f^* + \e$. From this inequality we obtain that:
\be
\|h - h^*\|_{2, \mu}^2 \leq \frac{2 \e}{\a \lambda}.
\en
By taking $\a \to 1$ we obtain \eqref{eq:function_set}.
\end{proof}

With this theorem we can characterize the $\d$-minimal set $\argmin(\e_\Dc, \d)$ as some neighborhood of the unique optimal classifier $h^*$, if the functional $\e_\Dc$ is strongly convex and Lipschitz continuous. Symbolically, it can be represented as:
\be\label{eq:minimal_set_norm_ball}
\Bc_2(h^*) \subseteq \argmin(\e_\Dc, \d) \subseteq  \Bc_1(h^*),
\en
where $\Bc_p(h^*)$ is some $L_p$ norm ball with the center $h^*$. 

\subsubsection{Parametric formulation of classifier}\label{sec:parametrize}

We discussed the $L_p$ distance between functions in previous subsections. In practice the functions are often parametrized:
\be
h(x) = q(\theta, x).
\en
One can show that $L_p$ distances between two functions $h = q(\theta, \cdot)$ and $h' = q(\theta', \cdot)$ on the function space can be upper bounded:
\begin{thm}\label{thm:parametrize}
Suppose $h = q(\theta, \cdot)$ is parameterized by $\theta$ and for any $x\in \Xc$, $q(\cdot, x)$ is $L$-Lipschitz continuous (w.r.t.~$\ell_2$ norm), then for any $1 \leq p < \infty$ and probability measure $\mu$ we have:  
\be
\|h - h'\|_{p, \mu} \leq L \|\theta - \theta'\|_2.
\en
\end{thm}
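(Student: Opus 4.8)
The plan is to prove the bound by working directly from the definition of the $L_p$ distance and applying the pointwise Lipschitz hypothesis inside the integral. The crucial observation is that the Lipschitz constant $L$ in the bound $\|q(\theta, x) - q(\theta', x)\|_2 \leq L\|\theta - \theta'\|_2$ is \emph{uniform in $x$}, so the entire right-hand side is a constant that does not depend on the integration variable and can be factored out of the integral.

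Concretely, I would start by writing
\begin{align}
\|h - h'\|_{p, \mu}^p = \int \|q(\theta, x) - q(\theta', x)\|_2^p \, d\mu(x). \nonumber
\end{align}
By the assumed $L$-Lipschitz continuity of $q(\cdot, x)$ for each fixed $x$, the integrand is bounded pointwise by $(L\|\theta - \theta'\|_2)^p$. Substituting this bound and pulling the constant out gives
\begin{align}
\|h - h'\|_{p, \mu}^p \leq (L\|\theta - \theta'\|_2)^p \int d\mu(x) = (L\|\theta - \theta'\|_2)^p, \nonumber
\end{align}
where the last equality uses that $\mu$ is a probability measure, so $\int d\mu = 1$. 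Taking the $p$-th root of both sides (valid since both sides are nonnegative and $p \geq 1$) yields the claimed inequality $\|h - h'\|_{p, \mu} \leq L\|\theta - \theta'\|_2$.

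There is no substantive obstacle here: the only points requiring care are that the Lipschitz bound holds with the \emph{same} constant for every $x$ (so it survives the supremum over the domain and can be taken outside the integral) and that the normalization $\int d\mu = 1$ is exactly what removes any dependence on the measure. Because both of these hold by hypothesis, the estimate is a one-line monotonicity argument for the integral followed by the normalization. The restriction $p < \infty$ is used only so that the $p$-th root is well defined in the standard way; the argument is otherwise independent of $p$.
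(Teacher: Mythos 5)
Your proof is correct and follows essentially the same route as the paper's: both unfold the definition of $\|h-h'\|_{p,\mu}$, bound the integrand pointwise by $(L\|\theta-\theta'\|_2)^p$ using the uniform-in-$x$ Lipschitz hypothesis, and use $\int d\mu = 1$ before taking the $p$-th root. No gaps.
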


\begin{proof}
From the Lipschitz continuity we can derive:
\begin{align}
\|h - h'\|_{p, \mu} &= \left(\int \|h(x) - h'(x)\|_2^p d\mu\right)^{1/p}  \tr
&= \left(\int \|q(\theta, x) - q(\theta', x)\|_2^p d\mu\right)^{1/p}  \tr
&\leq \left(\int (L\|\theta - \theta'\|_2)^p d\mu\right)^{1/p} \tr
&= L\|\theta - \theta'\|_2.
\end{align}
\end{proof}

The theorem above tells us that in parametrized models the closeness in terms of parameters can imply the closeness in terms of the model function. However, the converse may not be true. For example, we can permute hidden neurons of the same layer in a neural network and obtain the same function, but the parametrization can be drastically different.

\subsection{Comparison with other frameworks} \label{app:comparison}

We compare our \Cref{alg:train_transfer} with existing adversarial training frameworks. 

\paragraph{Distributional robustness optimization (DRO)}

\citet{sinha2017certifying} proposed a distributional robustness framework for generalizing to unseen domains. In this framework, the following minimax problem is proposed:
\be
\min_{g, h}\max_{\Sc': W(\Sc', \Sc) \leq \d} \epsilon_\Sc'(h\circ g),
\en
which says that the classification error is small for any distribution $\Sc'$ close to our original source distribution $\Sc$. Here $W(\cdot, \cdot)$ denotes the Wasserstein metric. As we have discussed in Example \ref{eg:d_transfer_is_not_sim}, transferability does not necessarily mean that the distributions have to be close.

\paragraph{DANN} The Domain Adversarial Neural Network (DANN) formulation \citep{ganin2015unsupervised} solves the following minimax optimization problem:
\begin{align}
\min_{g, h} \max_{h'} \e_\Sc(h\circ g) + \Eb_{x\sim p_\Sc|_x} [\log (h'\circ g)(x)] +  \Eb_{x\sim p_\Tc|_x} [\log (1 - (h'\circ g)(x))],
\end{align}
where $g$ is a feature embedding, $h$ is a classifier and $h'$ is a domain discriminator. 
If we can solve the inner maximization problem exactly, then we obtain the Jensen--Shannon divergence between the push-forwards of the input distributions $g\# p_S|_x$ and $g\# p_T|_x$. In other words, we want to obtain a feature embedding $g$ and a classifier $h$ such that:
\be
\e_\Sc(h\circ g) + D_{\rm JS}((g\# p_{\Sc}|_x) \| (g\# p_{\Tc}|_x)),
\en
is minimized, with $D_{\rm JS}$ denoting the Jensen--Shannon divergence. On the one hand, we need to have small classification error given the feature embedding $g$. On the other hand, the feature embedding between source and target should be similar. Our framework is similar to DANN in the sense that they both solve minimax problems. The difference is that we minimize the transfer measure which is weaker than the similarity between distributions (\Cref{eg:d_transfer_is_not_sim}).

\paragraph{$\Hc\Delta\Hc$-divergence} Finally we prove that our transfer measure is tighter than $\Hc\Delta\Hc$-divergence \citep{blitzer2007learning}. We rewrite the theoretical result regarding $\Hc\Delta\Hc$-divergence:
\begin{thm}[Theorem 1, \cite{blitzer2007learning}]\label{thm:h-divergence}
Let $\lambda^* = \argmin_{h\in \Hc}(\e_\Tc(h) + \e_\Sc(h))$, and the $\Hc\Delta\Hc$-divergence between the input marginal distributions $\Sc|_x$ and $\Tc|_x$ to be $d_{\Hc\Delta \Hc}(\Sc|_x, \Tc|_x)$, then for binary classification and for any $h\in \Hc$ we have:
\begin{align}
\e_\Tc(h) \leq \e_\Sc(h) + \lambda^* + \frac{1}{2} d_{\Hc\Delta \Hc}(\Sc|_x, \Tc|_x).
\end{align}
\end{thm}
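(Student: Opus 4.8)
The plan is to reproduce the classical bridging argument of \citet{ben2010theory} in the notation used here, since the stated bound is exactly the source/target error inequality of the $\Hc\Delta\Hc$ theory. Write $f_\Dc$ for the labeling function of domain $\Dc$ and, for two hypotheses $h, h'\in\Hc$, abbreviate their expected disagreement under the input marginal as $\e_\Dc(h, h') := \Pr_{x\sim\Dc|_x}[h(x)\neq h'(x)]$, so that $\e_\Dc(h) = \e_\Dc(h, f_\Dc)$. The single analytic ingredient is that for the $0$-$1$ loss the pointwise inequality $\one(a\neq c)\le \one(a\neq b)+\one(b\neq c)$ holds, so taking expectations yields the triangle inequality $\e_\Dc(h, h'')\le \e_\Dc(h, h') + \e_\Dc(h', h'')$ for any three hypotheses or labeling functions; this makes disagreement a pseudo-metric on functions.

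First I would introduce the ideal joint hypothesis $h^*\in\argmin_{h\in\Hc}(\e_\Sc(h)+\e_\Tc(h))$, which realizes $\lambda^* = \e_\Sc(h^*)+\e_\Tc(h^*)$. Applying the triangle inequality with the target labeling function gives $\e_\Tc(h)\le \e_\Tc(h^*) + \e_\Tc(h, h^*)$. The crux is to transport the disagreement term from the target marginal to the source marginal: I would write $\e_\Tc(h, h^*) = \e_\Sc(h, h^*) + \big(\e_\Tc(h, h^*) - \e_\Sc(h, h^*)\big)$ and bound the parenthesized difference by its absolute value. Because $h, h^*\in\Hc$, the region where they disagree is exactly a member of the symmetric-difference class $\Hc\Delta\Hc$, so by definition of the $\Hc\Delta\Hc$-divergence this difference of disagreement probabilities is at most $\tfrac12 d_{\Hc\Delta\Hc}(\Sc|_x, \Tc|_x)$, the factor $\tfrac12$ coming from the factor $2$ built into the divergence.

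Finally I would apply the triangle inequality a second time on the source to split $\e_\Sc(h, h^*)\le \e_\Sc(h, f_\Sc) + \e_\Sc(f_\Sc, h^*) = \e_\Sc(h) + \e_\Sc(h^*)$. Assembling the three estimates gives
\be
\e_\Tc(h) &\le& \e_\Sc(h) + \big(\e_\Sc(h^*) + \e_\Tc(h^*)\big) + \tfrac12 d_{\Hc\Delta\Hc}(\Sc|_x, \Tc|_x) \tr
&=& \e_\Sc(h) + \lambda^* + \tfrac12 d_{\Hc\Delta\Hc}(\Sc|_x, \Tc|_x),
\en
which is the claim. The main obstacle is conceptual rather than computational: one must fix the $\Hc\Delta\Hc$-divergence with the correct normalization and verify that the disagreement region of \emph{every} pair $h, h^*\in\Hc$ arising in the chain is captured by a single classifier in $\Hc\Delta\Hc$, so that the one quantity $d_{\Hc\Delta\Hc}$ uniformly controls the source/target discrepancy. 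Care is also needed to justify the triangle inequality for $0$-$1$ disagreement in full generality, since it relies only on the pointwise indicator inequality and therefore remains valid even for stochastic labeling functions.
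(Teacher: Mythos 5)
The paper does not prove this statement at all: it is imported verbatim as Theorem~1 of \citet{blitzer2007learning} (equivalently, the bound of \citet{ben2010theory}) and used only as an ingredient in Prop.~\ref{prop:compare_trans_measure_hdiv}, so there is no in-paper proof to compare against. Your reconstruction is the standard argument from that literature and it is correct: the chain $\e_\Tc(h)\le \e_\Tc(h^*)+\e_\Tc(h,h^*)$, then $\e_\Tc(h,h^*)\le \e_\Sc(h,h^*)+\tfrac12 d_{\Hc\Delta\Hc}(\Sc|_x,\Tc|_x)$ via the definition of the divergence (the disagreement region of any pair $h,h^*\in\Hc$ lies in the symmetric-difference class, and the built-in factor $2$ cancels the $\tfrac12$), and finally $\e_\Sc(h,h^*)\le\e_\Sc(h)+\e_\Sc(h^*)$, assembles exactly to $\e_\Sc(h)+\lambda^*+\tfrac12 d_{\Hc\Delta\Hc}(\Sc|_x,\Tc|_x)$. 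Two minor points: the statement's $\lambda^*=\argmin_{h\in\Hc}(\e_\Tc(h)+\e_\Sc(h))$ is an abuse of notation for the minimal \emph{value}, which you correctly read as $\e_\Sc(h^*)+\e_\Tc(h^*)$; and your closing caveat about stochastic labeling functions is handled correctly, since the pointwise indicator inequality $\one(a\neq c)\le\one(a\neq b)+\one(b\neq c)$ gives the triangle inequality directly in expectation over $(x,y)\sim\Dc$ without requiring a deterministic $f_\Dc$.
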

Now let us prove that our \Cref{thm:target_bound} is tighter than \Cref{thm:h-divergence}:
\begin{prop}\label{prop:compare_trans_measure_hdiv}
The target error bound with our transfer measure $\Tt_{\Gamma}(\Sc\| \Tc)$ is tighter than the target error bound with $\Hc\Delta\Hc$-divergence, i.e., for any $h\in \Hf$ we have:
\begin{align}\label{eq:transfer_measure_h_div}
\e_\Tc(h) \leq \e_\Sc(h) + \e_{\Tc}^* - \e_{\Sc}^* + \Tt_{\Hf}(\Sc \| \Tc)  \leq \e_\Sc(h) + \lambda^* + \frac{1}{2} d_{\Hc\Delta \Hc}(\Sc|_x, \Tc|_x).
\end{align}
\end{prop}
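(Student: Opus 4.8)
The first inequality in \eqref{eq:transfer_measure_h_div} is exactly \Cref{thm:target_bound} specialized to the one-sided measure, so the plan is to establish only the second inequality. The linchpin I would exploit first is that the left-hand quantity telescopes: since $\e_\Sc^*$ and $\e_\Tc^*$ do not depend on $h$, the definition \eqref{eq:one_sided_measure} gives
\begin{align}
\e_\Tc^* - \e_\Sc^* + \Tt_\Hf(\Sc\|\Tc) = \sup_{h\in\Hf}\left[\e_\Tc(h) - \e_\Sc(h)\right]. \nonumber
\end{align}
Hence it suffices to show $\e_\Tc(h) - \e_\Sc(h) \leq \lambda^* + \tfrac12 d_{\Hc\Delta\Hc}(\Sc|_x,\Tc|_x)$ \emph{uniformly} over every $h\in\Hf\subseteq\Hc$; taking the supremum then preserves the bound because its right-hand side is independent of $h$. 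I would state at the outset that we work with the binary $0$--$1$ loss $\e_\Dc=\e_\Dc^{\rm 0-1}$ over labels $\{-1,+1\}$, matching the setting of \Cref{thm:h-divergence}.

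For the uniform bound I would reproduce the standard disagreement decomposition of \citet{ben2010theory}. Fix the joint optimal classifier $h^\star = \argmin_{h\in\Hc}(\e_\Tc(h)+\e_\Sc(h))$, so $\lambda^* = \e_\Tc(h^\star)+\e_\Sc(h^\star)$, and write $\e_\Dc(h,h') := \Pr_{x\sim\Dc|_x}[h(x)\neq h'(x)]$ for the expected disagreement. From the pointwise triangle inequality for the $0$--$1$ metric, $\one(h(x)\neq y)\leq \one(h^\star(x)\neq y)+\one(h(x)\neq h^\star(x))$, taking the expectation under $\Tc$ yields $\e_\Tc(h)\leq \e_\Tc(h^\star)+\e_\Tc(h,h^\star)$. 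I would then swap the domain in the disagreement term via $\e_\Tc(h,h^\star)\leq \e_\Sc(h,h^\star)+\lvert\e_\Tc(h,h^\star)-\e_\Sc(h,h^\star)\rvert$ and bound the gap by $\tfrac12 d_{\Hc\Delta\Hc}(\Sc|_x,\Tc|_x)$, which is legitimate because $h\oplus h^\star\in\Hc\Delta\Hc$ and $\e_\Dc(h,h^\star)$ is precisely the disagreement probability optimized in the $\Hc\Delta\Hc$-divergence. A second application of the triangle inequality gives $\e_\Sc(h,h^\star)\leq\e_\Sc(h)+\e_\Sc(h^\star)$. Chaining these produces $\e_\Tc(h)\leq \e_\Sc(h)+\e_\Tc(h^\star)+\e_\Sc(h^\star)+\tfrac12 d_{\Hc\Delta\Hc}=\e_\Sc(h)+\lambda^*+\tfrac12 d_{\Hc\Delta\Hc}$, which is exactly the uniform bound required.

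I expect no serious obstacle: once the telescoping identity is in place, the remainder is routine bookkeeping with the $0$--$1$ triangle inequality. The one point demanding care is the role of the two hypothesis classes in the uniformity argument — the decomposition fixes $h^\star\in\Hc$ while the perturbed classifier $h$ ranges only over $\Hf\subseteq\Hc$, so that $h\oplus h^\star$ genuinely lies in $\Hc\Delta\Hc$ and the divergence bound is applicable. This asymmetry is also precisely what makes the comparison meaningful: restricting the supremum to $\Hf$ rather than all of $\Hc$ can only shrink the left-hand side, so our transfer-measure bound is never looser, and is generically strictly tighter, than the $\Hc\Delta\Hc$-divergence bound.
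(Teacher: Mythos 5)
Your proposal is correct and follows essentially the same route as the paper's proof: both rest on the telescoping identity $\e_\Tc^* - \e_\Sc^* + \Tt_\Hf(\Sc\|\Tc) = \sup_{h\in\Hf}(\e_\Tc(h)-\e_\Sc(h))$ and then bound that supremum by $\lambda^* + \tfrac12 d_{\Hc\Delta\Hc}(\Sc|_x,\Tc|_x)$. The only difference is that the paper invokes Theorem~\ref{thm:h-divergence} as a black box (applied to the maximizer of the supremum), whereas you inline the standard disagreement-decomposition proof of that bound and take the supremum of a uniform estimate — a marginally more self-contained variant that avoids assuming the supremum is attained, but not a different argument.
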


\begin{proof}
Note that from \Cref{def:transfer_measure} we can rewrite the middle of \eqref{eq:transfer_measure_h_div} as $\sup_{h\in \Hf}(\epsilon_\Tc(h) - \e_\Sc(h))$. Suppose $h^* \in \argmax_{h\in \Hf}(\epsilon_\Tc(h) - \e_\Sc(h))$, then from \Cref{thm:h-divergence} we have:
\begin{align}
\e_\Tc(h^*) \leq \e_\Sc(h^*) + \lambda^* + \frac{1}{2} d_{\Hc\Delta \Hc}(\Sc|_x, \Tc|_x),
\end{align}
and thus:
\begin{align}
 \e_{\Tc}^* - \e_{\Sc}^* + \Tt_{\Hf}(\Sc \| \Tc) =  \sup_{h\in \Hf}(\epsilon_\Tc(h) - \e_\Sc(h)) = \epsilon_\Tc(h^*) - \e_\Sc(h^*) \leq \lambda^* + \frac{1}{2} d_{\Hc\Delta \Hc}(\Sc|_x, \Tc|_x).
\end{align}
\end{proof}

\section{Additional Experiments}\label{app:add_exp}

We present additional experimental details in this section. 




\subsection{Datasets}\label{app:datasets}

The four datasets in this paper are RotatedMNIST \citep{ghifary2015domain}, PACS \citep{li2017deeper},  Office-Home \citep{venkateswara2017deep} and WILDS-FMoW \citep{koh2020wilds}.
Here is a short description:

\begin{itemize}
\item RotatedMNIST: this dataset is an adaptation of MNIST. It has six domains, and each domain rotates the images in MNIST with a different angle. The angles are $\{0\degree, 15\degree, 30\degree, 45\degree, 60\degree, 75\degree\}$. We choose the domain with $0\degree$ to be the target domain and the rest to be the source domains. Each image is grayscale and has $28\times 28$ pixels. The label set is $\{0, 1, \dots, 9\}$. The numbers of images of each domain are 11667, 11667, 11667, 11667, 11666, 11666. The total is 70000.
\item PACS: this dataset has four domains: photo (P), art painting (A), cartoon (C) and sketch (S). Each image is RGB colored and has $224\times 224$ pixels. There are 7 categories in total and 9991 images. The number of images of each domain: A: 2048; C: 2344; P: 1670; S: 3929. We choose the art painting domain to be the target domain and the rest to be the source domains. 
\item  Office-Home: this dataset has four domains: Art, Clipart, Product, Real-World. Each image is RGB colored and has $224\times 224$ pixels. There are 65 categories and 15588 images in total. The numbers of images of each domain: Art: 2427, Clipart: 4365, Product: 4439, Real-World: 4357. We choose the Art domain to be the target domain and the rest to be the source domains. 

\item WILDS-FMoW: WILDS \citep{koh2020wilds} is a benchmark for domain generalization including several datasets. The Functional Map of the World (FMoW) is one of them, which is a variant of \citet{christie2018functional}. Each image is RGB colored and has $224\times 224$ pixels. There are 62 categories and 469835 images in total. There six domains in total and we choose five of them, since the last domain has too few images. The numbers of images of each domain are 103299, 162333, 33239, 157711, 13253, and we choose the last domain as the target domain. The rest are source domains. The license can be found at \url{https://wilds.stanford.edu/datasets/}. 
\end{itemize}

\subsection{Experimental settings}\label{app:setting}

We introduce the experimental settings in this subsection. The code is modified from \url{https://github.com/facebookresearch/DomainBed}, with the license in \url{https://github.com/facebookresearch/DomainBed/blob/master/LICENSE}. 
\begin{itemize}
\item Hardware: Our experiments are run on a cluster of GPUs, including NVIDIA RTX6000, T4 and P100. 
\item Datasplit: we use the same data split as in \citet{gulrajani2020search} except the WILDS-FMoW dataset, where we throw away the last region because it has only very few samples (201 samples). For all datasets we use data augmentation.

\item Batch size: for all experiments on RotatedMNIST we choose batch size 64, for  Office-Home and PACS we choose batch size 32 (for our Transfer algorithm and PACS we choose batch size 16), and for WILDS-FMoW we choose batch size 16. In each epoch, we go through $k$ steps, where $k$ is the smallest number of samples among domains, divided by the batch size.
\item Optimization: for the training of all other algorithms different from our Transfer Algorithm, we use the default setting from \citet{gulrajani2020search}. We choose Adam as the default optimizer for training, with learning rate 1e-3 for RotatedMNIST, and learning rate 5e-5 for other datasets. For RotatedMNIST, PACS and  Office-Home we run for $5000$ steps; For WILDS-FMoW we run for $50000$ steps. 
\item Neural Architecture: we use the same neural architecture as in \citet{gulrajani2020search}. For each dataset, the feature embedding and classifier architectures for all algorithms are the same. Specifically, all classifiers are linear layers. For RotatedMNIST the feature embedding is CNN with batch normalization and for other datasets the feature embedding is ResNet50. 
\item Algorithm \ref{alg:measure_transfer}: we choose Adam optimizer with projection. The learning rates are the same as the training algorithms: for RotatedMNIST we choose 1e-3, and we choose 5e-5 for others. We run the algorithm for 10 epochs and for three independent trials. Among the three trials, we choose the accuracies with the largest gap between the target domain and one of the source domains. The source domain is chosen in such a way that the gap is the largest among all source domains.
\item Algorithm \ref{alg:train_transfer} optimization: for RotatedMNIST we run Adam for minimization with learning rate 0.01 and Stochastic Gradient Ascent (SGA) for maximization with learning rate 0.01. We choose the ascent steps to be 30 for each inner loop and the projection radius to be $\delta = 10.0$; for PACS we run Adam for minimization with learning rate 5e-5 and Stochastic Gradient Ascent (SGA) for maximization with learning rate 0.001. We choose the ascent steps to be 30 for each inner loop and the projection radius to be $\delta = 0.3$; for  Office-Home dataset we load the pretrained model from SD, and run Stochastic Gradient Descent Ascent with learning rate $0.001$ and $\delta = 0.3$, i.e., each inner loop takes only one step of SGA and each outer loop takes one step of SGD; for WILDS-FMoW dataset we loaded the pretrained model from ERM, and run SGA for 20 steps in each inner loop, with learning rate $0.001$ and $\delta = 0.5$, for each outer loop we run SGD with $\texttt{lr}=0.001$. 

\item Step number for Algorithm \ref{alg:train_transfer}: for RotatedMNIST and PACS we train for $8000$ outer steps with each outer step including $30$ inner steps. For  Office-Home we train for $5000$ outer steps with each outer step including one inner step; for WILDS-FMoW we train for $5000$ outer loops with each outer step including $20$ inner steps. 
\end{itemize}

\subsection{Additional results}

We present additional experiments on RotatedMNIST \citep{ghifary2015domain}, PACS \citep{li2017deeper} and  Office-Home \citep{venkateswara2017deep}. Thanks to the suite from \citet{gulrajani2020search}, we are able to compare a wide array of algorithms under the same settings. The algorithms that we compare include
\begin{itemize}
\item Empirical Risk Minimization \citep[ERM,][]{vapnik1992principles}
\item Invariant Risk Minimization \citep[IRM,][]{arjovsky2019invariant}
\item Domain Adversarial Neural Network \citep[DANN, ][]{ganin2015unsupervised}
\item Conditional DANN \citep[CDANN,][]{li2018deep}
\item Correlation Alignment \citep[CORAL,][]{sun2016deep}
\item Maximum Mean Discrepancy \citep[MMD,][]{li2018domain2}
\item Variance Risk Extrapolation \citep[VREx,][]{krueger2020out}
\item Mariginal Transfer Learning \citep[MTL,][]{blanchard2021domain}
\item Spectral Decoupling \citep[SD,][]{pezeshki2020gradient}
\item Meta Learning Domain Generalization \citep[MLDG,][]{li2018learning}
\item Mixup \citep{xu2020adversarial, yan2020improve}
\item Representation Self-Challenging \citep[RSC,][]{huang2020self}
\item Group Distributionally Robust Optimization \citep[GroupDRO,][]{Sagawa*2020Distributionally}
\item  Style-Agnostic Network \citep[SagNet,][]{nam2019reducing}
\end{itemize}

\subsubsection{RotatedMNIST}

In Figure \ref{fig:measure_transfer} we show the performance of various algorithms on RotatedMNIST, including ERM, IRM, DANN, CORAL, MMD, VREx, MTL, SD and our Transfer algorithm. It can be seen that many algorithms fail our attack. For instance, based on the learned features, MTL classifies a source domain with $\sim$95\% (at $\delta = 3.0$) but the target accuracy drops by $\sim$20\%.

We also compare our Transfer algorithm (Algorithm \ref{alg:train_transfer}) with different hyperparameters.  
From Figure \ref{fig:RMNIST_transfer_alg} we can see that for RotatedMNIST, taking more inner steps (per outer step) has better performance.

Finally, we present results from Algorithm \ref{alg:measure_transfer} with information about losses and accuracies, for a wide array of algorithms in Table \ref{tab:evaluation_current_transfer}. 
\begin{figure}[ht]
    \centering
    \includegraphics[width=0.8\textwidth]{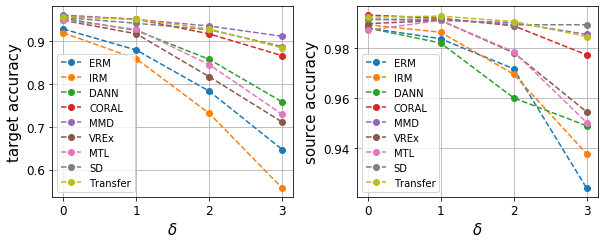}
    \caption{Measuring the transferability of various algorithms for domain generalization on RotatedMNIST. For the Transfer algorithm we take $\d = 10.0$ and the number of ascent steps to be $30$. }
    \label{fig:measure_transfer}
\end{figure}

\begin{figure}[ht]
    \centering
    \includegraphics[width=0.8\textwidth]{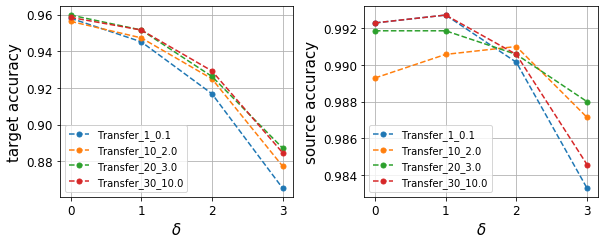}
    \caption{Evaluation of transferability of popular algorithms for domain generalization on Transfer algorithm with different hyperparameters. The dataset is RotatedMNIST. For the ascent method we use SGD with learning rate $0.01$, and the descent method to be Adam with learning rate $10^{-3}$. ``\texttt{Transfer\_$d$\_$\delta$}'' means the inner loop takes $d$ steps with the radius $\d$.}
    \label{fig:RMNIST_transfer_alg}
\end{figure}

\begin{table}[]
\caption{Evaluation of transferability of popular algorithms for domain generalization on RotatedMNIST. \textbf{algorithm}: the model that we evaluate; \textbf{$\bm{\delta}$}: the adversarial radius $\d$ we choose in Algorithm \ref{alg:measure_transfer_whole}; \textbf{max/min index}: the index of the domain with the maximal/minimal (test) classification errors (w.r.t.~0-1 loss), and index $0$ denotes the target domain; \textbf{max/min loss}: the largest/smallest loss among domains (including the target domain); \textbf{worst/best acc}: the smallest/largest classification test accuracies among domains (including the target domain). All the algorithms are using the same architectures for the feature embedding and the classifier. }
\centering
\begin{tabular}{c c c c c c c c c}
\bf algorithm & $\bm{\delta}$ & \bf max index &\bf  min index & \bf max loss & \bf min loss & \bf worst acc & \bf best acc 
\\
\hline \hline 
\rowcolor{lightgray}
\rowcolor[gray]{0.9}
ERM & 0.0 & 0 & 4 & 0.229 & 0.003 & 92.93\% & 98.80\%\\
\rowcolor[gray]{0.9}
ERM & 2.0 & 0 & 4 & 0.975 & 0.083 & 78.61\% & 97.17\%\\
\rowcolor{lightgray}
GroupDRO & 0.0 & 0 & 4 & 0.136 & 0.000 & 95.76\% & 99.27\%\\
\rowcolor{lightgray}
GroupDRO & 2.0 & 0 & 4 & 0.370 & 0.015 & 84.48\% & 98.07\%\\
\rowcolor[gray]{0.9}
SagNet & 0.0 & 0 & 4 & 0.109 & 0.000 & 96.61\% & 99.36\%\\
\rowcolor[gray]{0.9}
SagNet & 2.0 & 0 & 4 & 0.222 & 0.008 & 91.30\% & 98.67\%\\
\rowcolor{lightgray}
IRM & 0.0 & 0 & 4 & 0.578 & 0.263 & 81.87\% & 92.20\%\\
\rowcolor{lightgray}
IRM & 2.0 & 0 & 4 & 1.759 & 0.637 & 46.29\% & 86.76\%\\
\rowcolor[gray]{0.9}
DANN & 0.0 & 0 & 5 & 0.136 & 0.014 & 95.41\% & 98.29\%\\
\rowcolor[gray]{0.9}
DANN & 2.0 & 0 & 5 & 0.441 & 0.098 & 85.81\% & 96.19\%\\
\rowcolor{lightgray}
ARM & 0.0 & 0 & 4 & 0.145 & 0.002 & 95.76\% & 99.10\%\\
\rowcolor{lightgray}
ARM & 2.0 & 0 & 4 & 0.523 & 0.047 & 84.23\% & 98.54\%\\
\rowcolor[gray]{0.9}
Mixup & 0.0 & 0 & 4 & 0.175 & 0.009 & 94.98\% & 99.36\%\\
\rowcolor[gray]{0.9}
Mixup & 2.0 & 0 & 4 & 0.701 & 0.035 & 73.98\% & 98.71\%\\
\rowcolor{lightgray}
CORAL & 0.0 & 0 & 4 & 0.119 & 0.001 & 95.93\% & 99.31\%\\
\rowcolor{lightgray}
CORAL & 2.0 & 0 & 4 & 0.230 & 0.005 & 91.77\% & 98.89\%\\
\rowcolor{lightgray}
CORAL & 3.0 & 0 & 4 & 0.372 & 0.056 & 86.67\% & 97.73\%\\
\rowcolor[gray]{0.9}
MMD & 0.0 & 0 & 3 & 0.125 & 0.005 & 96.19\% & 99.14\%\\
\rowcolor[gray]{0.9}
MMD & 2.0 & 0 & 3 & 0.199 & 0.014 & 93.61\% & 99.01\%\\
\rowcolor[gray]{0.9}
MMD & 3.5 & 0 & 3 & 0.300 & 0.036 & 89.54\% & 97.86\%\\
\rowcolor{lightgray}
RSC & 0.0 & 0 & 4 & 0.146 & 0.000 & 95.46\% & 99.31\%\\
\rowcolor{lightgray}
RSC & 1.0 & 0 & 4 & 0.360 & 0.007 & 89.33\% & 98.71\%\\
\rowcolor{lightgray}
RSC & 2.0 & 0 & 4 & 1.343 & 0.289 & 72.01\% & 92.11\%\\
\rowcolor[gray]{0.9}
VREx & 0.0 & 0 & 5 & 0.137 & 0.003 & 94.94\% & 98.97\%\\
\rowcolor[gray]{0.9}
VREx & 2.0 & 0 & 5 & 0.551 & 0.082 & 81.74\% & 97.81\%\\
\rowcolor{lightgray}
CDANN & 0.0 & 0 & 5 & 0.121 & 0.010 & 95.97\% & 98.76\%\\
\rowcolor{lightgray}
CDANN & 2.0 & 0 & 5 & 0.410 & 0.079 & 84.78\% & 95.67\%\\
\rowcolor[gray]{0.9}
MLDG & 0.0 & 0 & 5 & 0.151 & 0.000 & 95.63\% & 98.89\%\\
\rowcolor[gray]{0.9}
MLDG & 2.0 & 0 & 5 & 0.351 & 0.006 & 88.90\% & 98.76\%\\
\rowcolor{lightgray}
MTL & 0.0 & 0 & 4 & 0.150 & 0.000 & 94.98\% & 99.44\%\\
\rowcolor{lightgray}
MTL & 2.0 & 0 & 4 & 0.417 & 0.014 & 84.57\% & 98.20\%\\
\rowcolor[gray]{0.9}
SD & 0.0 & 0 & 2 & 0.250 & 0.092 & 95.63\% & 99.01\%\\
\rowcolor[gray]{0.9}
SD & 2.0 & 0 & 2 & 0.630 & 0.490 & 92.76\% & 98.97\%\\
\rowcolor[gray]{0.9}
SD & 3.0 & 0 & 2 & 1.070 & 0.937 & 88.81\% & 98.33\%\\
\hline
\end{tabular}
\label{tab:evaluation_current_transfer}
\end{table}

\subsubsection{PACS}

We implement similar experiments on PACS. Figure \ref{fig:measure_transfer_PACS} and \Cref{tab:evaluation_transfer_pacs} show the results of Algorithm \ref{alg:measure_transfer}. \Cref{fig:transfer_alg_PACS} shows that taking more inner steps has better performance.

\begin{figure*}[ht]
    \centering
    \includegraphics[width=0.8\textwidth]{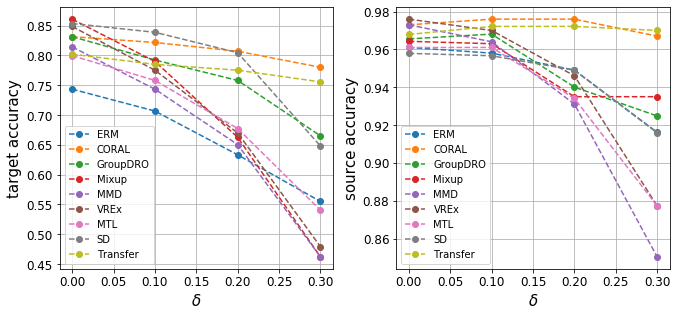}
    \caption{Measuring the transferability of various algorithms for domain generalization on PACS dataset. For the Transfer algorithm we choose $\delta = 0.3$, batch size 16, the number of ascent steps to be 30 using SGD with learning rate $0.001$. }
    \label{fig:measure_transfer_PACS}
\end{figure*}

\begin{figure}
    \centering
    \includegraphics[width=0.8\textwidth]{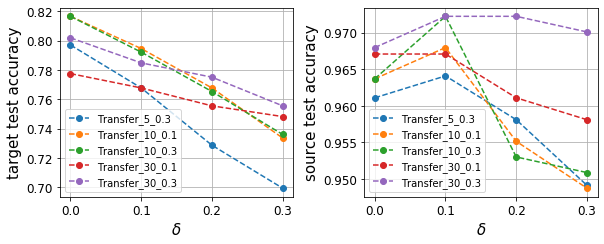}
    \caption{Evaluation of transferability of popular algorithms for domain generalization on Transfer algorithm with different hyperparameters. The dataset is PACS. One can see that if the number of inner steps is large and $\delta$ is large, then the classifier is more robust. ``\texttt{Transfer\_$d$\_$\delta$}'' means the inner loop takes $d$ steps with the radius $\d$.}
    \label{fig:transfer_alg_PACS}
\end{figure}

\begin{table}[]
\caption{Evaluation of transferability of popular algorithms for domain generalization on PACS. \textbf{algorithm}: the model that we evaluate; \textbf{$\bm{\delta}$}: the adversarial radius $\d$ we choose in Algorithm \ref{alg:measure_transfer_whole}; \textbf{max/min index}: the index of the domain with the maximal/minimal (test) classification errors (w.r.t.~0-1 loss); \textbf{max/min loss}: the largest/smallest loss among domains (including the target domain); \textbf{worst/best acc}: the smallest/largest classification test accuracies among domains (including the target domain). }
\centering
\begin{tabular}{c c c c c c c c c}
\bf algorithm & $\bm{\delta}$ & \bf max index &\bf  min index & \bf max loss & \bf min loss & \bf worst acc & \bf best acc \\
\hline \hline
\rowcolor[gray]{0.9}
ERM & 0.0 & 0 & 2 & 1.327 & 0.011 & 74.33\% & 96.11\%\\
\rowcolor[gray]{0.9}
ERM & 0.2 & 0 & 2 & 2.449 & 0.064 & 63.33\% & 94.91\%\\
\rowcolor{lightgray}
GroupDRO & 0.0 & 0 & 2 & 0.820 & 0.012 & 83.13\% & 97.60\%\\
\rowcolor{lightgray}
GroupDRO & 0.2 & 0 & 2 & 1.509 & 0.052 & 75.79\% & 95.81\%\\
\rowcolor[gray]{0.9}
SagNet & 0.0 & 0 & 2 & 0.919 & 0.002 & 77.51\% & 99.10\%\\
\rowcolor[gray]{0.9}
SagNet & 0.1 & 0 & 2 & 1.409 & 0.014 & 71.39\% & 97.01\%\\
\rowcolor[gray]{0.9}
SagNet & 0.2 & 0 & 2 & 2.002 & 0.094 & 60.64\% & 94.31\%\\
\rowcolor{lightgray}
Mixup & 0.0 & 0 & 2 & 0.471 & 0.009 & 86.06\% & 99.70\%\\
\rowcolor{lightgray}
Mixup & 0.1 & 0 & 2 & 0.681 & 0.016 & 78.97\% & 98.80\%\\
\rowcolor{lightgray}
Mixup & 0.2 & 0 & 2 & 0.974 & 0.067 & 66.26\% & 96.41\%\\
\rowcolor[gray]{0.9}
CORAL & 0.0 & 0 & 2 & 0.743 & 0.006 & 83.13\% & 97.31\%\\
\rowcolor[gray]{0.9}
CORAL & 0.2 & 0 & 2 & 0.954 & 0.008 & 80.68\% & 97.60\%\\
\rowcolor[gray]{0.9}
CORAL & 0.3 & 0 & 2 & 1.147 & 0.012 & 78.00\% & 96.71\%\\
\rowcolor{lightgray}
MMD & 0.0 & 0 & 2 & 0.776 & 0.005 & 81.42\% & 97.31\%\\
\rowcolor{lightgray}
MMD & 0.1 & 0 & 2 & 1.203 & 0.006 & 74.33\% & 96.41\%\\
\rowcolor{lightgray}
MMD & 0.2 & 0 & 2 & 1.832 & 0.066 & 65.04\% & 93.11\%\\
\rowcolor[gray]{0.9}
RSC & 0.0 & 0 & 2 & 1.089 & 0.003 & 77.75\% & 95.81\%\\
\rowcolor[gray]{0.9}
RSC & 0.1 & 0 & 2 & 2.535 & 0.129 & 63.81\% & 93.41\%\\
\rowcolor[gray]{0.9}
RSC & 0.2 & 0 & 2 & 4.732 & 0.560 & 43.52\% & 82.63\%\\
\rowcolor{lightgray}
VREx & 0.0 & 0 & 2 & 0.593 & 0.002 & 84.84\% & 97.60\%\\
\rowcolor{lightgray}
VREx & 0.1 & 0 & 2 & 0.912 & 0.009 & 77.51\% & 97.01\%\\
\rowcolor{lightgray}
VREx & 0.2 & 0 & 2 & 1.518 & 0.049 & 66.99\% & 94.61\%\\
\rowcolor[gray]{0.9}
MTL & 0.0 & 0 & 2 & 1.269 & 0.001 & 79.95\% & 96.11\%\\
\rowcolor[gray]{0.9}
MTL & 0.2 & 0 & 2 & 2.477 & 0.060 & 67.73\% & 93.41\%\\
\rowcolor{lightgray}
SD & 0.0 & 0 & 2 & 0.589 & 0.113 & 85.33\% & 98.20\%\\
\rowcolor{lightgray}
SD & 0.2 & 0 & 2 & 0.930 & 0.262 & 80.44\% & 97.60\%\\
\rowcolor{lightgray}
SD & 0.3 & 0 & 2 & 1.191 & 0.454 & 73.35\% & 96.11\%\\
\end{tabular}
\label{tab:evaluation_transfer_pacs}
\end{table}

\subsubsection{Office-Home}

We present results from Algorithm \ref{alg:measure_transfer} with information about losses and accuracies, for a wide array of algorithms in Table \ref{tab:evaluation_transfer_OfficeHome} for Office-Home. It can be seen that CORAL and SD learn more robust classifiers while other algorithms are not quite transferable: with a small decrease of source accuracy the target accuracy drops significantly.

\begin{table}[]
\caption{Evaluation of transferability of popular algorithms for domain generalization on  Office-Home. \textbf{algorithm}: the model that we evaluate; \textbf{$\bm{\delta}$}: the adversarial radius $\d$ we choose in Algorithm \ref{alg:measure_transfer_whole}; \textbf{max/min index}: the index of the domain with the maximal/minimal (test) classification errors (w.r.t.~0-1 loss); \textbf{max/min loss}: the largest/smallest loss among domains (including the target domain); \textbf{worst/best acc}: the smallest/largest classification test accuracies among domains (including the target domain). }
\centering
\begin{tabular}{c c c c c c c c c}
\bf algorithm & $\bm{\delta}$ & \bf max index &\bf  min index & \bf max loss & \bf min loss & \bf worst acc & \bf best acc \\
\hline \hline
\rowcolor[gray]{0.9}
ERM & 0.0 & 0 & 2 & 2.688 & 0.054 & 54.43\% & 88.16\%\\
\rowcolor[gray]{0.9}
ERM & 0.1 & 0 & 2 & 3.701 & 0.098 & 47.63\% & 87.37\%\\
\rowcolor{lightgray}
GroupDRO & 0.0 & 0 & 2 & 2.940 & 0.072 & 58.76\% & 88.61\%\\
\rowcolor{lightgray}
GroupDRO & 0.1 & 0 & 2 & 4.042 & 0.147 & 50.72\% & 86.81\%\\
\rowcolor[gray]{0.9}
SagNet & 0.0 & 0 & 2 & 2.030 & 0.055 & 56.08\% & 87.94\%\\
\rowcolor[gray]{0.9}
SagNet & 0.1 & 0 & 2 & 2.316 & 0.071 & 54.02\% & 88.05\%\\
\rowcolor{lightgray}
Mixup & 0.0 & 0 & 2 & 1.657 & 0.051 & 60.62\% & 90.76\%\\
\rowcolor{lightgray}
Mixup & 0.1 & 0 & 2 & 2.074 & 0.075 & 53.40\% & 90.08\%\\
\rowcolor[gray]{0.9}
CORAL & 0.0 & 0 & 2 & 1.878 & 0.043 & 59.79\% & 89.06\%\\
\rowcolor[gray]{0.9}
CORAL & 0.1 & 0 & 2 & 2.111 & 0.053 & 56.70\% & 88.73\%\\
\rowcolor{lightgray}
MMD & 0.0 & 0 & 2 & 2.201 & 0.037 & 56.49\% & 89.74\%\\
\rowcolor{lightgray}
MMD & 0.1 & 0 & 2 & 2.860 & 0.060 & 50.93\% & 88.16\%\\
\rowcolor[gray]{0.9}
VREx & 0.0 & 0 & 2 & 1.926 & 0.207 & 55.46\% & 85.46\%\\
\rowcolor[gray]{0.9}
VREx & 0.1 & 0 & 2 & 2.414 & 0.245 & 49.28\% & 84.89\%\\
\rowcolor{lightgray}
MTL & 0.0 & 0 & 2 & 2.736 & 0.047 & 52.58\% & 87.71\%\\
\rowcolor{lightgray}
MTL & 0.1 & 0 & 2 & 3.921 & 0.109 & 42.06\% & 85.12\%\\
\rowcolor[gray]{0.9}
SD & 0.0 & 0 & 2 & 1.535 & 0.047 & 64.33\% & 91.54\%\\
\rowcolor[gray]{0.9}
SD & 0.1 & 0 & 2 & 1.717 & 0.049 & 63.51\% & 92.33\%\\
\hline
\end{tabular}
\label{tab:evaluation_transfer_OfficeHome}
\end{table}

\end{document}